\documentclass{article}

\usepackage[final]{neurips_2023}

\usepackage[utf8]{inputenc} 
\usepackage[T1]{fontenc}    
\usepackage{hyperref}       
\usepackage{url}            
\usepackage{booktabs}       
\usepackage{amsfonts}       
\usepackage{nicefrac}       
\usepackage{microtype}      
\usepackage{xcolor}         

\usepackage[ruled, linesnumbered]{algorithm2e}
\usepackage{algorithmic}

\newcommand{\titre}{Small Total-Cost Constraints in Contextual Bandits with Knapsacks, with Application to Fairness}
\title{\titre}

\author{Evgenii Chzhen \qquad Christophe Giraud \\
Universit{\'e} Paris-Saclay, CNRS, Laboratoire de mathématiques d'Orsay, 91405, Orsay, France \\
\texttt{\{evgenii.chzhen,\,\,christophe.giraud\}@universite-paris-saclay.fr}
\And
Zhen Li \\
BNP Paribas Corporate and Institutional Banking, 20 boulevard des Italiens, 75009 Paris, France \\
\texttt{zhen.li@bnpparibas.com}
\And
Gilles Stoltz \\
Universit{\'e} Paris-Saclay, CNRS, Laboratoire de mathématiques d'Orsay, 91405, Orsay, France \\
HEC Paris, 78351 Jouy-en-Josas, France \\
\texttt{gilles.stoltz@universite-paris-saclay.fr,\,\,stoltz@hec.fr}
}

\usepackage{amsmath,amssymb,amsthm,mathtools}
\usepackage{dsfont}
\usepackage[textwidth=2.0cm, textsize=tiny]{todonotes} 

\newtheorem{assumption}{Assumption}
\newtheorem{theorem}{Theorem}
\newtheorem{proposition}{Proposition}
\newtheorem{lemma}{Lemma}
\newtheorem{corollary}{Corollary}
\newtheorem{example}{Example}
\newtheorem{notation}{Notation}

\newcommand{\eqdef}{\stackrel{\mbox{\scriptsize \rm def}}{=}}
\newcommand{\ind}[1]{\mathds{1}_{\{ {#1} \}}}
\renewcommand{\leq}{\leqslant}
\renewcommand{\geq}{\geqslant}
\renewcommand{\phi}{\varphi}
\renewcommand{\epsilon}{\varepsilon}

\newcommand{\ol}{\overline}

\newcommand{\argmax}{\mathop{\mathrm{arg\,max}}}
\newcommand{\ccost}{c_{\mbox{\tiny spd}}}

\newcommand{\total}{\mbox{\tiny total}}
\newcommand{\gr}{\mathop{\mathrm{gr}}}
\newcommand{\clip}{\mathop{\mathrm{clip}}}

\newcommand{\anull}{a_{\mbox{\tiny \rm null}}}
\newcommand{\opt}{\mathrm{\textsc{opt}}}

\newcommand{\pif}{\pi^{\mbox{\tiny \rm feas}}}
\newcommand{\bpif}{\bpi^{\mbox{\tiny \rm feas}}}
\newcommand{\bpinull}{\bpi^{\mbox{\tiny \rm null}}}

\newcommand{\R}{\mathbb{R}}
\newcommand{\E}{\mathbb{E}}

\newcommand{\cA}{\mathcal{A}}
\newcommand{\cX}{\mathcal{X}}
\newcommand{\cP}{\mathcal{P}}
\newcommand{\cF}{\mathcal{F}}
\newcommand{\cG}{\mathcal{G}}

\newcommand{\dev}{\Upsilon}

\newcommand{\bone}{\boldsymbol{1}}
\newcommand{\bzero}{\boldsymbol{0}}
\newcommand{\br}{\boldsymbol{r}}
\newcommand{\bc}{\boldsymbol{c}}
\newcommand{\bC}{\boldsymbol{C}}
\newcommand{\bx}{\boldsymbol{x}}

\newcommand{\bX}{\boldsymbol{X}}
\newcommand{\bv}{\boldsymbol{v}}
\newcommand{\bB}{\boldsymbol{B}}

\newcommand{\bm}{\boldsymbol{m}}
\newcommand{\bsf}{\bm}
\newcommand{\bpi}{\boldsymbol{\pi}}
\newcommand{\bmu}{\boldsymbol{\mu}}
\newcommand{\btheta}{\boldsymbol{\theta}}
\newcommand{\blambda}{\boldsymbol{\lambda}}

\newcommand{\e}{\mathrm{e}}

\renewcommand{\hat}{\widehat}
\newcommand{\hr}{\hat{r}}
\newcommand{\hbc}{\hat{\bc}}
\newcommand{\chr}{\hat{r}^{\mbox{\tiny \, ucb}}}
\newcommand{\chbc}{\hat{\bc}^{\mbox{\tiny \, lcb}}}
\newcommand{\hnu}{\hat{\nu}}

\newcommand{\Dc}{\Delta\hat{\bc}}
\newcommand{\oD}{\overline{\Delta}}

\newcommand{\cEHaz}{\mathcal{E}_{\mbox{\tiny H-Az}}}
\newcommand{\cEbeta}{\mathcal{E}_{\beta}}
\newcommand{\cEBer}{\mathcal{E}_{\mbox{\tiny Bern-$\bc$}}}
\newcommand{\cEBerr}{\mathcal{E}_{\mbox{\tiny Bern-$r$}}}
\newcommand{\cEmeta}{\mathcal{E}_{\mbox{\tiny meta}}}
\newcommand{\cEHazf}{\mathcal{E}_{\mbox{\tiny H-Az1}}}
\newcommand{\cEHazs}{\mathcal{E}_{\mbox{\tiny H-Az2}}}
\newcommand{\cEtvd}{\mathcal{E}_{\mbox{\tiny TVD}}}
\newcommand{\cEall}{\mathcal{E}_{\mbox{\tiny all}}}

\newcommand{\bphi}{\boldsymbol{\phi}}
\newcommand{\transp}{\!\mbox{\rm \tiny T}\,}

\newcommand{\nlog}{\mathop{\mathrm{ilog}}}
\newcommand{\ilog}{\nlog}
\newcommand{\tO}{\widetilde{\mathcal{O}}}
\newcommand{\tB}{\widetilde{\bB}}
\newcommand{\pa}[1]{\left(#1\right)}

\newcommand{\Leb}{\mathfrak{m}}
\renewcommand{\d}{\mathrm{d}}

\usepackage[most]{tcolorbox}
\newtcolorbox{nbox}[1][]{
  enhanced,
  fonttitle=\scshape,
  #1
}

\usepackage{ifthen}

\newcommand{\myscale}[1]{\scalebox{0.96}{#1}}

\begin{document}

\maketitle

\begin{abstract}
We consider contextual bandit problems with knapsacks [CBwK], a problem where
at each round, a scalar reward is obtained and vector-valued costs are suffered.
The learner aims to maximize the cumulative rewards while ensuring that the cumulative costs
are lower than some predetermined cost constraints.
We assume that contexts come from a continuous set, that costs can be signed,
and that the expected reward and cost functions, while unknown, may be uniformly estimated---a typical
assumption in the literature.
In this setting, total cost constraints had so far to be at least of order $T^{3/4}$, where
$T$ is the number of rounds, and were even typically assumed to depend linearly on $T$.
We are however motivated to use CBwK to impose a fairness constraint
of equalized average costs between groups: the budget associated with the corresponding
cost constraints should be as close as possible to the natural
deviations, of order $\sqrt{T}$. To that end, we introduce a
dual strategy based on projected-gradient-descent updates, that is able to deal with total-cost constraints of the order
of $\sqrt{T}$ up to poly-logarithmic terms. This strategy is more direct and simpler than
existing strategies in the literature. It relies on a careful, adaptive, tuning of the step size.
\end{abstract}

\section{Setting, literature review, and main contributions}
\label{sec:main-ctr}

We consider contextual bandits with knapsacks [CBwK],
a setting where at each round $t \geq 1$, the learner, after observing some context $\bx_t \in \cX$,
where $\cX \subseteq \R^{n}$, picks an action $a_t \in \cA$ in a  finite set $\cA$.
We do not impose the existence of a null-cost action.
Contexts are independently drawn according to a distribution $\nu$.
The learner may pick $a_t$ at random according to a probability distribution, denoted by
$\bpi_t(\bx_t) = \bigl( \pi_{t,a}(\bx_t) \bigr)_{a \in \cA}$
for consistency with the notion of policy defined later in Section~\ref{sec:descr-ctd}.
The action $a_t$ played leads to some scalar reward $r_t \in [0,1]$
and some signed vector-valued cost $\bc_t \in [-1,1]^d$. Actually, $r_t$ and $\bc_t$ are generated
independently at random
in a way such that the conditional expectations of $r_t$ and $\bc_t$ given the past, $\bx_t$, and $a_t$,
equal $r(\bx_t,a_t)$ and $\bc(\bx_t,a_t)$, respectively. We denoted here by
$r : \cX \times \cA \to [0, 1]$ and $\bc = (c_1,\ldots,c_d) : \cX \times \cA  \to [-1, 1]^d$
the unknown expected-reward and expected-cost functions.
The modeling and the estimation of $r$ and $\bc$
are discussed in Section~\ref{sec:param-model}.
The aim of the learner is to maximize the sum of rewards (or equivalently, to minimize
the regret defined in Section~\ref{sec:descr-ctd}) while controlling cumulative costs.
More precisely, we denote by $\bB = (B_1,\ldots,B_d) \in [0,1]^d$ the normalized (i.e., average per-instance)
cost constraints,
which may depend on the coordinates. The number $T$ of rounds is known,
and the learner must play in a way such that $\bc_1 + \ldots + \bc_T \leq T \bB$.
The example of Section~\ref{sec:motiv-ex} illustrates how
this setting allows for controlling costs in absolute values.
The setting described is summarized in Box~A.

\begin{figure}[t]
\center \myscale{\begin{nbox}[title={Box A: Contextual bandits with knapsacks [CBwK]}]
\textbf{Known parameters:}
finite action set $\cA$; context set $\cX \subseteq \R^{n}$;
number $T$ of rounds; vector of average cost constraints $\bB \in [0,1]^d$ \medskip

\textbf{Unknown parameters:} context distribution $\nu$ on $\cX$;
scalar expected-reward function $r : \cX \times \cA \to [0, 1]$ and
vector-valued expected-cost function $\bc : \cX \times \cA \to [-1, 1]^d$,
both enjoying some modeling allowing for estimation, see Section~\ref{sec:param-model} \medskip

\textbf{For rounds} $t = 1, 2, 3, \ldots, T$:
\begin{enumerate}
  \item Context $\bx_t \sim \nu$ is drawn independently of the past;
  \item Learner observes $\bx_t$ and picks an action $a_t \in \cA$,
  possibly at random according to a distribution $\bpi_t(\bx_t) = \bigl( \pi_{t,a}(\bx_t) \bigr)_{a \in \cA}$;
    \item Learner gets a reward $r_t \in [0,1]$ with conditional expectation $r(\bx_t,a_t)$
        and suffers constraints $\bc_t$ with conditional expectations $\bc(\bx_t,a_t)$. \smallskip
\end{enumerate}

\textbf{Goal:} \ \ Maximize \ \ $\displaystyle{\sum_{t \leq T} r_t}$ \quad while ensuring
\ \ $\displaystyle{\sum_{t \leq T} \bc_t \leq T \bB}$ \vspace{-.15cm}
\end{nbox}}
\vspace{-.5cm}
\end{figure}

\textbf{Overview of the literature review.}
Contextual bandits with knapsacks [CBwK]
is a setting that is a combination of the problems
of contextual bandits (where only rewards are obtained, and no costs:
see, among others, \citealp[Chapter~18]{BA20} for a survey
of this rich area) and of bandits with knapsacks
(without contexts, as initially introduced by \citealp{Badanidiyuru2013BanditsWK,Badanidiyuru2018BanditsWK}).
The first approaches to CBwK (\citealp{Badanidiyuru2014ResourcefulCB,Agrawal2016AnEA})
relied on no specific modeling of the rewards and costs, and made the problem tractable by
using as a benchmark a finite set of so-called static policies (but picking this set is uneasy, as
noted by~\citealp{Agrawal2016LinearCB}).
Virtually all subsequent approaches to CBwK thus introduced structural assumptions in
one way or the other. The simplest modelings are linear dependencies of expected rewards and costs on the contexts
(\citealp{Agrawal2016LinearCB}) and a logistic conversion model (\citealp{CBwK-LP-2022}).
The problem of CBwK attracted attention in recent past: we discuss and contrast the contributions
by \citet{Stanford,SSF22,HZWXZ22,Ai22,CBwK-LP-2022} after stating our main contributions.

\textbf{Overview of our main contributions.}
Each contribution calls or allows for the next one. \vspace{.1cm} \\
\textbf{1.}\ This article revisits the CBwK approach by~\citet{Stanford}
to a problem of fair spending of resources between groups while maximizing some total utility. Fairness constraints require to deal with signed costs and with possibly small total-cost constraints (typically close to $T^{1/2}$), while having no null-cost action at disposal. \vspace{.1cm} \\
\textbf{2.}\ We therefore provide a new CBwK strategy dealing with these three issues, the most significant being handling cost constraint $T \bB$ as small as $T^{1/2+\varepsilon}$, breaking the state-of-the-art $T^{3/4}$ barrier for CBwK with continuous contexts, while preserving attractive computational efficiency.  \vspace{.1cm} \\
\textbf{3.}\ This new strategy is a direct, simple, explicit, dual strategy, relying on projected-gradient-descent updates (instead of using existing general algorithms as subroutines). We may perform an ad hoc analysis. The latter leads to refined
regret bounds, in terms of the norms of some optimal dual variables.
We also discuss the optimality of the results achieved by offering a proof scheme for
problem-dependent (not only minimax) regret lower bounds.

\subsection{Detailed literature review} \vspace{-.25cm}

We now contrast our main contributions to the existing literature.

\textbf{CBwK and fairness: missing tools.}
In the setting of \citet{Stanford}, there is a budget constraint on the total spendings
on top of the wish of sharing out these spendings among groups (possibly favoring to some maximal extent
some groups that are more in need). However,
\citet{Stanford} incorporated the fairness constraints in the reward function (through some Lagrangian penalty)
instead of doing so in the vector-cost function, which does not seem the most natural way to proceed.
One reason is that (see a discussion below)
the total cost constraints $T \bB$ were so far typically assumed to be linear, which is undesirable in fairness
applications, where one wishes that $T \bB$ is sublinear---and actually, is sometimes
as close as possible to the natural $\sqrt{T}$ deviations
suffered even in a perfectly fair random allocation scheme.
Also, there is no null-cost action and costs are signed---two twists to the typical
settings of CBwK that only~\citet{SSF22} and our approach deal with, to the best of our knowledge.
We describe our alternative modeling of this fairness problem in Section~\ref{sec:motiv-ex}.

\textbf{Orders of magnitude for total-cost constraints $T \bB$.}
The literature so far mostly considered linear total-cost constraints $T \bB$
(see, e.g., \citealp{HZWXZ22} or \citealp{Ai22} among recent references),
with two series of exceptions: (i) the primal strategy by
\citet{CBwK-LP-2022} handling total-cost contraints
of the order of $\sqrt{T}$ up to poly-logarithmic terms but
critically assuming the finiteness of the context set $\cX$;
and (ii) the two-stage dual strategies of
\citet{Agrawal2016LinearCB,HZWXZ22} handling $T^{3/4}$
total-cost constraints. These two-stage strategies
use $\sqrt{T}$ preliminary rounds to learn some key hyperparameter $Z$
and then
transform strategies that work with linear total-cost constraints
into strategies that may handle total-cost constraints larger than $T^{3/4}$;
see discussions after Lemma~\ref{lm:R}.
In contrast, we provide a general theory of small cost constraints for continuous
context sets, while obtaining similar regret bounds as the literature does:
for some components $j$, we may have $T B_j \ll T^{3/4}$ (actually, any rate $T B_j \gg T^{1/2+\epsilon}$
is suitable for a sublinear regret), while for other components $k$, the total-cost constraints $T B_k$ may be
linearly large.

\textbf{Typical CBwK strategies: primal approaches suffer from severe limitations.}
CBwK is typically solved through dual approaches, as the latter basically rely
on learning a finite-dimensional parameter given by the optimal dual variables $\blambda^\star$
(see the beginning of Section~\ref{sec:dual}), while primal approaches
rely on learning the distribution $\nu$ of the contexts in $\mathbb{L}^1$--distance.
This may be achieved in some cases, like finiteness of the context set $\cX$
(\citealp{CBwK-LP-2022,Ai22}) or, at the cost of degraded regret bounds
and under additional assumptions on $\nu$ when resorting to
$\mathbb{L}^1$--density-estimation techniques (\citealp{Ai22}). On top of these strong limitations,
such primal approaches are also typically computationally inefficient
as they require the computation of expectations over estimates of $\nu$.

\textbf{Typical CBwK strategies: dual approaches are modular and somewhat indirect.}
Typical dual approaches in CBwK take two general forms.
One, illustrated in \citet{SSF22} (extending the non-contextual approach by~\citealp{Immorlica19}),
takes a game-theoretic approach by identifying the primal-dual formulation~\eqref{eq:lambdastar}
as some minimax equilibrium, which may be learned by separate regret minimization of a learner
picking actions $a_t$ and an opponent picking dual variables~$\blambda_t$.
The second approach (\citealp{Agrawal2016LinearCB,HZWXZ22})
learns more directly the~$\blambda_t$ via some online convex optimization
algorithm fed with the suffered costs $\bc_t$; this second approach however requires,
at the stage of picking $a_t$, a suitable bound $Z$ on the norms of the~$\blambda_t$. \vspace{.15cm} \\
The dual strategies discussed above are modular and indirect as they all consist
of using as building blocks some general-purpose strategies. We
rather propose a more direct dual approach, tailored to our needs.
(We also believe that it is simpler and more elegant.)
Our strategy picks the arm $a_t$ that maximizes the Lagrangian penalization of rewards
by costs through the current $\blambda_{t-1}$ (see Step~2 in Box~B), as also proposed by
\citet{Agrawal2016LinearCB} (while \citealp{HZWXZ22} add some randomization to this step),
and then, performs some direct, explicit, projected-gradient-descent update on
$\blambda_{t-1}$ to obtain $\blambda_t$, with step size $\gamma$. We carefully
and explicitly tune $\gamma$ (in a sequential fashion, via regimes) to achieve our goals.
Such fine-tuning is more difficult to perform with approaches relying on the black-box use
of subroutines given by existing general-purpose strategies.

\textbf{Regret (upper and lower) bounds typically proposed in the CBwK literature.}
A final advantage of our more explicit strategy is that we master each piece of its analysis:
while not needing a null-cost action, we provide refined regret bounds that go
beyond the typical $(\opt(r,\bc,\bB)/\min \bB) \sqrt{T}$ bound
offered by the literature so far (see, among others, \citealp{Agrawal2016LinearCB,CBwK-LP-2022,HZWXZ22}),
where $\opt(r,\bc,\bB)$ denotes the expected reward achieved by the optimal static policy
(see Section~\ref{sec:descr-ctd} for definitions).
Namely, our bounds are of the order of $\Arrowvert \blambda^\star \Arrowvert \sqrt{T}$,
where $\blambda^\star$ is the optimal dual variable; we relate the norm of the latter
to quantities of the form $\bigl(\opt(r,\bc,\bB) - \opt(r,\bc,\alpha\bB) \bigr)/ \bigl( (1-\alpha) \min \bB \bigr)$,
for some $\alpha < 1$. Again, we may do so without the existence of a null-cost action,
but when one such action is available, we may take $\alpha = 0$ in the interpretation of the bound.
We also offer a proof scheme for lower bounds
in Section~\ref{sec:opti-lim} and explain why $\Arrowvert \blambda^\star \Arrowvert \sqrt{T}$ appears as the correct target.
We compare therein our problem-dependent lower bound-approach to the minimax ones of \citet{HZWXZ22} and \citet{SSF22}.

\subsection{Outline} \vspace{-.25cm}

In Section~\ref{sec:descr-ctd},
we further describe the problem of CBwK (by defining the regret and recalling how $r$ and $\bc$ may
be estimated) and state our motivating example of fairness.
Section~\ref{sec:dual} is devoted to our new dual strategy,
which we analyze first with a fixed step size $\gamma$, for which
we move next to an adaptive version, and whose bounds we finally discuss.
Section~\ref{sec:opti-lim} offers a proof scheme for regret lower bounds and
lists some limitations of our approach, mostly relative to optimality.

\section{Further description of the setting, and statement of our motivating example}
\label{sec:descr-ctd}

We define a static policy as a mapping $\bpi: \cX \to \cP(\cA)$, where
$\cP(\cA)$ denotes the set of probability distributions over $\cA$.
We denote by $\bpi = (\pi_a)_{a \in \cA}$ the components of $\bpi$.
We let $\E_{\bX \sim \nu}$ indicate that the expectation is taken over the random variable $\bX$
with distribution $\nu$. In the sequel, the inequalities $\leq$ or $\geq$
between vectors will mean pointwise satisfaction of the corresponding inequalities.

\begin{assumption}
\label{ass:feasible}
The contextual bandit problem with knapsacks $(r,\bc,\bB',\nu)$
is feasible if
there exists a stationary policy $\bpi$ such that \vspace{-.6cm}

\[
\qquad \quad \E_{\bX \sim \nu} \! \left[ \sum_{a \in \cA} \bc(\bX,a)\,\pi_a(\bX) \right] \leq \bB'\,.
\]
\end{assumption}

We denote the average cost constraints by $\bB'$ in the assumption above
as in Section~\ref{sec:dual}, we will actually require feasibility for
average cost constraints $\bB' < \bB$, not just for $\bB$.

In a feasible problem $(r,\bc,\bB',\nu)$,
the optimal static policy $\bpi^\star$ is defined as the policy $\bpi : \cX \to \cP(\cA)$ achieving
\begin{align}
\label{eq:opt-primal}
\ \hspace{-.5cm} \opt(r,\bc,\bB') = \sup_{\bpi} \Biggl\{
\E_{\bX \sim \nu} \!\left[ \sum_{a \in \cA} r(\bX,a) \, \pi_a(\bX) \right] :
\ \ \E_{\bX \sim \nu} \!\left[ \sum_{a \in \cA} \bc(\bX,a) \, \pi_a(\bX) \right]
\leq \bB' \Biggr\}.
\end{align}
Maximizing the sum of the $r_t$ amounts to minimizing the regret
$R_T = T\,\opt(r,\bc,\bB') - \displaystyle{\sum_{t=1}^T r_t}\,$. \vspace{-.2cm}

\subsection{Motivating example: fairness---equalized average costs between groups}
\label{sec:motiv-ex}

This example is inspired from~\citet{Stanford} (who did not provide
a strategy to minimize regret), and features
a total budget constraint $T B_{\total}$ together with fairness constraints on
how the budget is spent among finitely many subgroups, whose
set is denoted by $\cG$.
We assume that the contexts $\bx_t$ include the group index $\gr(\bx_t) \in \cG$,
which means, in our setting, that the learner accesses this possibly sensitive attribute
before making the predictions (the so-called awareness framework in the fairness literature).
Each context--action pair $(\bx,a) \in \cX \times \cA$, on top of leading
to some expected reward $r(\bx,a)$, also corresponds to some spendings $\ccost(\bx,a)$.
We recall that we denoted that $r_t$ and $c_t$ (here, this is a scalar for now)
the individual payoffs and costs obtained at round $t$; they have conditional
expectations $r(\bx_t,a_t)$ and $\ccost(\bx_t,a_t)$.
We want to trade off the differences in average spendings among groups
with the total reward achieved: larger differences lead to larger total rewards
but generate feelings of inequity or of public money not optimally used. (\citet{Stanford}
consider a situation where unfavored groups should get slightly more spendings
than favored groups.)

Formally, we denote by $\gr(\bx) \in \cG$ the group to which a given context $\bx$ belongs
and introduce a tolerance factor $\tau$, which may depend on $T$.
We issue a simplifying assumption: while the distribution $\nu$ of the contexts is complex and is unknown, the respective proportions
$\gamma_g = \smash{\nu\bigl\{ \bx : \gr(\bx) = g \bigr\}}$ of the sub-groups may be known.\footnote{This
simplification is not unheard of in the fairness literature; see, for instance, \citet{Appr}.
It amounts to having a reasonable knowledge of the breakdown of the population into subgroups.
We use this assumption to have an easy rewriting of the fairness constraints
in the setting of Box~A. The knowledge of the $\gamma_g$ is
key for determining the average-constraint vector $\bB$.}
The total-budget and fairness constraints then read:
\[
\sum_{t=1}^T c_t \leq T B_{\total}
\qquad
\mbox{and} \qquad
\forall g \in \cG, \qquad \left| \frac{1}{T \gamma_g} \sum_{t=1}^T c_t \,\ind{\gr(\bx_t) = g}
- \frac{1}{T} \sum_{t=1}^T c_t \right| \leq \tau\,,
\]
which corresponds, in the setting of Box~A,
to the following vector-valued expected-cost function $\bc$, with $d = 1 + 2 |\cG|$ components
and with values in $[-1,1]^{1 + 2 |\cG|}$: \hfill $\bc(\bx,a) =$ \hfill ~
\[
\smash{\Bigl(} \ccost(\bx,a), \,\, \bigl( \ccost(\bx,a) \ind{\gr(\bx)=g} - \gamma_g \ccost(\bx,a), \,\,
\gamma_g \ccost(\bx,a) - \ccost(\bx,a) \ind{\gr(\bx)=g} \bigr)_{g \in \cG} \smash{\Bigr)}
\]
as well as to the average-constraint vector $\bB = \smash{\Bigl( B_{\total}, \,\,
\bigl( \gamma_g \, \tau, \, \gamma_g \, \tau \bigr)_{g \in \cG} \Bigr)}$.

The regimes we have in mind, and that correspond to the public-policy issues reported by~\citet{Stanford},
are that the per-instance budget $B_{\total}$ is larger than a positive constant,
i.e., a constant fraction of the $T$ individuals may benefit from some costly action(s),
while the fairness threshold $\tau$ must be small, and even, as small as possible.
Because of central-limit-theorem fluctuations, a minimal value of the order of $1/\sqrt{T}$, or slightly larger (up to
logarithmic terms, say),
has to be considered for $\tau$. The salient point is that the components
of $\bB$ may therefore be of different orders of magnitude, some of them being as small as $1/\sqrt{T}$,
up to logarithmic terms.
More details on this example and numerical simulations about it are provided
in Section~\ref{sec:brief-num} and Appendix~\ref{sec:num}.

\subsection{Modelings for, and estimation of, expected-reward and expected-cost functions}
\label{sec:param-model}

As is common in the literature (see~\citealp{HZWXZ22} and \citealp{SSF22},
who use the terminology of regression oracles),
we sequentially estimate the functions $r$ and $\bc$ and assume that we may do
so in some uniform way, e.g., because of the underlying structure assumed.
Note that the estimation procedure of Assumption~\ref{ass:est} below
does not force any specific choice of actions, it is able to exploit actions $a_t$
picked by the main strategy (this is what the ``otherwise'' is related to therein).
We denote by $\hr_t : \cX \times \cA \to [0,1]$ and $\hbc_t : \cX \times \cA \to [-1,1]^d$ the
estimations built based on $(\bx_s, a_s, r_s, \bc_s)_{s \leq t}$, for $t \geq 1$.
We also assume that initial estimations $\hr_0$ and $\hbc_0$, based on no data, are available.
We denote by $\bone$ the column-vector $(1,\ldots,1)$.

\begin{assumption}
\label{ass:est}
There exists an estimation procedure such that for all individual sequences of contexts $\bx_1,\bx_2,\ldots$
and of actions $a_1,a_2,\ldots$ played otherwise, there exist \emph{known} error functions $\epsilon_t : \cX \times \cA \times (0,1] \to [0,1]$,
relying each on the pairs $(\bx_s,a_s)_{s \leq t}$, where $t = 0,1,2,\ldots$,
ensuring that for all $\delta \in (0,1]$, with probability at least $1-\delta$,
\begin{align*}
\quad\qquad\qquad \forall 0 \leq t \leq T, \ \ \forall \bx \in \cX, \ \ \forall a \in \cA,
\qquad\qquad\qquad & \bigl| \hr_t(\bx,a) - r(\bx,a) \bigr| \leq \epsilon_t(\bx,a,\delta) \\
\mbox{and} \qquad &
\bigl| \hbc_t(\bx,a) - \bc(\bx,a) \bigr| \leq \epsilon_t(\bx,a,\delta) \, \bone\,, \\[-.5cm]
\end{align*}
where we assume in addition that~~~$\beta_{T,\delta} \eqdef \displaystyle{\sum_{t=1}^T \epsilon_{t-1}(\bx_t,a_t,\delta)
= O\!\left( \sqrt{T} \ln \frac{T}{\delta} \right).}$
\end{assumption}

This assumption is satisfied at least for the two modelings discussed below,
which are both of the form: for known transfer functions $\bphi$, link functions $\Phi$,
and normalization function $\eta$,
for some unknown finite-dimensional parameters $\bmu_\star,\,\btheta_{\star,1},\ldots,\btheta_{\star,d}$,
\begin{align*}
\forall \bx \in \cX, \ \ \forall a \in \cA, & &
r(\bx,a) = \eta_r(a,\bx) \,\, \Phi_r \bigl( \bphi_r(\bx,a)^{\transp} \bmu_\star \bigr)\,, \\
& \qquad \mbox{and for all } 1 \leq i \leq d, \qquad &
c_i(\bx,a) = \eta_{c,i}(a,\bx) \,\, \Phi_{c,i} \bigl( \bphi_{c,i}(\bx,a)^{\transp} \btheta_{\star,i} \bigr)\,.
\end{align*}
See also the exposition by~\citet[Section~3.3]{HZWXZ22}.

Based on Assumption~\ref{ass:est} and given the ranges
$[0,1]$ for rewards and $[-1,1]^d$ for costs, we may now define
the following (clipped) upper- and lower-confidence bounds : given a confidence
level $1-\delta$ where $\delta \in (0,1]$,
for all $t \leq T$,
for all $\bx \in \cX$ and $a \in \cA$,
\begin{equation}
\label{eq:defclip}
\chr_{\delta,t}(\bx,a) \eqdef \clip \bigl[ \hr_t(\bx,a) + \epsilon_t(\bx,a,\delta) \bigr]_0^1
\ \ \ \mbox{and} \ \ \
\chbc_{\delta,t}(\bx,a) \eqdef \clip \bigl[ \hbc_t(\bx,a) - \epsilon_t(\bx,a,\delta) \bone \bigr]_{-1}^1\,,
\end{equation}
where we define clipping by $\clip[x]_\ell^u = \min \bigl\{ \max\{x,\ell\}, \, u \bigr\}$
for a scalar value $x \in \R$ and lower and upper bounds $\ell$ and $u$,
and apply clipping component-wise to vectors.
Under Assumption~\ref{ass:est}, we have that
for all $\delta \in (0,1]$, with probability at least $1-\delta$,
\begin{equation}
\label{eq:csqclip}
\begin{split}
\forall 0 \leq t \leq T, \ \ \forall \bx \in \cX, \ \ \forall a \in \cA,
\qquad\qquad & r(\bx,a) \leq \chr_{\delta,t}(\bx,a) \leq r(\bx,a) + 2 \epsilon_t(\bx,a,\delta) \\
\mbox{and} \qquad & \bc(\bx,a) - 2 \epsilon_t(\bx,a,\delta) \bone \leq \chbc_{\delta,t}(\bx) \leq \bc(\bx,a) \,.
\end{split}
\end{equation}
Doing so, we have optimistic estimates of rewards (they are larger than the actual expected
rewards) and of costs (they are smaller than the actual expected costs) for all actions $a$, while for
actions $a_{t+1}$ played, and only these, we will also use the control from the other side,
which will lead to manageable sums of $\epsilon_t(\bx_{t+1},a_{t+1},\delta)$,
given the control on $\beta_{T,\delta}$ by Assumption~\ref{ass:est}.

\paragraph{Modeling 1: Linear modeling.}
\citet{Agrawal2016LinearCB} (see also \citealp[Appendix~E]{CBwK-LP-2022})
consider the case of a linear modeling where $\eta_r = \eta_1 = \ldots = \eta_d \equiv 1$ and
$\Phi$ is the identity (together with assumptions on the range of $\bphi$ and on the norms of
$\bmu_\star$ and the $\btheta_{\star,i}$).
The LinUCB strategy by \citet{AbbasiYadkori2011ImprovedAF} may be slightly adapted (to take care of the existence of
a transfer function $\bphi$ depending on the actions taken) to show that Assumption~\ref{ass:est} holds;
see \citet[Appendix~E, Lemma~3]{CBwK-LP-2022}.

\paragraph{Modeling 2: Conversion model based on logistic bandits.}
\citet[Sections 1-5]{CBwK-LP-2022} discuss the case where
$\Phi(x) = 1/(1+\e^{-x})$ for rewards and costs.
The Logistic-UCB1 algorithm of \citet{Faury2020ImprovedOA}
(see also an earlier version by~\citealp{Filippi2010GLM})
may be slightly adapted (again, because of $\bphi$) to show that Assumption~\ref{ass:est} holds;
see \citet[Lemma~1 in Appendix~B, as well as Appendix~C]{CBwK-LP-2022}.

\section{New, direct, dual strategy for CBwK}
\label{sec:dual}

Our strategy heavily relies on the following equalities between the primal and the dual values
of the problem considered. The underlying justifications are typically omitted
in the literature (see, e.g., \citealp[end of Section~2]{SSF22}); we detail them
carefully in Appendix~\ref{app:dual-gap}. We introduce dual variables $\blambda \in [0,+\infty)^d$,
denote by $\langle\,\cdot\,,\,\cdot\,\rangle$ the inner product in $\R^d$, and we have,
provided that the problem $(r,\bc,\bB',\nu)$ is feasible for some $\bB' < \bB$:
\newcommand{\inegduales}[2]{\begin{align}
\nonumber
\opt(r,\bc,\bB)
& = \sup_{\bpi : \cX \to \cP(\cA)} \ \inf_{\blambda \geq \bzero} \
\E_{\bX \sim \nu} \!\left[ \sum_{a \in \cA} r(\bX,a)\,\pi_a(\bX) +
\left\langle \blambda, \, \bB - \sum_{a \in \cA} \bc(\bX,a)\,\pi_a(\bX) \right\rangle \right] \\
\nonumber
& \stackrel{#2}{=} \min_{\blambda \geq \bzero} \ \sup_{\bpi : \cX \to \cP(\cA)} \
\E_{\bX \sim \nu} \!\left[ \sum_{a \in \cA} \pi_a(\bX) \Bigl( r(\bX,a)  -
\bigl\langle \bc(\bX,a) - \bB,\,\blambda \bigr\rangle \Bigr) \right] \\
{#1}
& = \min_{\blambda \geq \bzero} \ \E_{\bX \sim \nu} \biggl[ \max_{a \in \cA} \Bigl\{ r(\bX,a)  -
\bigl\langle \bc(\bX,a) - \bB, \,\blambda \bigr\rangle \Bigr\} \biggr].
\end{align}}
{\inegduales{\label{eq:lambdastar}}{}}
We denote by $\blambda^\star_{\bB}$ a vector $\blambda \geq \bzero$ achieving the minimum
in the final equality. Virtually all previous contributions to (contextual) bandits with knapsacks learned
the dual optimal $\blambda^\star$ in some way; see the discussion in Main contribution {\#3} in Section~\ref{sec:main-ctr}.
Our strategy is stated in Box~B.
A high-level description is that it
replaces (Step~2) the ideal dual problem above by an empirical version, with estimates of $\br$ and $\bC$,
with the expectation over $\bX \sim \nu$ replaced by a point evaluation at $\bx_t$,
and that it sequentially chooses some $\blambda_t$ based on projected-gradient-descent steps (Step~4
uses the subgradient of the function in $\blambda$ defined in Step~2).
The gradient-descent steps are performed based on fixed step sizes $\gamma$ in Section~\ref{sec:fixed-gamma},
and Section~\ref{sec:adaptive-gamma} will explain how to sequentially set the step sizes based on data.
We also take some margin $b \bone$ on~$\bB$.
\begin{figure}[h]
\center \myscale{
\begin{nbox}[title={Box B: Projected gradient descent for CBwK with fixed step size}]
\textbf{Inputs:} step size $\gamma > 0$; number of rounds $T$;
confidence level $1-\delta$;
margin $b$ on the average constraints;
estimation procedure and error functions $\epsilon_t$
of Assumption~\ref{ass:est}; optimistic estimates~\eqref{eq:defclip}
\medskip

\textbf{Initialization:} $\blambda_0 = \bzero$;
initial estimates $\chr_{\delta,0}$ and $\chbc_{\delta,0}$
\medskip

\textbf{For rounds} $t = 1, 2, 3, \ldots, T$:
\begin{enumerate}
\item Observe the context $\bx_t$;
\item Pick an action $a_t \in \cA$ achieving
\begin{align*}
\max_{a \in \cA} \Bigl\{ \chr_{\delta,t-1}(\bx_t,a) -
\bigl\langle \chbc_{\delta,t-1}(\bx_t,a) - (\bB - b \bone), \, \blambda_{t-1} \bigr\rangle \Bigl\} \,;
\end{align*}
\item Observe the payoff $r_t$ and the costs $\bc_t$;
\item Compute $\blambda_t = \Bigl( \blambda_{t-1} +
\gamma \, \bigl( \chbc_{\delta,t-1}(\bx_t,a_t) - (\bB-b\bone) \bigr) \Bigr)_+\,;$
\item Compute the estimates $\chr_{\delta,t}$ and $\chbc_{\delta,t}$. \\[-.25cm]
\end{enumerate}
\end{nbox}
} \vspace{-.25cm}
\end{figure}

\subsection{Analysis for a fixed step size $\gamma$}
\label{sec:fixed-gamma}

The strategy of Box~B is analyzed through two lemmas.
We let $\min \bv = \min\{v_1,\ldots,v_d\} > 0$ denote the smallest component of a vector
$\bv \in (0,+\infty)^d$ and take a margin $b$ in $(0, \min \bB)$.
We assume that the problem is feasible for some $\bB' < \bB-b\bone$,
and denote by $\blambda^\star_{\bB-b\bone}$ the vector
achieving the minimum in~\eqref{eq:lambdastar} when the per-round cost constraint
is $\bB-b\bone$.
We use $\Arrowvert \, \cdot \, \Arrowvert$ for the Euclidean norm.
The bounds of Lemmas~\ref{lm:C} and~\ref{lm:R} critically rely on
$\Arrowvert \blambda^\star_{\bB-b\bone} \Arrowvert$,
which we bound and interpret below in Section~\ref{sec:disc-bounds}.
We denote \vspace{-.65cm}

\[
\qquad \qquad \qquad \qquad \dev_{T,\delta} =
\max \left\{ \beta_{T,\delta/4}, \,\, 2 \sqrt{d T \ln \frac{T^2}{\delta/4}}, \,\, \sqrt{2 T \ln \frac{2(d+1)T}{\delta/4}} \right\}.
\]

\begin{lemma}
\label{lm:C}
Fix $\delta \in (0,1)$ and $0 < b < \min \bB$.
If Assumption~\ref{ass:est} holds
and if the CBwK problem is feasible for some $\bB' < \bB-b\bone$,
the Box~B strategy run with $\delta/4$ ensures that the costs satisfy, with probability at least $1-\delta$,
\[
\forall \, 1 \leq t \leq T, \qquad
\left\Arrowvert \left( \sum_{\tau=1}^t \bc_\tau - t\,(\bB-b\bone) \right)_{\!\! +} \right\Arrowvert \leq
\frac{1 + 3\Arrowvert \blambda^\star_{\bB-b\bone} \Arrowvert}{\gamma} + 20 \sqrt{d} \, \dev_{T,\delta}\,.
\]
\end{lemma}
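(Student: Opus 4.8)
The plan is to read the recursion of Box~B as a projected subgradient ascent on the dual variable and to turn a bound on $\Arrowvert \blambda_t \Arrowvert$ into the stated control of the cumulative cost violation; the only genuinely delicate point is bounding $\Arrowvert \blambda_t \Arrowvert$ in terms of $\Arrowvert \blambda^\star_{\bB-b\bone} \Arrowvert$. The elementary starting observation is that the positive-part projection in Step~4 can only raise coordinates. Writing $\boldsymbol{g}_\tau = \chbc_{\delta,\tau-1}(\bx_\tau,a_\tau) - (\bB-b\bone)$, the update $\blambda_\tau = (\blambda_{\tau-1} + \gamma\,\boldsymbol{g}_\tau)_+$ gives coordinatewise both $\blambda_\tau \geq \blambda_{\tau-1} + \gamma\,\boldsymbol{g}_\tau$ and $\blambda_\tau \geq \bzero$. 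Telescoping from $\blambda_0 = \bzero$ yields $\blambda_t \geq \gamma \sum_{\tau=1}^t \boldsymbol{g}_\tau$, and combining with $\blambda_t \geq \bzero$ gives $\blambda_t \geq \gamma\bigl(\sum_{\tau=1}^t \boldsymbol{g}_\tau\bigr)_+$ coordinatewise, whence $\bigl\Arrowvert (\sum_{\tau \leq t}\boldsymbol{g}_\tau)_+ \bigr\Arrowvert \leq \Arrowvert \blambda_t \Arrowvert / \gamma$. This already bounds the cumulative violation measured with the optimistic costs.

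The second step passes from optimistic to true costs. I would decompose $\sum_{\tau\leq t}\bc_\tau - t(\bB-b\bone)$ as $\sum_{\tau\leq t}\boldsymbol{g}_\tau$, plus the estimation gap $\sum_{\tau\leq t}\bigl(\bc(\bx_\tau,a_\tau) - \chbc_{\delta,\tau-1}(\bx_\tau,a_\tau)\bigr)$, plus the noise $\sum_{\tau\leq t}\bigl(\bc_\tau - \bc(\bx_\tau,a_\tau)\bigr)$. On the estimation event of Assumption~\ref{ass:est} at level $\delta/4$, the gap is nonnegative and bounded coordinatewise by $2\epsilon_{\tau-1}(\bx_\tau,a_\tau,\delta/4)\bone$ by~\eqref{eq:csqclip}, so it sums to at most $2\beta_{T,\delta/4}\bone$; the noise is a sum of martingale differences with increments in $[-2,2]$, controlled coordinatewise and uniformly in $t$ by Hoeffding--Azuma, producing the $2\sqrt{dT\ln(T^2/(\delta/4))}$ contribution after taking norms. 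Using $(\boldsymbol{u}+\boldsymbol{v}+\boldsymbol{w})_+ \leq \boldsymbol{u}_+ + \boldsymbol{v}_+ + \boldsymbol{w}_+$ and the triangle inequality then assembles the left-hand side as $\Arrowvert \blambda_t \Arrowvert / \gamma$ plus two contributions absorbed into $\dev_{T,\delta}$.

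The crux is to prove $\Arrowvert \blambda_t \Arrowvert \leq 1 + 3\Arrowvert \blambda^\star_{\bB-b\bone} \Arrowvert + O(\gamma\sqrt d\,\dev_{T,\delta})$, after which division by $\gamma$ reproduces the claim. For this I would use the non-expansiveness of the projection onto $[0,+\infty)^d$ with the fixed comparator $\blambda^\star := \blambda^\star_{\bB-b\bone}$: for each $\tau$, $\Arrowvert \blambda_\tau - \blambda^\star \Arrowvert^2 \leq \Arrowvert \blambda_{\tau-1} - \blambda^\star \Arrowvert^2 + 2\gamma\langle \boldsymbol{g}_\tau, \blambda_{\tau-1} - \blambda^\star\rangle + \gamma^2\Arrowvert \boldsymbol{g}_\tau \Arrowvert^2$, with $\Arrowvert \boldsymbol{g}_\tau \Arrowvert^2 \leq 4d$. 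Summing and using $\blambda_0 = \bzero$ reduces everything to an upper bound on $\sum_{\tau\leq t}\langle \boldsymbol{g}_\tau, \blambda_{\tau-1} - \blambda^\star\rangle$. Setting $f_\tau(a,\blambda) = \chr_{\delta,\tau-1}(\bx_\tau,a) - \langle \chbc_{\delta,\tau-1}(\bx_\tau,a)-(\bB-b\bone),\blambda\rangle$, the maximality of $a_\tau$ in Step~2 gives $\langle \boldsymbol{g}_\tau, \blambda_{\tau-1}-\blambda^\star\rangle = f_\tau(a_\tau,\blambda^\star) - \max_a f_\tau(a,\blambda_{\tau-1}) \leq \max_a f_\tau(a,\blambda^\star) - \max_a f_\tau(a,\blambda_{\tau-1})$. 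I would then use optimism~\eqref{eq:csqclip} to lower-bound the played Lagrangian by its true-value counterpart and a concentration passing from the point evaluations at $\bx_\tau$ to the $\nu$-expectation, i.e. to the population dual objective $D(\blambda) = \E_{\bX\sim\nu}[\max_a\{r(\bX,a) - \langle \bc(\bX,a)-(\bB-b\bone),\blambda\rangle\}]$, for which~\eqref{eq:lambdastar} gives $D(\blambda^\star) = \opt(r,\bc,\bB-b\bone) \leq D(\blambda_{\tau-1})$. The essential mechanism is that this last inequality makes the $O(t)$ terms $t\,D(\blambda^\star)$ cancel against $\sum_\tau D(\blambda_{\tau-1})$, leaving only the concentration residuals (a union bound over the $d$ cost coordinates and the reward, whence the $\sqrt{2T\ln(2(d+1)T/(\delta/4))}$ term) and the $\epsilon$-terms summing to $\beta_{T,\delta/4}$. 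A Young-type splitting of the resulting cross term $\gamma\Arrowvert \blambda^\star\Arrowvert\,\dev_{T,\delta}$ keeps the dependence on $\Arrowvert \blambda^\star\Arrowvert$ linear and produces the explicit constants $1$, $3$, and, together with the contributions of Step~2, the overall $20$.

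The main obstacle is precisely this concentration in the third step: the quantities $\max_a f_\tau(a,\blambda_{\tau-1})$ involve the \emph{data-dependent} iterate $\blambda_{\tau-1}$, so a naive martingale bound would have a range scaling with $\Arrowvert \blambda_{\tau-1}\Arrowvert$, which is the very quantity being bounded. I would break this circularity by concentrating only the $\blambda$-free scalar processes (the played reward and each played cost coordinate around their conditional $\nu$-means, all with range independent of $\blambda$) and reconstructing the dual inner products afterwards, so that the dependence on $\blambda_{\tau-1}$ enters through an inner product with an already-controlled deviation rather than through a fresh $\Arrowvert \blambda_{\tau-1}\Arrowvert$-scaled fluctuation. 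Finally, a union bound over the estimation event and the two Hoeffding--Azuma events, each run at level $\delta/4$, gives the conclusion with probability at least $1-\delta$.
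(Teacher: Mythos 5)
Your Steps 1--2 and the skeleton of Step 3 do coincide with the paper's proof: telescoping the projected update to get $\bigl\Arrowvert\bigl(\sum_{\tau\leq t}\boldsymbol{g}_\tau\bigr)_+\bigr\Arrowvert\leq\Arrowvert\blambda_t\Arrowvert/\gamma$, passing from optimistic to true costs via Assumption~\ref{ass:est} and Hoeffding--Azuma, the projected-gradient recursion with comparator $\blambda^\star_{\bB-b\bone}$, the use of the maximality of $a_\tau$ plus optimism, and the cancellation of the $O(t)$ terms via the minimality of $\blambda^\star_{\bB-b\bone}$ for the population dual objective. The gap is exactly at the point you flag as the crux, and your proposed fix does not work. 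The quantity to concentrate is $\sum_{\tau\leq t}\bigl[g_\tau(\blambda_{\tau-1})-G(\blambda_{\tau-1})\bigr]$, where $g_\tau(\blambda)=\max_{a}\bigl\{r(\bx_\tau,a)-\langle\bc(\bx_\tau,a)-(\bB-b\bone),\blambda\rangle\bigr\}$. Because the maximizing action depends jointly on $\bx_\tau$ and $\blambda_{\tau-1}$, the deviation $g_\tau(\blambda_{\tau-1})-G(\blambda_{\tau-1})$ is a piecewise-linear function of $\blambda_{\tau-1}$ with \emph{random} breakpoints: it is not an inner product of a $\blambda$-free random vector with a function of $\blambda_{\tau-1}$, so there is no finite family of ``$\blambda$-free scalar processes'' from which it can be reconstructed. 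Moreover, even for genuinely linear pieces, a sum $\sum_{\tau\leq t}\langle\boldsymbol{\xi}_\tau,\blambda_{\tau-1}-\blambda^\star_{\bB-b\bone}\rangle$ with predictable, time-varying $\blambda_{\tau-1}$ cannot be rewritten as ``controlled deviation times something'': summation by parts leaves, besides a boundary term that still involves the uncontrolled $\max_{\tau\leq t}\Arrowvert\blambda^\star_{\bB-b\bone}-\blambda_{\tau-1}\Arrowvert$, a correction $\sum_{\tau}\langle\boldsymbol{S}_\tau,\blambda_\tau-\blambda_{\tau-1}\rangle$ of order $\gamma d\,t^{3/2}$ (using $\Arrowvert\blambda_\tau-\blambda_{\tau-1}\Arrowvert\leq2\gamma\sqrt d$ and $\Arrowvert\boldsymbol{S}_\tau\Arrowvert\lesssim\sqrt{d\tau\ln(T/\delta)}$); propagating the latter through the squared-distance recursion yields a control of $\Arrowvert\blambda_t\Arrowvert/\gamma$ of order $T^{3/4}$ only --- precisely the barrier the lemma is meant to break.

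The paper's resolution has two ingredients that are absent from your outline. First, it applies a Bernstein--Freedman inequality involving the \emph{random} sum of conditional variances (Lemma~\ref{lm:Ber}): the data-dependent radius $\Lambda_t=\max_{\tau\leq t}\Arrowvert\blambda^\star_{\bB-b\bone}-\blambda_{\tau-1}\Arrowvert_1$ then enters only through the variance term $(1+2\Lambda_t)\sqrt{2t\ln(T^2/(\delta/4))}$, while the increments' absolute range is bounded \emph{deterministically} by $K_t=4\bigl(1+\Arrowvert\blambda^\star_{\bB-b\bone}\Arrowvert_1+2d\gamma t\bigr)$, thanks to the crude growth bound $\Arrowvert\blambda_\tau\Arrowvert_1\leq2d\gamma\tau$ forced by the update; crucially, $K_t$ is multiplied only by $\ln(T^2/(\delta/4))$, not by $\sqrt t$. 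Second, feeding this into the gradient-descent recursion gives the self-referential inequality $\Arrowvert\blambda^\star_{\bB-b\bone}-\blambda_t\Arrowvert^2\leq A+Bt+C\max_{\tau\leq t}\Arrowvert\blambda^\star_{\bB-b\bone}-\blambda_{\tau-1}\Arrowvert$ with $C=4\gamma\sqrt{2dT\ln(T^2/(\delta/4))}$, which the paper closes by an explicit induction: a squared distance bounded linearly in the running maximum of the distance forces $\max_{t\leq T}\Arrowvert\blambda^\star_{\bB-b\bone}-\blambda_t\Arrowvert\leq C/2+\sqrt{A+BT+C^2/4}$, whence the $1+3\Arrowvert\blambda^\star_{\bB-b\bone}\Arrowvert$ numerator and the $20\sqrt d\,\dev_{T,\delta}$ term. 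Without variance-adaptive concentration and this induction step, the circularity you correctly identify remains unresolved, so the proof as proposed does not go through.
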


\begin{lemma}
\label{lm:R}
Fix $\delta \in (0,1)$ and $0 < b < \min \bB$.
Under the same assumptions as in Lemma~\ref{lm:C} and
on the same event with probability at least $1-\delta$ as therein,
the regret of the Box~B strategy run with $\delta/4$ satisfies
\[
\forall \, 1 \leq t \leq T, \qquad
R_t
\leq
\Arrowvert \blambda^\star_{\bB-b\bone} \Arrowvert \Bigl( t\,b \sqrt{d} + 6 \dev_{T,\delta} \Bigr)
+ 36 \gamma \sqrt{d}\, \bigl( \dev_{T,\delta} \bigr)^2 + 8 \smash{\ln \frac{T^2}{\delta/4}}\,.
\]
\end{lemma}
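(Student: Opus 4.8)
The plan is to bound the regret against $\opt(r,\bc,\bB)$ in four conceptual steps, working throughout on the high-probability event of Lemma~\ref{lm:C} (so that the cost-violation bound and all the martingale controls bundled into $\dev_{T,\delta}$ are available).

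\textbf{Reduction via the margin.} First I would bridge the benchmark $\opt(r,\bc,\bB)$ and the tightened constraint $\bB-b\bone$ actually enforced by Box~B. Plugging the minimiser $\blambda^\star_{\bB-b\bone}$ into the dual expression~\eqref{eq:lambdastar} for $\opt(r,\bc,\bB)$ and writing $\bB=(\bB-b\bone)+b\bone$ yields
\[
\opt(r,\bc,\bB)\ \leq\ \opt(r,\bc,\bB-b\bone)+b\,\langle\bone,\blambda^\star_{\bB-b\bone}\rangle\ \leq\ \opt(r,\bc,\bB-b\bone)+b\sqrt{d}\,\Arrowvert\blambda^\star_{\bB-b\bone}\Arrowvert,
\]
which already produces the $t\,b\sqrt{d}\,\Arrowvert\blambda^\star_{\bB-b\bone}\Arrowvert$ term. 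It then remains to bound $t\,\opt(r,\bc,\bB-b\bone)-\sum_{\tau\leq t}r_\tau$.

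\textbf{Instantaneous optimality and optimism.} Since $a_\tau$ maximises the empirical Lagrangian at $\blambda_{\tau-1}$ (Step~2), comparing to the optimal static policy $\bpi^\star$ for $\bB-b\bone$ and invoking the optimistic guarantees~\eqref{eq:csqclip} (upper bounds on rewards, lower bounds on costs, used with $\blambda_{\tau-1}\geq\bzero$) gives, with $g_\tau\eqdef\chbc_{\delta,\tau-1}(\bx_\tau,a_\tau)-(\bB-b\bone)$,
\[
\chr_{\delta,\tau-1}(\bx_\tau,a_\tau)\ \geq\ h_\tau(\blambda_{\tau-1})+\langle g_\tau,\blambda_{\tau-1}\rangle,\qquad h_\tau(\blambda)\eqdef\textstyle\sum_{a}\pi^\star_a(\bx_\tau)\bigl(r(\bx_\tau,a)-\langle\bc(\bx_\tau,a)-(\bB-b\bone),\blambda\rangle\bigr).
\]
Bounding $\chr_{\delta,\tau-1}(\bx_\tau,a_\tau)\leq r(\bx_\tau,a_\tau)+2\eps_{\tau-1}(\bx_\tau,a_\tau,\delta)$ and summing, the realised reward of the played arms exceeds $\sum_\tau h_\tau(\blambda_{\tau-1})+\sum_\tau\langle g_\tau,\blambda_{\tau-1}\rangle-2\beta_{T,\delta}$ up to the reward martingale $\sum_\tau(r_\tau-r(\bx_\tau,a_\tau))$, which is collected into $\dev_{T,\delta}$.

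\textbf{Descent telescoping and the two sums.} The sum $\sum_\tau\langle g_\tau,\blambda_{\tau-1}\rangle$ is handled by the projected-gradient-descent inequality: from $\blambda_\tau=(\blambda_{\tau-1}+\gamma g_\tau)_+$ and $\blambda_0=\bzero$, one gets for every $\blambda\geq\bzero$ that $\sum_\tau\langle g_\tau,\blambda-\blambda_{\tau-1}\rangle\leq\Arrowvert\blambda\Arrowvert^2/(2\gamma)+\tfrac\gamma2\sum_\tau\Arrowvert g_\tau\Arrowvert^2$; taking $\blambda=\bzero$ and $\Arrowvert g_\tau\Arrowvert\leq2\sqrt{d}$ gives $\sum_\tau\langle g_\tau,\blambda_{\tau-1}\rangle\geq-2\gamma d t$, which feeds the $36\gamma\sqrt{d}(\dev_{T,\delta})^2$ term since $(\dev_{T,\delta})^2\geq4dT$. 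For the first sum, $\E[h_\tau(\blambda_{\tau-1})\mid\cF_{\tau-1}]\geq\opt(r,\bc,\bB-b\bone)$ because $\bpi^\star$ is feasible for $\bB-b\bone$ and $\blambda_{\tau-1}\geq\bzero$; hence $\sum_\tau h_\tau(\blambda_{\tau-1})\geq t\,\opt(r,\bc,\bB-b\bone)$ minus the fluctuations of the martingale $M_\tau=h_\tau(\blambda_{\tau-1})-\E[h_\tau(\blambda_{\tau-1})\mid\cF_{\tau-1}]$.

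\textbf{The crux: a weighted martingale.} The increments $M_\tau$ scale like $\Arrowvert\blambda_{\tau-1}\Arrowvert$ (through $\langle\sum_a\pi^\star_a(\bx_\tau)\bc(\bx_\tau,a)-\E[\,\cdot\,],\blambda_{\tau-1}\rangle$), so an Azuma-type control is unavailable without an almost-sure bound on the dual iterates; this is the main obstacle, and it is why the stated bound scales with $\Arrowvert\blambda^\star_{\bB-b\bone}\Arrowvert$ rather than with $\sup_\tau\Arrowvert\blambda_\tau\Arrowvert$. I would resolve it exactly through Lemma~\ref{lm:C}: the Lindley-type identity $\lambda_{\tau,i}=\gamma\max_{0\leq s\leq\tau}\sum_{u=s+1}^{\tau}g_{u,i}$, combined with cost-optimism and the (windowed) cost-violation bound of Lemma~\ref{lm:C}, gives $\sup_{\tau\leq T}\Arrowvert\blambda_\tau\Arrowvert\lesssim\Arrowvert\blambda^\star_{\bB-b\bone}\Arrowvert+\gamma\sqrt{d}\,\dev_{T,\delta}$ on the good event. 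Truncating $M_\tau$ at the first time this deterministic level is exceeded (so that the stopped and true sums coincide on the good event) makes the increments bounded by a multiple of $\sqrt{d}\,\sup_\tau\Arrowvert\blambda_\tau\Arrowvert$, and a maximal Azuma inequality then controls the fluctuation by a constant times $\sup_\tau\Arrowvert\blambda_\tau\Arrowvert\cdot\dev_{T,\delta}\lesssim\Arrowvert\blambda^\star_{\bB-b\bone}\Arrowvert\dev_{T,\delta}+\gamma\sqrt{d}(\dev_{T,\delta})^2$, matching the $\Arrowvert\blambda^\star_{\bB-b\bone}\Arrowvert\,6\dev_{T,\delta}$ and $36\gamma\sqrt{d}(\dev_{T,\delta})^2$ contributions. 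Assembling the four steps and folding the remaining lower-order confidence terms into $8\ln(T^2/(\delta/4))$ yields the claim; the delicate (but routine) part is the bookkeeping that keeps every $\sqrt{T}$-order term either carrying a $\Arrowvert\blambda^\star_{\bB-b\bone}\Arrowvert$ factor or absorbed into the $\gamma(\dev_{T,\delta})^2$ term.
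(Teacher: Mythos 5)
Your architecture tracks the paper's proof closely: the margin reduction via linearity of the Lagrangian in the constraint vector is exactly how the paper produces the $t\,b\sqrt{d}\,\Arrowvert\blambda^\star_{\bB-b\bone}\Arrowvert$ term (it shows $G(\blambda^\star_{\bB-b\bone}) + b\,\Arrowvert\blambda^\star_{\bB-b\bone}\Arrowvert_1 \geq \opt(r,\bc,\bB)$, where $G$ is the expected dual function); your optimism-plus-argmax comparison is the paper's inequality~\eqref{eq:opt-est}, except that the paper compares to the pointwise maximum $g_\tau(\blambda_{\tau-1})$ rather than to your $h_\tau(\blambda_{\tau-1})$ built on the optimal static policy (both have the required one-sided conditional drift, yours by primal feasibility, the paper's by strong duality~\eqref{eq:lambdastar}); and the paper disposes of $\sum_\tau\langle\Dc_\tau,\blambda_{\tau-1}\rangle$ exactly as you do, by taking $\blambda=\bzero$ in the PGD inequality~\eqref{eq:pgd}. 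Your fourth step differs only in technique: the paper applies a Bernstein--Freedman inequality (Lemma~\ref{lm:Ber}) in which the sum of conditional variances is a random quantity, later bounded on the good event, with the crude deterministic bound $\Arrowvert\blambda_t\Arrowvert_1\leq 2d\gamma t$ serving for the increment range; your stopping-plus-maximal-Azuma alternative is equally workable, \emph{provided} the uniform iterate bound you invoke is actually available.

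That is where the genuine gap lies. You derive $\sup_{\tau\leq T}\Arrowvert\blambda_\tau\Arrowvert \lesssim \Arrowvert\blambda^\star_{\bB-b\bone}\Arrowvert + \gamma\sqrt{d}\,\dev_{T,\delta}$ from the Lindley identity combined with a ``(windowed) cost-violation bound of Lemma~\ref{lm:C}''. No such windowed bound is stated or follows from Lemma~\ref{lm:C}: that lemma controls only sums anchored at $\tau=1$, and because costs are signed in this paper, windowed sums are not controlled by anchored ones --- the window $\sum_{u=s+1}^{\tau}$ is a difference of two anchored sums, and the anchored sum at the window's start can be arbitrarily negative (e.g., early costs all equal to $-1$), making the window large while both anchored positive parts stay small. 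Nor can you obtain windowed bounds from the dual recursion itself: telescoping the update gives $\gamma\sum_{u=s+1}^{\tau}\Dc_u \leq \blambda_\tau - \blambda_s$, so controlling windowed sums of $\Dc_u$ is equivalent to the iterate bound you are trying to prove; the argument is circular. The missing ingredient is exactly what the paper's proof of Lemma~\ref{lm:C} establishes as a by-product, namely the induction bound~\eqref{eq:induction-res2}, $\Arrowvert\blambda^\star_{\bB-b\bone} - \blambda_t\Arrowvert \leq 2\Arrowvert\blambda^\star_{\bB-b\bone}\Arrowvert + 17\gamma\sqrt{d}\,\beta'_{T,\delta/4} + 1$ for all $t$, and this bound cannot be extracted from cost-path considerations alone: it requires the projected-gradient-descent induction toward $\blambda^\star_{\bB-b\bone}$, whose engine is the drift inequality $G(\blambda^\star_{\bB-b\bone}) \leq G(\blambda_{\tau-1})$ together with Bernstein--Freedman --- structure that the Lindley route discards. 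If you replace your Lindley argument by~\eqref{eq:induction-res2} (or re-derive that induction), the rest of your plan goes through and yields the stated bound.
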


\textbf{Ideal choice of $\gamma$.}
The margin $b$ will be taken proportional to the high-probability deviation
provided by Lemma~\ref{lm:C}, divided by $T$.
Now, the bounds of Lemmas~\ref{lm:C} and~\ref{lm:R} are respectively of the order
\[
T b \sim \bigl( 1 + \Arrowvert \blambda^\star_{\bB-b\bone} \Arrowvert \bigr) / \gamma + \sqrt{T}
\qquad \mbox{and} \qquad
\Arrowvert \blambda^\star_{\bB-b\bone} \Arrowvert \bigl( T b + \sqrt{T} \bigr)
+ \gamma T\,,
\]
again, up to logarithmic factors. To optimize
each of these bounds, we therefore wish that $b$ be of order $1/\sqrt{T}$
and that $\gamma$ be of the order of
$\Arrowvert \blambda^\star_{\bB-b\bone} \Arrowvert / \sqrt{T}$.
Of course, this wish stumbles across the same issues
of ignoring a beforehand bound on
$\Arrowvert \blambda^\star_{\bB-b\bone} \Arrowvert$, exactly
like, e.g., in \citet{Agrawal2016LinearCB,HZWXZ22}
(where this bound is denoted by $Z$).
We were however able to overcome this issue
in a satisfactory way, i.e., by obtaining the ideal bounds
implied above through adaptive choices of $\gamma$, discussed in
Section~\ref{sec:adaptive-gamma}.

\textbf{Suboptimal choices of $\gamma$.}
When $\gamma$ is badly set, e.g., $\gamma = 1/\sqrt{T}$,
the order of magnitude of the regret remains larger than
$\Arrowvert \blambda^\star_{\bB-b\bone} \Arrowvert \, T b$ and
the cost deviations to $T(\bB-b\bone)$ become of the larger order
$\Arrowvert \blambda^\star_{\bB-b\bone} \Arrowvert \sqrt{T}$,
which dictates a larger choice for $T\,b$.
Namely, if in addition $\Arrowvert \blambda^\star_{\bB-b\bone} \Arrowvert$
is bounded in some worst-case way by $1/\min \bB$ (see Section~\ref{sec:disc-bounds}),
the margin $T b$ is set of the order of $\sqrt{T} / \min \bB$.
But of course, this margin must remain smaller than the total-cost constraints $T \bB$.
This is the fundamental source
of the typical $\min \bB \geq T^{-1/4}$ constraints on the average cost constraints $\bB$
observed in the literature: see, e.g., \citealp{Agrawal2016LinearCB,HZWXZ22},
where the hyperparameter $Z$ plays exactly the role of $\Arrowvert \blambda^\star_{\bB-b\bone} \Arrowvert$.

\begin{proof}[Proof sketch.]
Lemmas~\ref{lm:C} and~\ref{lm:R} are proved in detail in Appendix~\ref{app:lem-CR};
the two proofs are similar and we sketch here the one for Lemma~\ref{lm:C}.
The derivations below hold on the intersection of four events
of probability at least $1-\delta/4$ each:
one for Assumption~\ref{ass:est}, one for the application of the Hoeffding-Azuma
inequality, and two for the application of a version of Bernstein's inequality.
In this sketch of proof, $\lesssim$ denotes inequalities holding up to small terms.

By the Hoeffding-Azuma inequality and by Assumption~\ref{ass:est},
the sum of the $\bc_\tau - (\bB-b\bone)$ is not much larger than the sum of the
$\smash{\Dc_\tau = \chbc_{\delta/4,\tau-1}(\bx_\tau,a_\tau) - (\bB-b\bone)}$. We bound the latter
by noting that by definition, $\blambda_\tau \geq \blambda_{\tau-1} + \gamma \Dc_\tau$, so that after telescoping,
\begin{equation}
\label{eq:sketch1}
\left\Arrowvert \left( \sum_{\tau=1}^t \bc_\tau - t (\bB - b \bone) \right)_{\!\! +} \right\Arrowvert
\lesssim
\left\Arrowvert \left( \sum_{\tau=1}^t \Dc_\tau \right)_{\!\! +} \right\Arrowvert
\leq \frac{\Arrowvert \blambda_t \Arrowvert}{\gamma} \leq
\frac{\Arrowvert \blambda^\star_{\bB-b\bone} \Arrowvert}{\gamma} +
\frac{\Arrowvert \blambda^\star_{\bB-b\bone} - \blambda_t \Arrowvert}{\gamma} \,.
\end{equation}
We are thus left with bounding the $\Arrowvert \blambda^\star_{\bB-b\bone}-\blambda_t \Arrowvert$.
The classical bound in (projected) gradient-descent analyses (see, e.g., \citealp{Zink03})
reads: \\[-.15cm]
\begin{align*}
\Arrowvert \blambda_\tau - \blambda^\star_{\bB-b\bone} \Arrowvert^2
- \Arrowvert \blambda_{\tau-1} - \blambda^\star_{\bB-b\bone} \Arrowvert^2
\leq 2 \gamma \bigl\langle \Dc_\tau, \, \blambda_{\tau-1} - \blambda^\star_{\bB-b\bone} \bigr\rangle
+ \gamma^2 \smash{\overbrace{\bigl\Arrowvert \Dc_\tau \bigr\Arrowvert^2}^{\leq 4d}} \,. \\[-.4cm]
\end{align*}
The estimates of Assumption~\ref{ass:est}, and the definition of $a_t$
as the argument of some maximum, entail
\[
\bigl\langle \Dc_\tau, \, \blambda_{\tau-1} - \blambda^\star_{\bB-b\bone} \bigr\rangle =
\bigl\langle \chbc_{\delta/4,\tau-1}(\bx_\tau,a_\tau) - (\bB-b\bone),
\, \blambda_{\tau-1} - \blambda^\star_{\bB-b\bone} \bigr\rangle \lesssim
g_\tau(\blambda^\star_{\bB-b\bone}) - g_\tau(\blambda_{\tau-1})\,,
\]
\[
\mbox{where} \qquad g_\tau(\blambda)  = \max_{a \in \cA} \Bigl\{ r(\bx_\tau,a) -
\bigl\langle \bc(\bx_\tau,a) - (\bB-b\bone), \, \blambda \bigr\rangle \Bigl\}\,.
\]
We also introduce $\smash{\displaystyle{\Lambda_t = \sqrt{d} \,
\max_{1 \leq \tau \leq t} \Arrowvert \blambda^\star_{\bB-b\bone} - \blambda_{\tau-1} \Arrowvert}}$.
We sum up the bounds above, use telescoping, and get \\[-.25cm]
\begin{equation}
\label{eq:sketch2}
\Arrowvert \blambda_t - \blambda^\star_{\bB-b\bone} \Arrowvert^2
\lesssim \Arrowvert \blambda^\star_{\bB-b\bone} \Arrowvert^2 + 4d \, \gamma^2 t
+ 2 \gamma \underbrace{\sum_{\tau \leq t} g_\tau(\blambda^\star_{\bB-b\bone}) - g_\tau(\blambda_{\tau-1})}_{\lesssim 0 + 3\Lambda_t \sqrt{t \ln (T/\delta)}}
\,, \phantom{\frac{1^1}{1_{1_1}}}
\end{equation}
where the $\lesssim$ of the sum in the right-hand side comes from
a version of Bernstein's inequality, the fact that
$\E\bigl[g_\tau(\blambda^\star_{\bB-b\bone})\bigr] = \opt(r,\bc,\bB-b\bone)$
is larger than the conditional expectation of $g_\tau(\blambda_{\tau-1})$. \\[.1cm]
Since~\eqref{eq:sketch2}
holds for all $t$,
we may then show by induction that for some constants $\square$ and $\triangle$,
\[
\Arrowvert \blambda_t - \blambda^\star_{\bB-b\bone} \Arrowvert
\lesssim \square \Arrowvert \blambda^\star_{\bB-b\bone} \Arrowvert + \triangle \gamma \dev_T\,,
\]
and we conclude by substituting this inequality into~\eqref{eq:sketch1}.
\end{proof}

\subsection{Meta-stategy with adaptive step size $\gamma$}
\label{sec:adaptive-gamma}

As indicated after the statements of Lemmas~\ref{lm:C} and~\ref{lm:R},
we want a (meta-)strategy adaptively setting $\gamma$ to learn $\Arrowvert \blambda^\star_{\bB-b\bone} \Arrowvert$.
We do so via a doubling trick using the Box~B strategy as a routine and step sizes
of the form $\gamma_k = 2^k/\sqrt{T}$ in regimes $k \geq 0$;
the resulting meta-strategy is stated in Box~C (a more detailed pseudo-code is provided in Appendix~\ref{app:DT}).
As the theorem below will reveal,
the margin $b$ therein can be picked based solely on $T$, $\delta$, and $d$. \\[-.3cm]

\begin{figure}[h]
\center \myscale{
\begin{nbox}[title={Box C: Projected gradient descent for CBwK with adaptive step size}]
\textbf{Inputs:} number of rounds $T$;
confidence level $1-\delta$;
margin $b$ on the average constraints;
estimation procedure and error functions $\epsilon_t$
of Assumption~\ref{ass:est}; optimistic estimates~\eqref{eq:defclip}
\medskip

\textbf{Initialization:} $T_0 = 1$; sequence $\gamma_k = 2^k/\sqrt{T}$ of step sizes; \\
\phantom{\textbf{Initialization:} }sequence $M_{T,\delta,k} = 4 \sqrt{T} + 20 \sqrt{d} \, \dev_{T,\delta/(k+2)^2}$ of cost deviations
\medskip

\textbf{For regimes} $k = 0, 1, 2, \ldots$:
\begin{itemize}
\item Take a fresh start of the Box~B strategy with step size $\gamma_k$,
risk level $\delta/\bigl(4(k+2)^2\bigr)$, and margin $b$;
\item Run it for $t\geq T_{k}$ until  $\displaystyle{\left\Arrowvert \Bigg( \sum_{\tau = T_k}^t \bc_\tau - (t-T_k+1)\,(\bB-b\bone) \Bigg)_{\!\! +} \right\Arrowvert > M_{T,\delta,k}}$\,;\\[.1cm]
\item Stop regime $k$, and move to regime $k+1$ for the next round, i.e., $T_{k+1} = t+1$. \\[-.5cm]
\end{itemize}
\end{nbox}
} \vspace{-.15cm}
\end{figure}

Bandit problems containing a null-cost action (which is a typical assumption in the CBwK literature) are feasible for all
cost constraints $\bB' \geq \bzero$; the assumption of $(\bB - 2 b_T \bone)$--feasibility in Theorem~\ref{th:DT} below is therefore a mild assumption.
Apart from this, the following theorem only imposes that the average cost constraints are larger
than $\smash{1/\sqrt{T}}$, up to poly-logarithmic terms.
We let $\tO$ denote orders of magnitude in $T$ up to poly-logarithmic terms.

For $x > 0$, we denote by $\nlog x = \lceil \log_2 x \rceil$ the upper integer part of the
base-2 logarithm of~$x$.

\begin{theorem}
\label{th:DT}
Fix $\delta \in (0,1)$
and let Assumption~\ref{ass:est} hold.
Consider the Box~C meta-strategy, run with confidence level $1-\delta$ and with margin
$b_T = (1 + \nlog T) \bigl( M_{T,\delta,\nlog T} + 2\sqrt{d} \bigr) / T = \tO\bigl(1/ \sqrt{T} \bigr)$. \\
If the average cost constraints $\bB$ are such that $\bB \geq 2 b_T \bone$
and if the problem $(r,\bc,\bB - 2 b_T \bone,\nu)$ is feasible,
then this meta-strategy satisfies, with probability at least $1-\delta$: \vspace{-.18cm}
\begin{align*}
R_T \leq \tO \Bigl( \bigl( 1 + \Arrowvert \blambda^\star_{\bB-b_T\bone} \Arrowvert \bigr) \sqrt{T} \Bigr)
\qquad \mbox{and} \qquad
\sum_{t \leq T} \bc_t \leq T \bB\,, \\[-.95cm]
\end{align*}

where a fully closed-form, non-asymptotic, regret bound is stated in~\eqref{eq:meta-regret-bound}
in Appendix~\ref{app:DT}.
\end{theorem}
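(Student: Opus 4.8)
The plan is to control each regime $k$ by applying Lemmas~\ref{lm:C} and~\ref{lm:R} to the fresh Box~B run it launches (with step size $\gamma_k = 2^k/\sqrt T$, margin $b_T$, and risk level $\delta/\bigl(4(k+2)^2\bigr)$), and then to aggregate the per-regime guarantees. Since running Box~B with input $\delta'$ makes the conclusions of Lemmas~\ref{lm:C}--\ref{lm:R} hold with probability at least $1-4\delta'$, regime $k$ is ``good'' with probability at least $1-\delta/(k+2)^2$; a union bound then places us, with probability at least $1-\sum_{k\geq0}\delta/(k+2)^2 > 1-\delta$, on an event where every regime simultaneously satisfies both lemmas at every one of its rounds. (The $(k+2)^{-2}$ weighting of the risk levels is chosen precisely to keep this series below $\delta$.) On this event I would introduce the critical regime
\[
k^\star = \max\Bigl\{ 0, \ \ilog\bigl( (1 + 3\Arrowvert \blambda^\star_{\bB-b_T\bone} \Arrowvert)/4 \bigr) \Bigr\},
\]
namely the smallest $k$ with $\gamma_k \geq (1 + 3\Arrowvert \blambda^\star_{\bB-b_T\bone} \Arrowvert)/(4\sqrt T)$, equivalently the smallest $k$ for which the Lemma~\ref{lm:C} bound of regime $k$ is at most $M_{T,\delta,k} = 4\sqrt T + 20\sqrt d\,\dev_{T,\delta/(k+2)^2}$.

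The next step is to show that $k^\star$ is terminal. By Lemma~\ref{lm:C} applied to its fresh start (legitimate since $0 < b_T < \min\bB$ and $\bB - 2b_T\bone < \bB - b_T\bone$ is feasible by assumption), the within-regime cost deviation of regime $k^\star$ stays at most $M_{T,\delta,k^\star}$ at every round, so its stopping test is never triggered and it runs until round $T$. Hence the meta-strategy visits only the regimes $0,1,\ldots,k^\star$, at most $k^\star+1$ in total. Moreover the bound $\Arrowvert \blambda^\star_{\bB-b_T\bone} \Arrowvert = \tO(1/\min\bB) = \tO(\sqrt T)$ from Section~\ref{sec:disc-bounds} (using $\min\bB \geq 2b_T = \tO(1/\sqrt T)$) gives $2^{k^\star} \asymp 1 + \Arrowvert \blambda^\star_{\bB-b_T\bone} \Arrowvert \lesssim \sqrt T \leq T$, whence $k^\star \leq \ilog T$, a fact used in both remaining steps.

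For the budget, I would bound each regime's contribution: a regime stopped by its test has cost deviation at most $M_{T,\delta,k}$ just before stopping, plus a single step of norm $\Arrowvert \bc_t - (\bB-b_T\bone) \Arrowvert \leq 2\sqrt d$, while the terminal regime $k^\star$ never exceeds $M_{T,\delta,k^\star}$; so every regime contributes at most $M_{T,\delta,k}+2\sqrt d$. Summing the componentwise inequality $\bigl(\sum_k \bv_k\bigr)_+ \leq \sum_k (\bv_k)_+$ over regimes and using that $k\mapsto M_{T,\delta,k}$ is nondecreasing,
\[
\left\Arrowvert \Bigl( \sum_{t=1}^T \bc_t - T(\bB-b_T\bone) \Bigr)_{\!+} \right\Arrowvert
\leq (k^\star+1)\bigl( M_{T,\delta,k^\star} + 2\sqrt d \bigr)
\leq (\ilog T + 1)\bigl( M_{T,\delta,\ilog T} + 2\sqrt d \bigr) = T b_T,
\]
which is exactly the definition of $b_T$. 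As the Euclidean norm dominates each coordinate, this yields $\sum_{t\leq T}\bc_t \leq T(\bB-b_T\bone) + Tb_T\bone = T\bB$.

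For the regret I would write $R_T = \sum_{k} R^{(k)}$ as the sum of within-regime regrets and bound each $R^{(k)}$ by Lemma~\ref{lm:R} at the regime's last round. The resulting contributions split into three groups. The margin terms sum to $\sum_k \Arrowvert \blambda^\star_{\bB-b_T\bone} \Arrowvert\, t_k b_T \sqrt d = \Arrowvert \blambda^\star_{\bB-b_T\bone} \Arrowvert\, \sqrt d\, T b_T = \tO\bigl( \Arrowvert \blambda^\star_{\bB-b_T\bone} \Arrowvert \sqrt T \bigr)$ since $\sum_k t_k = T$; the $\tO(1)$-many additive $\dev$- and logarithmic terms contribute a further $\tO\bigl( (1+\Arrowvert \blambda^\star_{\bB-b_T\bone} \Arrowvert)\sqrt T \bigr)$. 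The delicate group is
\[
\sum_{k=0}^{k^\star} 36\,\gamma_k \sqrt d\,\bigl( \dev_{T,\delta/(k+2)^2} \bigr)^2
= \tO(\sqrt T) \sum_{k=0}^{k^\star} 2^k
= \tO\bigl( 2^{k^\star} \sqrt T \bigr)
= \tO\bigl( (1+\Arrowvert \blambda^\star_{\bB-b_T\bone} \Arrowvert)\sqrt T \bigr),
\]
where the geometric sum is dominated by its last term $2^{k^\star} \asymp 1 + \Arrowvert \blambda^\star_{\bB-b_T\bone} \Arrowvert$. This domination is the heart of the argument and the main obstacle: one must verify that the step sizes $\gamma_k$ grow geometrically (so a suboptimal regime $k<k^\star$, where $\gamma_k$ is too small, wastes only $\tO(2^k\sqrt T)$ regret) while the restart trigger --- the cost deviation itself, of order $1/\gamma_k$ by Lemma~\ref{lm:C} --- guarantees we leave every too-small step size quickly and reach the single terminal regime, so that restarting costs only a constant factor in regret and a $\tO(1)$ factor in the budget. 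Collecting the three groups gives $R_T \leq \tO\bigl( (1+\Arrowvert \blambda^\star_{\bB-b_T\bone} \Arrowvert)\sqrt T \bigr)$, and tracking the constants explicitly produces the closed-form bound~\eqref{eq:meta-regret-bound}.
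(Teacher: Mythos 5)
Your proposal is correct and follows essentially the same route as the paper's proof: the same $(k+2)^{-2}$-weighted union bound defining the good event, the same identification of a terminal regime at (roughly) $\ilog \Arrowvert \blambda^\star_{\bB-b_T\bone} \Arrowvert$ whose stopping test can never fire by Lemma~\ref{lm:C} and the choice of $M_{T,\delta,k}$, the same per-regime budget accounting via the $M_{T,\delta,k}+2\sqrt d$ contributions, and the same regret summation with the geometric sum $\sum_k 2^k \lesssim 1+\Arrowvert \blambda^\star_{\bB-b_T\bone} \Arrowvert$. The only imprecision is your claim $\Arrowvert \blambda^\star_{\bB-b_T\bone} \Arrowvert = \tO(1/\min\bB)$, which is not available without a null-cost action; the paper instead applies Lemma~\ref{lm:l} with $\tB = \bB-(3/2)b_T\bone$ (legitimate under the assumed $(\bB-2b_T\bone)$-feasibility) to obtain $\Arrowvert \blambda^\star_{\bB-b_T\bone} \Arrowvert \leq 2/b_T = \tO\bigl(\sqrt T\bigr)$, which gives the same conclusion $k^\star \leq \ilog T$ that your argument needs.
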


A complete proof may be found in Appendix~\ref{app:DT}.
Its structure is the following; all statements hold with high probability.
We denote by $\ell_k$ the realized lengths of the regime.
First, the number of regimes is bounded
by noting that if regime $K = \ilog \Arrowvert \blambda^\star_{\bB-b_T\bone} \Arrowvert$
is achieved, then by Lemma~\ref{lm:C} and the choice of $M_{T,\delta,K}$,
the stopping condition of regime~$K$ will never be met; thus, at most $K+1$
regimes take place. We also prove that $\ilog \Arrowvert \blambda^\star_{\bB-b_T\bone} \Arrowvert
\leq K_T = \ilog T$.
Second, on each regime $k$, the difference of cumulated costs to $\ell_k(\bB-b_T\bone)$ is smaller than
$M_{T,\delta,k}$ by design, so that the total cumulative costs
are smaller than $T(\bB-b_T\bone)$ plus something of the order of $(K_T+1)\,M_{T,\delta,K_T}$, which is
a quantity that only depends on $T$, $\delta$, $d$, and not
on the unknown $\Arrowvert \blambda^\star_{\bB-b_T\bone} \Arrowvert$, and that we take for $T\,b_T$.
Third, we similarly sum the regret bounds of Lemma~\ref{lm:R} over the regimes:
keeping in mind that $b_T = \tO \bigl( \sqrt{T} \bigr)$,
we have sums of the form \\[-.25cm]
\[
\Arrowvert \blambda^\star_{\bB-b_T\bone} \Arrowvert \sum_{k \leq K} \bigl( b_T \ell_k + \sqrt{T} \bigr)
\leq \Arrowvert \blambda^\star_{\bB-b_T\bone} \Arrowvert \, \tO \bigl( \sqrt{T} \bigr)
\quad \mbox{and} \quad
\sum_{k \leq K} \gamma_k T \leq \frac{2^{K+1}}{\sqrt{T}} \leq \frac{4\Arrowvert \blambda^\star_{\bB-b_T\bone} \Arrowvert}{\sqrt{T}}\,.
\vspace{-.15cm}
\]

\subsection{Discussion of the obtained regret bounds} \vspace{-.1cm}
\label{sec:disc-bounds}

The regret bound of Theorem~\ref{th:DT} features a multiplicative factor
$\Arrowvert \blambda^\star_{\bB-b_T\bone} \Arrowvert$
instead of the typical factor $\opt(r,\bc,\bB)/\min \bB$ proposed by the literature
(see, e.g., \citealp{Agrawal2016LinearCB,CBwK-LP-2022,HZWXZ22}).
In view of the lower bound proof scheme discussed in Section~\ref{sec:opti-lim},
this bound $\Arrowvert \blambda^\star_{\bB-b_T\bone} \Arrowvert \, \sqrt{T}$
seems the optimal bound. We thus now provide bounds on $\Arrowvert \blambda^\star_{\bB-b_T\bone} \Arrowvert$
that have some natural interpretation.
We recall that feasibility was defined in Assumption~\ref{ass:feasible}.
The elementary proofs of Lemma~\ref{lm:l} and Corollary~\ref{cor:l}
are based on~\eqref{eq:lambdastar} and may be found in Appendix~\ref{app:l}.

\begin{lemma}
\label{lm:l}
Let $b \in [0, \min \bB)$ and let $\bzero \leq \tB < \bB-b\bone$.
If the contextual bandit problem with knapsacks $(r,\bc,\bB',\nu)$
is feasible for some $\bB' < \smash{\tB}$, then
\[
\Arrowvert \blambda^\star_{\bB-b\bone} \Arrowvert \leq
\frac{\opt(r,\bc,\bB-b\bone) - \opt\bigl(r,\bc,\tB\bigr)}{\min \bigl( \bB-b\bone-\tB \bigr)} \,.
\]
\end{lemma}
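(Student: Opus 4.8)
The plan is to exploit the final minimax equality in~\eqref{eq:lambdastar}, which expresses $\opt(r,\bc,\bB-b\bone)$ as the minimum over $\blambda \geq \bzero$ of a Lagrangian dual function. The core idea is that the optimal dual variable $\blambda^\star_{\bB-b\bone}$ achieving this minimum cannot have too large a norm, because if it did, the inner product term $\langle \bc(\bX,a) - (\bB-b\bone),\,\blambda\rangle$ would make the dual objective large at $\blambda^\star_{\bB-b\bone}$, contradicting optimality. The quantitative control comes from comparing the dual value against a cheaper constraint level $\tB$.

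First I would introduce the dual function $G(\blambda) = \E_{\bX \sim \nu}\bigl[\max_{a \in \cA}\{r(\bX,a) - \langle \bc(\bX,a) - (\bB-b\bone),\,\blambda\rangle\}\bigr]$, so that by~\eqref{eq:lambdastar} we have $G(\blambda^\star_{\bB-b\bone}) = \opt(r,\bc,\bB-b\bone)$ and $G(\blambda) \geq G(\blambda^\star_{\bB-b\bone})$ for every $\blambda \geq \bzero$. The key step is to evaluate $G$ at a well-chosen probe point. I would use the feasible policy $\bpif$ witnessing feasibility of $(r,\bc,\bB',\nu)$ for some $\bB' < \tB$: since $\max_{a} \{\cdots\} \geq \sum_a \pif_a(\bX)(\cdots)$ for any distribution, plugging $\bpif$ into the $\max$ and using $\E_{\bX \sim \nu}[\sum_a \bc(\bX,a)\pif_a(\bX)] \leq \bB' < \tB$ yields, for any $\blambda \geq \bzero$,
\[
G(\blambda) \geq \E_{\bX\sim\nu}\!\Biggl[\sum_a r(\bX,a)\pif_a(\bX)\Biggr] + \bigl\langle (\bB-b\bone) - \tB,\,\blambda\bigr\rangle,
\]
where the inner-product term is lower bounded using $\bc(\bX,a)$ coordinatewise against $\tB$ and the fact that $\blambda \geq \bzero$. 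Since each component of $(\bB-b\bone)-\tB$ is at least $\min(\bB-b\bone-\tB) > 0$ and $\blambda \geq \bzero$, we get $\langle (\bB-b\bone)-\tB,\,\blambda\rangle \geq \min(\bB-b\bone-\tB)\,\Arrowvert \blambda\Arrowvert_1 \geq \min(\bB-b\bone-\tB)\,\Arrowvert\blambda\Arrowvert$.

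Next I would instantiate this lower bound at $\blambda = \blambda^\star_{\bB-b\bone}$ and combine it with the optimality identity $G(\blambda^\star_{\bB-b\bone}) = \opt(r,\bc,\bB-b\bone)$. This gives
\[
\opt(r,\bc,\bB-b\bone) \geq \opt(r,\bc,\tB) + \min\bigl(\bB-b\bone-\tB\bigr)\,\bigl\Arrowvert\blambda^\star_{\bB-b\bone}\bigr\Arrowvert,
\]
once I observe that $\E_{\bX\sim\nu}[\sum_a r(\bX,a)\pif_a(\bX)]$ is itself a feasible primal value at constraint level $\bB'<\tB$, hence bounded below by taking the supremum definition only after noting $\opt(r,\bc,\tB) \geq$ this quantity — here I must be slightly careful and instead bound the feasible-policy reward from below by $\opt$ at the \emph{looser} level. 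Rearranging delivers exactly the claimed bound $\Arrowvert\blambda^\star_{\bB-b\bone}\Arrowvert \leq (\opt(r,\bc,\bB-b\bone)-\opt(r,\bc,\tB))/\min(\bB-b\bone-\tB)$.

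The main obstacle I anticipate is the direction of the reward comparison in the probe step: $\bpif$ is a feasibility witness at level $\bB'<\tB$, but its reward need not equal $\opt(r,\bc,\tB)$, so I must argue the right monotonicity, namely that the reward of any $\tB$-feasible (indeed $\bB'$-feasible) policy is at most $\opt(r,\bc,\tB)$ by definition of the supremum, and that this upper bound can be turned into the needed lower bound on the gap. The cleanest route is to choose $\bpif$ to be (near-)optimal for the problem at level $\tB$ rather than merely feasible, so that its reward approaches $\opt(r,\bc,\tB)$, and then pass to the limit; the strict inequality $\bB'<\tB$ guarantees such a policy exists and keeps the probe point strictly interior, which is what makes the coefficient $\min(\bB-b\bone-\tB)$ strictly positive and the bound finite.
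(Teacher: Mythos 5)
Your proof is correct, and it establishes the same pivotal inequality as the paper, namely
\[
\opt(r,\bc,\bB-b\bone) \;\geq\; \opt\bigl(r,\bc,\tB\bigr) \,+\, \min\bigl(\bB-b\bone-\tB\bigr)\,\bigl\Arrowvert\blambda^\star_{\bB-b\bone}\bigr\Arrowvert\,,
\]
but by a route that differs in one genuine respect. The paper introduces the dual objective $L(\blambda,\bC)$, invokes strong duality~\eqref{eq:lambdastar} at \emph{both} budget levels, and chains $\opt(r,\bc,\tB) = L(\blambda^\star_{\tB},\tB) \leq L(\blambda^\star_{\bB-b\bone},\tB)$ with the linearity of $L$ in $\bC$; in particular it needs the dual minimum at level $\tB$ to exist and the duality gap there to vanish, which is exactly what the hypothesis of feasibility for some $\bB' < \tB$ buys. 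You use strong duality only at level $\bB-b\bone$ (to identify your $G(\blambda^\star_{\bB-b\bone})$ with $\opt(r,\bc,\bB-b\bone)$) and replace the comparison at level $\tB$ by \emph{weak} duality: plugging a near-optimal $\tB$--feasible policy into the pointwise maximum, using $\blambda^\star_{\bB-b\bone} \geq \bzero$ to pass from that policy's expected cost to $\tB$, and letting the optimality gap vanish. This is slightly more elementary and self-contained---it does not require the dual optimum $\blambda^\star_{\tB}$ to exist, only that the $\tB$--feasible set is nonempty---and your limiting argument correctly handles the fact that the supremum in~\eqref{eq:opt-primal} need not be attained. The remaining ingredients coincide: both proofs rest on the linear dependence of the Lagrangian dual on the budget vector and on $\langle \blambda, \bB-b\bone-\tB\rangle \geq \min(\bB-b\bone-\tB)\,\Arrowvert\blambda\Arrowvert_1 \geq \min(\bB-b\bone-\tB)\,\Arrowvert\blambda\Arrowvert$. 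Two small remarks: you were right that your first probe, the mere feasibility witness $\bpif$ at level $\bB'$, would not suffice (its reward can be arbitrarily small), and your fix is the correct one; also, the strict positivity of $\min(\bB-b\bone-\tB)$ comes from the hypothesis $\tB < \bB-b\bone$, not from any interiority of the probe policy.
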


\begin{corollary}
\label{cor:l}
When there exists a null-cost action, $\smash{\Arrowvert \blambda^\star_{\bB-b\bone}
\Arrowvert \leq \displaystyle{2 \, \frac{\opt(r,\bc,\bB) - \opt(r,\bc,\bzero)}{\min \bB}}}$
for all $b \in [0, \min \bB/2]$.
\end{corollary}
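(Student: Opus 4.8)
The plan is to derive Corollary~\ref{cor:l} directly from Lemma~\ref{lm:l} by choosing the auxiliary constraint vector $\tB$ as small as the hypotheses of that lemma allow, and then letting it tend to $\bzero$. First I would record the two elementary monotonicity facts about $\opt(r,\bc,\cdot)$ that are visible from its definition~\eqref{eq:opt-primal}: enlarging the per-round constraint enlarges the feasible set of policies, so $\opt(r,\bc,\cdot)$ is non-decreasing in the constraint vector. This gives both $\opt(r,\bc,\bB-b\bone) \leq \opt(r,\bc,\bB)$ (since $b \geq 0$) and $\opt(r,\bc,\tB) \geq \opt(r,\bc,\bzero)$ (since $\tB \geq \bzero$), the two inequalities that let me replace the data appearing in Lemma~\ref{lm:l} by the data appearing in the corollary.

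The second step is to supply a valid $\tB$, and this is the only place where the existence of a null-cost action is used: such an action makes the problem $(r,\bc,\bB',\nu)$ feasible for every $\bB' \geq \bzero$, and in particular feasible at $\bB' = \bzero$. I would therefore take $\tB = t\bone$ for a scalar $t \in (0, \min\bB - b)$. For such $t$ we have $\bzero \leq \tB < \bB-b\bone$, and since $\bzero < t\bone = \tB$ the problem is feasible for the choice $\bB' = \bzero < \tB$, so all hypotheses of Lemma~\ref{lm:l} hold. Plugging in and using the two monotonicity inequalities of the previous step yields, for every such $t$,
\[
\Arrowvert \blambda^\star_{\bB-b\bone} \Arrowvert
\leq \frac{\opt(r,\bc,\bB-b\bone) - \opt(r,\bc,t\bone)}{\min(\bB - b\bone - t\bone)}
\leq \frac{\opt(r,\bc,\bB) - \opt(r,\bc,\bzero)}{\min\bB - b - t}\,.
\]

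The final step is to take the limit $t \downarrow 0$: the left-hand side does not depend on $t$, and the right-hand side decreases to $\bigl(\opt(r,\bc,\bB) - \opt(r,\bc,\bzero)\bigr)/(\min\bB - b)$, so that bound holds for $\Arrowvert \blambda^\star_{\bB-b\bone} \Arrowvert$; invoking $b \leq \min\bB/2$, hence $\min\bB - b \geq \min\bB/2$, gives the claimed factor $2/\min\bB$. The only subtlety, and the single place where one must be slightly careful, is that one cannot simply set $\tB = \bzero$ in Lemma~\ref{lm:l}: that lemma demands feasibility for some $\bB' < \tB$, whereas a null-cost action only certifies feasibility at $\bzero$, not strictly below it. Taking $\tB = t\bone$ with $t \downarrow 0$ is exactly what circumvents this boundary issue, and no continuity of $\opt$ is needed, since monotonicity alone already controls both the numerator and the denominator in the right direction.
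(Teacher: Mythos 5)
Your proposal is correct and matches the paper's own proof essentially line for line: both apply Lemma~\ref{lm:l} with $\tB = \epsilon\bone$ for small $\epsilon > 0$ (feasibility for $\bB' = \bzero < \epsilon\bone$ being certified by the null-cost action), use monotonicity of $\opt(r,\bc,\cdot)$ to pass to $\opt(r,\bc,\bB)$ and $\opt(r,\bc,\bzero)$, and let $\epsilon \to 0$ before bounding $\min\bB - b \geq \min\bB/2$. Your remark on why $\tB = \bzero$ cannot be used directly is exactly the subtlety the paper's phrasing ``sufficiently small'' silently handles.
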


The bounds provided by Lemma~\ref{lm:l} and Corollary~\ref{cor:l} must be discussed in each specific case.
They are null (as expected) when the average cost constraints $\bB$ are large enough so that costs do not constrain the choice
of actions; this was not the case with the typical $\opt(r,\bc,\bB)/\min \bB$ bounds of the literature.
Another typical example is the following.

\begin{example}
\label{ex:alpha}
Consider a problem featuring a baseline action $\anull$ with some positive expected rewards and no cost, and additional
actions with larger rewards and scalar expected costs $c(\bx,a) \geq \alpha > 0$.
Denote by $B$ the average cost constraint. Then actions $a \ne \anull$ may only be played at most $B/\alpha$
times on average, and therefore, $\opt(r,c,B) - \opt(r,c,0) \leq B/\alpha$.
In particular, the bound stated in Corollary~\ref{cor:l} may be further upper bounded by $1/(2\alpha)$,
which is fully independent of $T$ and $B$; i.e., a $\smash{\sqrt{T}}$--regret only is suffered
in Theorem~\ref{th:DT}. \\[-.5cm]
\end{example}

\section{Optimality of the upper bounds, and limitations}
\label{sec:opti-lim}

The main limitations to our work are relative, in our eyes,
to the optimality of the bounds achieved---up
to one limitation, already discussed in Section~\ref{sec:motiv-ex}:
our fairness example is intrinsically limited to the awareness set-up, where the learner has access to the group index
before making the predictions.

\textbf{A proof scheme for problem-dependent lower bounds.}
Results offering lower bounds on the regret for CBwK are scarce in the literature.
They typically refer to some minimax regret, and either do so by indicating
that lower bounds for CBwK must be larger in a minimax sense than the ones obtained
for non-contextual bandits with knapsacks (see, e.g., \citealp{Agrawal2016LinearCB,CBwK-LP-2022}),
or by leveraging a minimax regret lower bound for a subroutine of the strategy
used (\citealp{SSF22}).
We rather focus on problem-dependent regret bounds but only offer a proof scheme,
to be made more formal---see Appendix~\ref{app:LB}. Interestingly, this proof scheme
follows closely the analysis of a primal strategy performed in Appendix~\ref{app:primal}.
First, an argument based on the law of the iterated logarithm shows the
necessity of a margin $b_T$ of order $1/\sqrt{T}$ to satisfy
the total $T \bB$ cost constraints. This imposes that
only $T\opt(r,\bc,\bB-b_T\bone)$ is targeted, thus the regret is at least
of order
\[
T \bigl( \opt(r,\bc,\bB) - \opt(r,\bc,\bB-b_T\bone) \bigr) + \sqrt{T}\,,
\qquad \mbox{i.e.}, \qquad
\bigl( 1 + \Arrowvert \lambda^\star_{\bB} \Arrowvert \bigr) \sqrt{T}\,.
\]
This matches the bound of Theorem~\ref{th:DT}, but with $\lambda^\star_{\bB-b_T\bone}$ replaced by
$\lambda^\star_{\bB}$.

We now turn to the list of limitations pertaining to the optimality of our bounds.

\textbf{Limitation 1: Large constants.}
First, we must mention that in the bounds of Lemmas~\ref{lm:C} and~\ref{lm:R},
we did not try to optimize the constants but targeted readability.
The resulting values in Theorem~\ref{th:DT}, namely,
the recommended choice of $b_T$ and the
closed-form bound~\eqref{eq:meta-regret-bound} on the regret,
therefore involve constants that are much larger than needed.

\textbf{Limitation 2: Not capturing possibly fast regret rates in some cases.}
Second, a careful inspection of our proof scheme for lower bounds shows that it only
works in the absence of a null-cost action. When such a null-cost action exists and costs are non-negative,
Example~\ref{ex:alpha} shows that costly actions are played at most
$B/\alpha$ times on average. If the null-cost action has also a null reward,
the costly actions are the only ones generating some stochasticity,
thus, we expect that the $\smash{\sqrt{T}}$ factors (coming, among others, from a version of Bernstein's
inequality) be replaced by $\smash{\sqrt{B/\alpha}}$ factors. As a consequence,
in the setting of Example~\ref{ex:alpha}, bounds
growing much slower than $\smash{\sqrt{T}}$ should be achievable, see the discussion in Appendix~\ref{app:LB}.

\textbf{Limitation 3: $\sqrt{T}$ regret not captured in case of softer constraints.}
For the primal strategy discussed in Appendix~\ref{app:primal}, we could prove a general $\sqrt{T}$ regret bound in the case where total costs smaller than $T \bB + \tO\bigl(\smash{\sqrt{T}}\bigr)$ are allowed.
We were unable to do so for our dual strategy.

\section{Brief overview of the numerical experiments performed}
\label{sec:brief-num}

In Appendix~\ref{sec:num}, we implement a specific version of
the motivating example of Section~\ref{sec:motiv-ex},
as proposed by~\citet{Stanford} in the public repository
\url{https://github.com/stanford-policylab/learning-to-be-fair}.
Fairness costs together with spendings related to rideshare assistance
or transportation vouchers are considered. We report below the performance
of the strategies considered only in terms of average rewards and rideshare costs,
and for the fairness threshold $\tau = 10^{-7}$. The Box~B strategies with fixed
step sizes may fail to control the budget when $\gamma$ is too large: we observe
this for $\gamma = 0.01$. We see overall a tradeoff between larger average rewards and
larger average costs. The Box~C strategy navigates between three regimes,
$\gamma = 0.01$ to start (regime $k=0$), then $\gamma = 0.02$ (regime $k=1$),
and finally $\gamma = 0.04$ (regime $k=2$), which it sticks to. It overall adaptively
finds a decent tuning of $\gamma$. \medskip

\includegraphics[width=\textwidth]{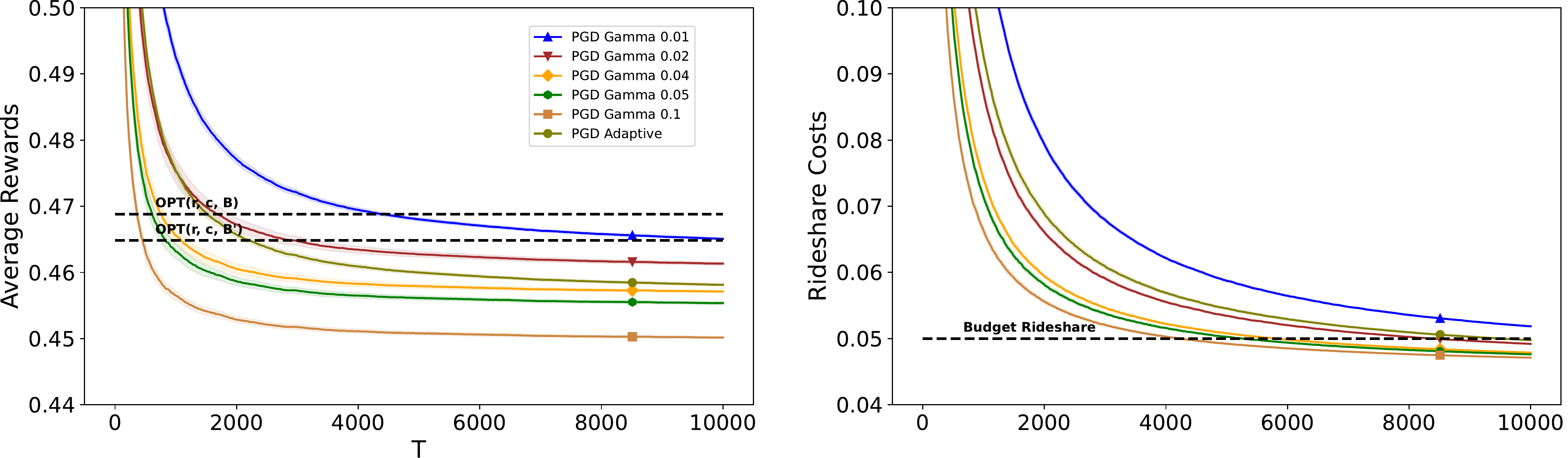}

\begin{ack}
Evgenii Chzhen, Christophe Giraud,
Zhen Li, and Gilles Stoltz have no direct funding to acknowledge
other than the salaries paid by their employers, CNRS, Universit{\'e} Paris-Saclay, BNP Paribas,
and HEC Paris. They have no competing interests to declare.
\end{ack}

\clearpage
\bibliographystyle{plainnat}
\bibliography{Fair-CBwK-Bib}

\clearpage
\appendix

\section{Proof of the strong duality \eqref{eq:lambdastar}}
\label{app:dual-gap}

In this section, we explain why the equalities~\eqref{eq:lambdastar} hold
when the problem $(r,\bc,\bB',\nu)$ is feasible for some $\bB' < \bB$. We
restate first these inequalities for the convenience of the reader:
{\inegduales{\nonumber}{(\star)}}

We deal here with a linear program, with primal variables in
functional space $\cX \to \R^{\cA}$ and dual variables in the finite-dimensional
space $\R^d$.

The first and third equalities are straightforward.
The first equality holds because in $\opt(r,\bc,\bB)$,
we want to consider only policies with expected cost smaller than or equal to $\bB$;
the infimum over $\lambda \geq \bzero$ equals $-\infty$ for policies
with expected costs strictly larger than~$\bB$ and is achieved at $\blambda = 0$
otherwise.
The third inequality follows from identifying that for a given $\blambda$, the best policy
may be defined pointwise as the argument of the maximum written in the expectation.
Thus, only the middle equality~$(\star)$ deserves a proof.
We obtain it by applying a general theorem of
strong duality (which requires feasibility for slightly smaller cost constraints).

\paragraph{Statement of a general theorem of strong duality.}
We restate a result extracted from the monograph by~\citet{Lue69}.
It relies on the dual functional $\varphi$, whose expression we recall below.

\begin{theorem}[stated as Theorem~1 in Section 8.6, page 224 in \citealp{Lue69}]
\label{th:Lue69}
Let $f$ be a real-valued convex functional defined on a convex subset $\Omega$
of a vector space $X$, and let $G$ be a convex mapping of $X$ into a normed space~$Z$. Suppose there exists an $x_1$ such that $G(x_1) < \theta$ and that
$\mu_0 = \inf\bigl\{ f(x): \,\, G(x) \leq \theta, \,\, x \in \Omega\bigr\}$ is
finite. Then
\[
\inf_{\substack{G(x) \leq \theta \\ x \in \Omega}} f(x)
= \max_{z^\star \geq \theta} \, \varphi(z^\star)
\]
and the maximum on the right is achieved by some $z_0^\star \geq \theta$.
\end{theorem}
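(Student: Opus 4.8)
The statement is the classical Lagrange duality theorem, and the plan is to prove it by a geometric Hahn--Banach (supporting-hyperplane) argument in the product space $\R \times Z$. First I would dispose of the easy inequality, \emph{weak duality}: for any $z^\star \geq \theta$ and any admissible $x$ (i.e.\ $x \in \Omega$ with $G(x) \leq \theta$) one has $\langle G(x), z^\star \rangle \leq 0$, hence $\varphi(z^\star) = \inf_{x \in \Omega} \{ f(x) + \langle G(x), z^\star \rangle \} \leq f(x)$; taking the infimum over admissible $x$ gives $\varphi(z^\star) \leq \mu_0$, and therefore $\sup_{z^\star \geq \theta} \varphi(z^\star) \leq \mu_0$. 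It thus remains to exhibit a single $z_0^\star \geq \theta$ with $\varphi(z_0^\star) \geq \mu_0$; combined with weak duality, this yields both the claimed equality and the attainment of the maximum.

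For the reverse inequality I would introduce the convex set
\[
A = \bigl\{ (r, z) \in \R \times Z : \exists\, x \in \Omega,\ f(x) \leq r \ \text{and}\ G(x) \leq z \bigr\},
\]
whose convexity follows from the convexity of $f$, of $G$, and of $\Omega$. The point $(\mu_0, \theta)$ lies on the boundary of $A$ but not in its interior: no pair $(r, \theta)$ with $r < \mu_0$ can belong to $A$, since it would provide an admissible $x$ with $f(x) < \mu_0$. The Slater point $x_1$ with $G(x_1) < \theta$, together with the norm structure of $Z$, guarantees that $A$ has nonempty interior, so the supporting-hyperplane theorem (the geometric Hahn--Banach result available in the same monograph) applies and produces a nonzero continuous linear functional on $\R \times Z$, which I write as $(r,z) \mapsto s_0 r + \langle z, z_0^\star \rangle$ with $s_0 \in \R$ and $z_0^\star \in Z^\star$, satisfying
\[
s_0 r + \langle z, z_0^\star \rangle \geq s_0 \mu_0 \qquad \text{for all } (r,z) \in A .
\]

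From this supporting inequality I would read off the sign conditions by exploiting the monotone structure of $A$: letting $r \to +\infty$ forces $s_0 \geq 0$, and letting the $z$-component increase (which keeps the pair in $A$) forces $z_0^\star$ to be a positive functional, $z_0^\star \geq \theta$. The main obstacle is ruling out the degenerate case $s_0 = 0$, and this is precisely where the Slater condition is indispensable: if $s_0 = 0$ then $z_0^\star \neq 0$, and the inequality would give $\langle G(x_1), z_0^\star \rangle \geq 0$, whereas $G(x_1) < \theta$ strictly together with $z_0^\star \geq \theta$, $z_0^\star \neq 0$ forces $\langle G(x_1), z_0^\star \rangle < 0$, a contradiction. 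Hence $s_0 > 0$, and I may normalize $s_0 = 1$. Taking $(r, z) = (f(x), G(x))$ for arbitrary $x \in \Omega$ then yields $f(x) + \langle G(x), z_0^\star \rangle \geq \mu_0$, so that $\varphi(z_0^\star) \geq \mu_0$; with weak duality this closes the argument and shows the maximum is attained at $z_0^\star$.

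Two points will require care beyond the finite-dimensional intuition. First, establishing that $A$ has nonempty interior is not automatic in a general normed space: it relies on $G(x_1)$ lying strictly below $\theta$, i.e.\ in the interior of the negative cone, which by continuity drags an open ball of $z$-values into $A$. Second, the implication ``$G(x_1) < \theta$ and $z_0^\star \geq \theta$, $z_0^\star \neq 0$ give $\langle G(x_1), z_0^\star \rangle < 0$'' uses that a nonzero positive functional is strictly negative on interior points of the negative cone; this is the quantitative heart of the Slater argument, and the step I would write out most carefully.
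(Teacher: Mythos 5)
The paper itself offers no proof of this statement: it is imported verbatim from \citet{Lue69} (Theorem~1 in Section~8.6, page~224) and used as a black box in Appendix~\ref{app:dual-gap} to obtain the middle equality of~\eqref{eq:lambdastar}. Your argument is correct and is essentially the proof in Luenberger's own monograph: weak duality first, then a supporting-hyperplane separation in $\R \times Z$ of the convex set $A$ at the point $(\mu_0,\theta)$, with the Slater point $x_1$ doing double duty---guaranteeing that $A$ has nonempty interior, and ruling out the degenerate vertical functional $s_0 = 0$ via the lemma that a nonzero positive functional is strictly positive on interior points of the positive cone (a lemma that implicitly requires the positive cone of $Z$ to have nonempty interior, which is also what makes the strict inequality $G(x_1) < \theta$ meaningful in the first place).
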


The dual function $\varphi$ is defined as follows
(\citealp[pages~215, 223, and 224]{Lue69}, which we quote).
We denote by $P$ a positive convex cone $P \subset Z$, and let
$P^\star = \bigl\{ z^\star \in Z^\star : \ \forall\,z \in P, \
\langle z, \, z^\star \rangle \geq 0 \bigr\}$
be its corresponding positive cone in the dual space $Z^\star$.
The dual functional is defined,
for each element $z^\star \in P^\star$, as
\[
\varphi(z^\star) = \inf_{x \in \Omega}
\bigl\{ f(x) + \langle G(x), \, z^\star \rangle \bigr\}
\,.
\]

\paragraph{Application.}
To prove the middle equality~$(\star)$, we apply Theorem~\ref{th:Lue69} with the vector space
$X$ of all functions $\cX \to \R^{\cA}$, its convex subset $\Omega$ formed
by functions $\cX \to \cP(\cA)$, and note that the function $f$ is actually
linear in our situation:
\[
f(\pi) = - \E_{\bX \sim \nu} \!\left[ \sum_{a \in \cA} r(\bX,a)\,\pi_a(\bX)\right].
\]
(We take a $-$ sign to match the form of the minimization problem over $\pi$
in Theorem~\ref{th:Lue69}.)
We take $Z = \R^d = Z^\star$ with positive cones $P = P^\star = [0,+\infty)^d$.
The mapping $G$ is a linear mapping from $X$ to $Z = \R^d$:
\[
G(\pi) = \E_{\bX \sim \nu} \!\left[ \sum_{a \in \cA} \bc(\bX,a)\,\pi_a(\bX)\right] - \bB\,,
\]
and we take $\theta = \bzero$. We have that $\mu_0 = \opt(r,\bc,\bB)$ is indeed finite.
We note that the condition ``$(r,\bc,\bB',\nu)$ is feasible for some $\bB' < \bB$''
is required to apply the theorem. The result follows from noting that we exactly
have, for $z = \blambda \in P^\star$,
\[
\varphi(\blambda) =
\inf_{\bpi : \cX \to \cP(\cA)} \
\E_{\bX \sim \nu} \!\left[ - \sum_{a \in \cA} r(\bX,a)\,\pi_a(\bX) +
\sum_{a \in \cA}
\bigl\langle \bc(\bX,a) - \bB,\,\blambda \bigr\rangle \pi_a(\bX) \right].
\]

\section{Proofs of Lemmas~\ref{lm:C} and~\ref{lm:R}}
\label{app:lem-CR}

In both proofs, we will use the following deviation inequalities,
holding on an event $\cEHaz \cap \cEbeta$ of probability at least $1-\delta/2$,
where the event $\cEbeta$ is defined in Assumption~\ref{ass:est} with a confidence level $1-\delta/4$
and the event $\cEHaz$ is defined below: on $\cEHaz \cap \cEbeta$,
\begin{align}
\label{eq:dev-c}
\forall \, 1 \leq t \leq T, \qquad \qquad \qquad
\sum_{\tau=1}^t \bc_\tau & \leq \bigl( \alpha_{t,\delta/4} + 2 \beta_{t,\delta/4} \bigr) \bone + \sum_{\tau=1}^t \chbc_{\delta/4,\tau-1}(\bx_\tau,a_\tau) \\
\label{eq:dev-r}
\mbox{and} \qquad
\sum_{\tau=1}^t r_\tau & \geq - \bigl( \alpha_{t,\delta/4} + 2 \beta_{t,\delta/4} \bigr) + \sum_{\tau=1}^t \chr_{\delta/4,\tau-1}(\bx_\tau,a_\tau) \,,
\end{align}
where $\beta_{t,\delta/4}$ is also defined in Assumption~\ref{ass:est} and
\[
\alpha_{t,\delta/4} = \sqrt{2 t \ln \frac{2(d+1)T}{\delta/4}}\,.
\]
These inequalities come, on the one hand, by the Hoeffding-Azuma inequality
applied $(d+1)T$ times on the range $[-1,1]$: it ensures that on an event $\cEHaz$ with probability at least $1-\delta/4$,
for all $1 \leq t \leq T$,
\[
\left\Arrowvert \sum_{\tau=1}^t \bc_\tau - \sum_{\tau=1}^t \bc(\bx_\tau,a_\tau) \right\Arrowvert_\infty \leq \alpha_{t,\delta/4} \, \bone
\qquad \mbox{and} \qquad
\left| \sum_{\tau=1}^t r_\tau - \sum_{\tau=1}^t r(\bx_\tau,a_\tau) \right|
\leq \alpha_{t,\delta/4}
\]
(where we denoted by $\Arrowvert\,\cdot\,\Arrowvert_\infty$ the supremum norm).
On the other hand, Assumption~\ref{ass:est} and the clipping~\eqref{eq:csqclip} entail, in particular,
that on an event $\cEbeta$ of probability at least $1-\delta/4$,
for all $1 \leq t \leq T$,
\begin{align*}
\sum_{\tau=1}^t \bc(\bx_\tau,a_\tau) \leq
\sum_{\tau=1}^t \bigl( \chbc_{\delta/4,\tau-1}(\bx_\tau,a_\tau) + 2 \epsilon_{\tau-1}(\bx_\tau,a_\tau,\delta/4) \, \bone \bigr)
& = 2 \beta_{t,\delta/4} \, \bone + \sum_{\tau=1}^t \chbc_{\delta/4,\tau-1}(\bx_\tau,a_\tau) \\
\mbox{and (similarly)} \qquad
\sum_{\tau=1}^t r(\bx_\tau,a_\tau) & \geq - 2 \beta_{t,\delta/4} + \sum_{\tau=1}^t \chr_{\delta/4,\tau-1}(\bx_\tau,a_\tau)\,.
\end{align*}

\subsection{Proof of Lemma~\ref{lm:C}}
For $\tau \geq 1$, by definition of $\blambda_\tau$ in Box~B,
\begin{align*}
\blambda_\tau - \blambda_{\tau-1}
& = \Bigl( \blambda_{\tau-1} + \gamma \, \bigl( \chbc_{\delta/4,\tau-1}(\bx_\tau,a_\tau) - (\bB-b\bone) \bigr) \Bigr)_+ - \blambda_{\tau-1} \\
& \geq \gamma \, \bigl( \chbc_{\delta/4,\tau-1}(\bx_\tau,a_\tau) - (\bB-b\bone) \bigr)\,.
\end{align*}
For $t \geq 1$,
as $\blambda_0 = \bzero$ and $\blambda_t \geq \bzero$,
we get, after telescoping and taking non-negative parts,
\begin{align}
\nonumber
& \blambda_t \geq \gamma \left( \sum_{\tau=1}^t \bigl( \chbc_{\delta/4,\tau-1}(\bx_\tau,a_\tau) - (\bB-b\bone) \bigr) \right)_{\!\! +}, \\
\label{eq:bd-lambdaT-gamma}
\mbox{thus} \qquad
& \left\Arrowvert \left( \sum_{\tau=1}^t \bigl( \chbc_{\delta/4,\tau-1}(\bx_\tau,a_\tau) - (\bB-b\bone) \bigr) \right)_{\!\! +} \right\Arrowvert
\leq \frac{\Arrowvert \blambda_t \Arrowvert}{\gamma}\,.
\end{align}
Up to the deviation terms~\eqref{eq:dev-c},
$\Arrowvert \blambda_t \Arrowvert/\gamma$ bounds
how larger the cost constraints till round $t$ are
from $t(\bB-b\bone)$. Most of the rest of the proof, namely, the Steps 1--4 below, thus focus on
upper bounding the $\Arrowvert \blambda_t \Arrowvert$, while
Step~5 will collect all bounds together and conclude.

\paragraph{Step 1: Gradient-descent analysis.}
We introduce
\begin{equation}
\label{eq:Dct}
\Dc_\tau = \chbc_{\delta/4,\tau-1}(\bx_\tau,a_\tau) - (\bB-b\bone)
\end{equation}
and prove the following deterministic inequality: for all $1 \leq t \leq T$,
\begin{equation}
\label{eq:pgd}
\forall \, \blambda \geq \bzero, \qquad
\Arrowvert \blambda_t - \blambda \Arrowvert^2
\leq \Arrowvert \blambda \Arrowvert^2 + 4d \, \gamma^2 t
+ 2 \gamma \sum_{\tau=1}^t \bigl\langle \Dc_\tau, \, \blambda_{\tau-1}-\blambda \bigr\rangle\,.
\end{equation}

To do so, we proceed as is classical in (projected) gradient-descent analyses; see, e.g., \citet{Zink03}.
Namely, for all $1 \leq \tau \leq T$,
\begin{align*}
2 \bigl\langle - \Dc_\tau, \, \blambda_{\tau-1}-\blambda \bigr\rangle
& = \frac{1}{\gamma} \Bigl( \Arrowvert \blambda_{\tau-1} - \blambda \Arrowvert^2 + \bigl\Arrowvert \gamma \Dc_\tau \bigr\Arrowvert^2
- \bigl\Arrowvert \blambda_{\tau-1}-\blambda + \gamma \Dc_\tau \bigr\Arrowvert^2 \Bigr) \\
& = \gamma \bigl\Arrowvert \Dc_\tau \bigr\Arrowvert^2 + \frac{1}{\gamma} \Bigl( \Arrowvert \blambda_{\tau-1} - \blambda \Arrowvert^2
- \bigl\Arrowvert \blambda_{\tau-1} + \gamma \Dc_\tau - \blambda \bigr\Arrowvert^2 \Bigr) \\
& \leq \gamma \bigl\Arrowvert \Dc_\tau \bigr\Arrowvert^2 + \frac{1}{\gamma} \Bigl( \Arrowvert \blambda_{\tau-1} - \blambda \Arrowvert^2
- \Arrowvert \blambda_\tau - \blambda \Arrowvert^2 \Bigr)\,,
\end{align*}
where the inequality comes from the definition $\blambda_\tau = \bigl( \blambda_{\tau-1} + \gamma \Dc_\tau \bigr)_+$
and the fact that
\[
\forall \, x \in \R, \ \ \forall \, y \geq 0, \qquad \bigl| (x)_+ - y \bigr| \leq |x-y|
\]
(which may be proved by distinguishing the cases $x \leq 0$ and $x \geq 0$).

We note that
\[
\bB-b\bone \in [0,1]^d \quad \mbox{and} \quad
\chbc_{\delta/4,\tau-1}(\bx_\tau,a_\tau) \in [-1,1]^d\,,
\qquad \mbox{so that} \qquad
\bigl\Arrowvert \Dc_\tau \bigr\Arrowvert^2 \leq 4d\,.
\]
Collecting all bounds above, we get,
after summation and telescoping,
\[
\sum_{\tau=1}^t \bigl\langle - \Dc_\tau, \, \blambda_{\tau-1}-\blambda \bigr\rangle
\leq 2 d \, \gamma t + \frac{1}{2 \gamma} \Bigl( \Arrowvert \blambda_{0} - \blambda \Arrowvert^2
- \Arrowvert \blambda_t - \blambda \Arrowvert^2 \Bigr)\,.
\]
Rearranging and substituting $\blambda_{0} = \bzero$ yields the claimed inequality~\eqref{eq:pgd}.

\paragraph{Step 2: Relating estimated costs (and rewards) to true conditional expectations.}
In this part, we upper bound the right-hand side of~\eqref{eq:pgd}
by showing that on the event $\cEbeta$,
\begin{align}
\nonumber
\lefteqn{\forall \, \blambda \geq \bzero, \quad
\forall \, 1 \leq t \leq T,} \\
\label{eq:bd-step2}
\sum_{\tau=1}^t \bigl\langle \Dc_\tau, \, \blambda_{\tau-1}-\blambda \bigr\rangle
& \leq 2 \bigl( 1 + \Arrowvert \blambda \Arrowvert_1 \bigr) \beta_{t,\delta/4}
+ \sum_{\tau=1}^t \bigl( g_\tau(\blambda) - g_\tau(\blambda_{\tau-1}) \bigr)\,, \\
\nonumber
\mbox{where} \qquad
g_\tau(\blambda) & = \max_{a \in \cA} \Bigl\{ r(\bx_\tau,a) -
\bigl\langle \bc(\bx_\tau,a) - (\bB-b\bone), \, \blambda \bigr\rangle \Bigl\}
\end{align}
and where we recall that $\Arrowvert\,\cdot\,\Arrowvert_1$ denotes the $\ell_1$--norm.

Adding and subtracting $\chr_{\delta/4,\tau-1}(\bx_\tau,a_\tau)$, here, we deal with
\begin{multline*}
\sum_{\tau=1}^t \bigl\langle \Dc_\tau, \, \blambda_{\tau-1}-\blambda \bigr\rangle
= \sum_{\tau=1}^t \Bigl( \chr_{\delta/4,\tau-1}(\bx_\tau,a_\tau) -
\bigl\langle \chbc_{\delta/4,\tau-1}(\bx_\tau,a_\tau) - (\bB-b\bone), \, \blambda \bigr\rangle \Bigr) \\
- \sum_{\tau=1}^t \Bigl( \chr_{\delta/4,\tau-1}(\bx_\tau,a_\tau) -
\bigl\langle \chbc_{\delta/4,\tau-1}(\bx_\tau,a_\tau) - (\bB-b\bone), \, \blambda_{\tau-1} \bigr\rangle \Bigr)\,.
\end{multline*}
Now, for each $\tau$, by~\eqref{eq:csqclip} and the fact that $\blambda \geq \bzero$,
\begin{align*}
& \chr_{\delta/4,\tau-1}(\bx_\tau,a_\tau) -
\bigl\langle \chbc_{\delta/4,\tau-1}(\bx_\tau,a_\tau) - (\bB-b\bone), \, \blambda \bigr\rangle \\
\leq & \ r(\bx_\tau,a_\tau) + 2\epsilon_{\tau-1}(\bx_\tau,a_\tau,\delta/4) -
\bigl\langle \bc(\bx_\tau,a_\tau) - (\bB-b\bone), \, \blambda \bigr\rangle
+ 2 \epsilon_{\tau-1}(\bx_\tau,a_\tau,\delta/4) \, \Arrowvert \blambda \Arrowvert_1 \\
\leq & \ g_\tau(\blambda) + 2 \bigl( 1 + \Arrowvert \blambda \Arrowvert_1 \bigr) \epsilon_{\tau-1}(\bx_\tau,a_\tau,\delta/4)\,.
\end{align*}
On the other hand, by definition of $a_\tau$ for the equality,
and then by the other inequalities in~\eqref{eq:csqclip},
for each $\tau$, and the fact that $\blambda_{\tau-1} \geq \bzero$,
\begin{align}
\label{eq:opt-est}
& \chr_{\delta/4,\tau-1}(\bx_\tau,a_\tau) -
\bigl\langle \chbc_{\delta/4,\tau-1}(\bx_\tau,a_\tau) - (\bB-b\bone), \, \blambda_{\tau-1} \bigr\rangle \\
\nonumber
= & \ \max_{a \in \cA} \Bigl\{
\chr_{\delta/4,\tau-1}(\bx_\tau,a) -
\bigl\langle \chbc_{\delta/4,\tau-1}(\bx_\tau,a) - (\bB-b\bone), \, \blambda_{\tau-1} \bigr\rangle \Bigr\} \\
\nonumber
\geq & \ \max_{a \in \cA} \Bigl\{
r(\bx_\tau,a) -
\bigl\langle \bc(\bx_\tau,a) - (\bB-b\bone), \, \blambda_{\tau-1} \bigr\rangle \Bigr\} = g_\tau(\blambda_{\tau-1})\,.
\end{align}
Collecting the two series of bounds concludes this part.

\paragraph{Step 3: Application of a Bernstein-Freedman inequality.}
We recall that we denoted by $\blambda^\star_{\bB-b\bone}$
the optimal dual variable in~\eqref{eq:lambdastar} for $\opt(r,\bc,\bB-b\bone)$;
it exists because we assumed feasibility of a problem with average cost constraints
$\bB' < \bB - b \bone$.

We now upper bound the sum appearing in the right hand side of~\eqref{eq:bd-step2} at $\blambda = \blambda^\star_{\bB-b\bone}$
by showing that on an event $\cEBer$ of probability at least $1-\delta/4$, for all $1 \leq t \leq T$,
\begin{align}
\label{eq:Bern}
& \sum_{\tau=1}^{t}
\bigl( g_\tau(\blambda^\star_{\bB-b\bone}) - g_\tau(\blambda_{\tau-1}) \bigr)
\leq (1+2\Lambda_t)\sqrt{2t \ln \frac{T^2}{\delta/4}} + 2 K_t \ln\frac{T^2}{\delta/4} \,, \\
\nonumber
\mbox{where} \qquad &
\Lambda_t = \max_{1 \leq \tau \leq t} \Arrowvert \blambda^\star_{\bB-b\bone} - \blambda_{\tau-1} \Arrowvert_1
\qquad \mbox{and} \qquad
K_t = 4 \bigl( 1 + \Arrowvert \blambda^\star_{\bB-b\bone} \Arrowvert_1 + 2 d \gamma t \bigr) \,.
\end{align}
We will do so by applying a version of the Bernstein-Freedman inequality for martingales
stated in \citet[Corollary~16]{Bern}
involving the sum of the conditional variances (and not only a deterministic bound thereon);
it is obtained via peeling based on the classic version of Bernstein's inequality~(\citealp{Freedman75}). We restate it
here for the convenience of the reader (after applying some simple boundings).

\begin{lemma}[a version of the Bernstein-Freedman inequality by~\citealp{Bern}]
\label{lm:Ber}
Let $X_1, \, X_2, \, \ldots$ be a martingale difference with respect
to the filtration $\cF = (\cF_s)_{s \geq 0}$ and with increments bounded in absolute values
by $K$. For all $t \geq 1$, let
\[
\mathfrak{S}_t = \sum_{\tau = 1}^t \E\bigl[ X_\tau^2 \,\big|\, \cF_{\tau-1} \bigr]
\]
denote the sum of the conditional variances of the first $t$ increments. Then,
for all $\delta \in (0,1)$ and all $t \geq 1$, with probability at least $1-\delta$,
\[
\sum_{\tau = 1}^t X_\tau \leq \sqrt{2 \mathfrak{S}_t \, \ln \frac{t}{\delta}} + 2 K \ln \frac{t}{\delta}\,.
\]
\end{lemma}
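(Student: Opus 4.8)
The statement is a variance-adaptive version of Bernstein's inequality, and the natural route is to upgrade the classical Freedman inequality (which controls $\sum_{\tau\leq t}X_\tau$ in terms of a \emph{deterministic} upper bound on the conditional-variance sum) into a bound featuring the \emph{random} quantity $\mathfrak{S}_t$ itself. The plan is therefore in two stages: first record the fixed-variance tail bound, then remove the need for a deterministic variance proxy by a peeling (stratification) argument over a geometric grid of variance levels, paying only a logarithmic union-bound price.

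First I would invoke Freedman's inequality in the form: for every fixed $v>0$ and every $x>0$,
\[
\mathbb{P}\!\left( \sum_{\tau=1}^t X_\tau \geq x \ \text{ and }\ \mathfrak{S}_t \leq v \right)
\leq \exp\!\left( -\frac{x^2}{2\bigl(v + Kx/3\bigr)} \right).
\]
Inverting this bound is routine: requiring the right-hand side to be at most $\delta'$ and solving the resulting quadratic inequality (using $\sqrt{a+b}\leq\sqrt a+\sqrt b$ to split the two regimes) shows that, with probability at least $1-\delta'$, on the event $\{\mathfrak{S}_t\leq v\}$ one has $\sum_{\tau\leq t}X_\tau \leq \sqrt{2v\ln(1/\delta')} + (2K/3)\ln(1/\delta')$. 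This already has the shape of the target bound, but with the unknown random $\mathfrak{S}_t$ replaced by an arbitrary deterministic ceiling $v$.

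The second and decisive step is peeling. Since $|X_\tau|\leq K$ forces $\E[X_\tau^2\mid\cF_{\tau-1}]\leq K^2$, we always have $0\leq\mathfrak{S}_t\leq K^2 t$. I would cover this range by a dyadic grid $v_j = K^2\,2^{\,j}$ for $j=0,1,\dots,J$ with $J=\ilog t$, together with the bottom stratum $\{\mathfrak{S}_t\leq K^2\}$, so that there are only $O(\log t)$ strata. Applying the fixed-$v$ bound on each stratum at level $\delta/(J+1)$ and taking a union bound, I would argue that on the (random) stratum actually containing $\mathfrak{S}_t$ the geometric spacing gives $v_j \leq 2\,\mathfrak{S}_t$, so that $\sqrt{2 v_j\,\ln\bigl((J+1)/\delta\bigr)}$ is controlled by $\sqrt{\mathfrak{S}_t\,\ln\bigl((J+1)/\delta\bigr)}$ up to a universal constant; the bottom stratum (near-zero variance) is absorbed entirely into the additive $K\ln(\cdot)$ term. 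Finally, since $J+1 = O(\log t)$, the union-bound penalty satisfies $\ln\bigl((J+1)/\delta\bigr)\leq\ln(t/\delta)$, which is exactly the logarithmic factor appearing in the statement.

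I expect the main obstacle to be bookkeeping rather than conceptual: squeezing the peeling so that the leading constant comes out as the stated $\sqrt 2$ (a coarse dyadic grid gives a worse constant, and matching $\sqrt 2$ either requires letting the grid ratio tend to $1$ while checking that the extra strata stay within the $\ln(t/\delta)$ budget, or appealing directly to the refined constants of Corollary~16 in~\citet{Bern}), and cleanly merging the two regimes of Freedman's inverted bound (the $\sqrt{v}$ sub-Gaussian regime and the linear-in-$K$ sub-exponential regime) so that the final statement carries a single $\sqrt{2\mathfrak{S}_t\ln(t/\delta)}$ term plus a single $2K\ln(t/\delta)$ term. These are the \emph{simple boundings} alluded to in the text, and none of them affects the structure of the argument.
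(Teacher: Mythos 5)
For calibration: the paper does not prove this lemma at all --- it is quoted, ``after applying some simple boundings,'' from \citet[Corollary~16]{Bern}, together with the remark that the cited result follows from Freedman's inequality \citep{Freedman75} via peeling. Your overall route (invert the fixed-ceiling Freedman bound, then peel over variance levels) is therefore exactly the provenance the paper describes, and your first stage is correct: for fixed $v$, with probability at least $1-\delta'$, either $\mathfrak{S}_t > v$ or $\sum_{\tau \leq t} X_\tau \leq \sqrt{2v\ln(1/\delta')} + (2K/3)\ln(1/\delta')$.

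The genuine gap is in the peeling stage, and it is more than bookkeeping: the dyadic grid cannot deliver the stated inequality. On the stratum containing $\mathfrak{S}_t$ you only have $v_j \leq 2\mathfrak{S}_t$, so your leading term is $2\sqrt{\mathfrak{S}_t\ln\bigl((J+1)/\delta\bigr)}$, and you propose to pay for this doubled variance with the smaller union bound ($J+1 = O(\log t)$ strata). That trade fails whenever $\ln(1/\delta) \gtrsim \ln t$: then $\ln\bigl((J+1)/\delta\bigr)$ is comparable to $\ln(t/\delta)$, your leading constant is effectively $2$ instead of $\sqrt{2}$, and the excess --- of order $K\sqrt{t\ln(1/\delta)}$ when $\mathfrak{S}_t$ is of order $K^2 t$ --- cannot be absorbed by the additive budget, which is only logarithmic in $t$. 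This is not an exotic regime: the paper invokes the lemma with confidence level $\delta/(4T)$ for each $t \leq T$, i.e., precisely with $\ln(1/\delta') \geq \ln t$. The fix is to read the $t$ inside the stated logarithm not as slack to be exploited but as the number of strata: peel over the arithmetic grid $v_j = jK^2$, $j = 1,\ldots,t$ (which covers the range since $\mathfrak{S}_t \leq K^2 t$), allocating $\delta/t$ to each stratum. Then on the good stratum $v_j \leq \mathfrak{S}_t + K^2$, so
\[
\sqrt{2 v_j \ln(t/\delta)} \leq \sqrt{2\mathfrak{S}_t\ln(t/\delta)} + K\sqrt{2\ln(t/\delta)}
\leq \sqrt{2\mathfrak{S}_t\ln(t/\delta)} + \tfrac{4}{3}\,K\ln(t/\delta)
\]
(the last step for $\ln(t/\delta) \geq 9/8$), and this additive slack is exactly what the gap between Freedman's $2K/3$ and the stated $2K$ is budgeted for. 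Your fallback of ``letting the grid ratio tend to $1$'' can be made to work --- with ratio $1+O(\ln t/t)$ it degenerates into essentially this arithmetic grid --- but you did not carry it out; and your other fallback, appealing to the constants of Corollary~16 of \citet{Bern}, is circular here, since that corollary is the statement to be proved.
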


We introduce, for all $\blambda \geq \bzero$, the common expectation of the $g_\tau(\blambda)$,
namely,
\[
G(\blambda) = \E\bigl[ g_\tau(\blambda) \bigr] = \E_{\bX \sim \nu} \biggl[ \max_{a \in \cA} \Bigl\{ r(\bX,a) -
\bigl\langle \bc(\bX,a) - (\bB-b\bone), \, \blambda \bigr\rangle \Bigl\} \biggr] \,,
\]
and consider the martingale increments
\[
X_\tau = \bigl( g_\tau(\blambda^\star_{\bB-b\bone}) - g_\tau(\blambda_{\tau-1}) \bigr) -
\bigl( G(\blambda^\star_{\bB-b\bone}) - G(\blambda_{\tau-1}) \bigr)\,.
\]
As $\bB-b\bone \in [0,1]^d$ and $\bc$ takes values in $[-1,1]^d$,
for all $\bx \in \cX$, all $a \in \cA$, and all $\bv \in \R^d$, the quantities $r(\bx,a) -
\bigl\langle \bc(\bx,a) - (\bB-b\bone), \, \bv \bigr\rangle$ take absolute values
smaller than $1 + 2 \Arrowvert \bv \Arrowvert_1$.
Using that a difference of maxima is smaller than the maximum of the differences, we get,
in particular,
\begin{align}
\label{eq:cond-range}
\bigl| g_\tau(\blambda^\star_{\bB-b\bone}) - g_\tau(\blambda_{\tau-1}) \bigr|
& \leq 1 + 2 \Arrowvert \blambda^\star_{\bB-b\bone} -  \blambda_{\tau-1} \Arrowvert_1 \quad \mbox{a.s.} \\
\nonumber
\mbox{and} \qquad \qquad
\bigl| G(\blambda^\star_{\bB-b\bone}) - G(\blambda_{\tau-1}) \bigr|
& \leq 1 + 2 \Arrowvert \blambda^\star_{\bB-b\bone} -  \blambda_{\tau-1} \Arrowvert_1\,.
\end{align}
Now, given the update step in Step~3 of Box~B,
we have the deterministic bound $\Arrowvert \blambda_{t} \Arrowvert_1 \leq 2 d \gamma t$
for all $1 \leq t \leq T$. Therefore, by a triangle inequality, the martingale increments
are bounded in absolute values by $K_t$, as
\[
2 \Bigl( 1 + 2 \Arrowvert \blambda^\star_{\bB-b\bone} \Arrowvert_1 + 1 + 2 \max_{\tau \leq t} \Arrowvert \blambda_{\tau-1} \Arrowvert_1 \Bigr)
\leq 4 \bigl( 1 + \Arrowvert \blambda^\star_{\bB-b\bone} \Arrowvert_1 + 2 d \gamma t \bigr) = K_t\,.
\]
The conditional variance of $X_\tau$ is smaller than the squared half-width of the conditional range
(Popoviciu's inequality on variances);
in particular, \eqref{eq:cond-range} thus entails
\[
\mathfrak{S}_t
\leq \sum_{\tau=1}^t \bigl( 1 + 2 \Arrowvert \blambda^\star_{\bB-b\bone} - \blambda_{\tau-1} \Arrowvert_1 \bigr)^2
\leq (1 + 2 \Lambda_t)^2 t \,.
\]
We now get the claimed inequalities~\eqref{eq:Bern}
first by noting that by~\eqref{eq:lambdastar}, for all $\tau \leq T$,
\[
G(\blambda^\star_{\bB-b\bone}) - G(\blambda_{\tau-1}) \leq 0\,,
\]
and second, by
applying Lemma~\ref{lm:Ber} for each $1 \leq t \leq T$, using a confidence level $\delta/(4T)$.
By a union bound, this indeed defines an event $\cEBer$ of probability at least $1-\delta/4$.

\paragraph{Step 4: Induction to bound the $\Lambda_t$.}
In this step, we show by induction that, with high probability, the
norms $\Arrowvert \blambda^\star_{\bB-b\bone} - \blambda_t \Arrowvert$
satisfy the bound~\eqref{eq:induction-res2} stated below.

To do so, we combine the outcomes of Steps 1--3 and obtain that
on the event $\cEbeta \cap \cEBer$,
for all $1 \leq t \leq T$,
\begin{align*}
& \Arrowvert \blambda^\star_{\bB-b\bone} - \blambda_t \Arrowvert^2 \\
\leq \ & \Arrowvert \blambda^\star_{\bB-b\bone} \Arrowvert^2 + 4d \, \gamma^2 t
+ 2 \gamma \left( 2 \bigl( 1 + \Arrowvert \blambda^\star_{\bB-b\bone} \Arrowvert_1 \bigr) \beta_{t,\delta/4}
+ \sum_{\tau=1}^t \bigl( g_\tau(\blambda^\star_{\bB-b\bone}) - g_\tau(\blambda_{\tau-1}) \bigr) \right) \\
\leq \ & \Arrowvert \blambda^\star_{\bB-b\bone} \Arrowvert^2 + 4d \, \gamma^2 t
+ 4 \gamma \bigl( 1 + \Arrowvert \blambda^\star_{\bB-b\bone} \Arrowvert_1 \bigr) \beta_{t,\delta/4}
+ 2 \gamma (1+2\Lambda_t) \sqrt{2t \ln \frac{T^2}{\delta/4}} + 4 \gamma K_t \ln\frac{T^2}{\delta/4} \,,
\end{align*}
where we recall that norms not indexed by a subscript are Euclidean norms, and
\[
\Lambda_t = \max_{1 \leq \tau \leq t} \Arrowvert \blambda^\star_{\bB-b\bone} - \blambda_{\tau-1} \Arrowvert_1
\qquad \mbox{and} \qquad
K_t = 4 \bigl( 1 + \Arrowvert \blambda^\star_{\bB-b\bone} \Arrowvert_1 + 2 d \gamma t \bigr) \,.
\]
We upper bound $\Arrowvert \blambda^\star_{\bB-b\bone} \Arrowvert_1$ and $\Lambda_t$ in terms of Euclidean norms,
\[
\Arrowvert \blambda^\star_{\bB-b\bone} \Arrowvert_1 \leq \sqrt{d} \, \Arrowvert \blambda^\star_{\bB-b\bone} \Arrowvert
\qquad \mbox{and} \qquad
\Lambda_t \leq \sqrt{d} \, \max_{1 \leq \tau \leq t} \Arrowvert \blambda^\star_{\bB-b\bone} - \blambda_{\tau-1} \Arrowvert\,,
\]
perform some crude boundings like $\sqrt{t} \leq \sqrt{T}$ and $\beta_{t,\delta/4} \leq \beta_{T,\delta/4}$,
and obtain the following induction relationship: on the event $\cEbeta \cap \cEBer$,
for all $1 \leq t \leq T$,
\begin{equation}
\label{eq:induction}
\Arrowvert \blambda^\star_{\bB-b\bone} - \blambda_t \Arrowvert^2
\leq A + B\,t + C\,\max_{1 \leq \tau \leq t} \Arrowvert \blambda^\star_{\bB-b\bone} - \blambda_{\tau-1} \Arrowvert \,,
\end{equation}
where
\begin{align*}
A & = \Arrowvert \blambda^\star_{\bB-b\bone} \Arrowvert^2
+ \gamma \left(
4 \bigl( 1 + \sqrt{d}\,\Arrowvert \blambda^\star_{\bB-b\bone} \Arrowvert \bigr) \beta_{T,\delta/4}
+ 2 \sqrt{2T \ln \frac{T^2}{\delta/4}}
+ 16 \bigl( 1 + \sqrt{d}\,\Arrowvert \blambda^\star_{\bB-b\bone} \Arrowvert \bigr){\ln\frac{T^2}{\delta/4}} \right), \\
B & = 4d \gamma^2 + 4 \times 4 \times 2 d \gamma^2 {\ln\frac{T^2}{\delta/4}} = \left(4 + 32\ln\frac{T^2}{\delta/4}\right) {d\gamma^2} \leq 36 \, {d\gamma^2}\ln\frac{T^2}{\delta/4} \,, \\
C & = 4 \gamma \sqrt{2 d T \ln \frac{T^2}{\delta/4}}\,.
\end{align*}
We now show that~\eqref{eq:induction} implies that on $\cEbeta \cap \cEBer$,
\begin{equation}
\label{eq:induction-res}
\forall \, 0 \leq t \leq T, \qquad
\Arrowvert \blambda^\star_{\bB-b\bone} - \blambda_t \Arrowvert \leq M \eqdef \frac{C}{2} + \sqrt{A + BT + \frac{C^2}{4}}\,.
\end{equation}
Indeed, for $t = 0$, given that $\blambda_0 = \bzero$, we have
$\Arrowvert \blambda^\star_{\bB-b\bone} - \blambda_t \Arrowvert = \Arrowvert \blambda^\star_{\bB-b\bone} \Arrowvert
\leq \sqrt{A}$. Now, if the bound~\eqref{eq:induction-res} is satisfied for all $0 \leq \tau \leq t$, where $0 \leq t \leq T-1$,
then~\eqref{eq:induction} implies that
\[
\Arrowvert \blambda^\star_{\bB-b\bone} - \blambda_{t+1} \Arrowvert
\leq A + B\,(t+1) + C M \leq A + B\,T + C M \leq M^2\,,
\]
where the final inequality follows from the fact that (by definition of $M$, and this explains how we picked~$M$)
\[
M^2 - CM = \left( M - \frac{C}{2} \right)^{\!\! 2} + \frac{C^2}{4} = A + BT\,.
\]
Below, we will make repeated uses of $\sqrt{x+y} \leq \sqrt{x} + \sqrt{y}$, of $xy \leq 2(x^2+y^2)$, and
of $\sqrt{x} \leq 1+x$, for $x,y \geq 0$.
From~\eqref{eq:induction-res}, we conclude that on the event $\cEbeta \cap \cEBer$,
\begin{align*}
\forall \, 0 \leq t \leq T, \qquad
\Arrowvert \blambda^\star_{\bB-b\bone} - \blambda_t \Arrowvert &  \leq \sqrt{A} + \sqrt{BT} + C \\
& = \sqrt{A} + 6 \gamma \sqrt{dT\ln\frac{T^2}{\delta/4}}  + 4 \gamma \sqrt{2 d T \ln \frac{T^2}{\delta/4}}
\leq \sqrt{A} + 6 \gamma \beta'_{T,\delta/4}\,,
\end{align*}
where we denoted
\[
\beta'_{T,\delta/4} = \max \left\{ \beta_{T,\delta/4}, \,\, 2 \sqrt{d T \ln \frac{T^2}{\delta/4}} \right\}
\geq \ln\frac{T^2}{\delta/4}\,.
\]
For the sake of readability, we may further bound $\sqrt{A}$ as follows (in some crude way):
\begin{align*}
\sqrt{A} & \leq
\Arrowvert \blambda^\star_{\bB-b\bone} \Arrowvert
+ \sqrt{\gamma} \sqrt{4 \bigl( 1 + \sqrt{d}\,\Arrowvert \blambda^\star_{\bB-b\bone} \Arrowvert \bigr) \beta_{T,\delta/4}
+ 2 \sqrt{2T \ln \frac{T^2}{\delta/4}}
+ 16 \bigl( 1 + \sqrt{d}\,\Arrowvert \blambda^\star_{\bB-b\bone} \Arrowvert \bigr)\ln\frac{T^2}{\delta/4}} \\
& \leq \Arrowvert \blambda^\star_{\bB-b\bone} \Arrowvert
+ \sqrt{\gamma} \sqrt{22 \beta'_{T,\delta/4} +
20 \sqrt{d}\,\Arrowvert \blambda^\star_{\bB-b\bone} \Arrowvert \beta'_{T,\delta/4}} \\
& \leq \Arrowvert \blambda^\star_{\bB-b\bone} \Arrowvert
+ 1 + 6 \gamma \beta'_{T,\delta/4} +
\Arrowvert \blambda^\star_{\bB-b\bone} \Arrowvert +
5 \gamma \sqrt{d} \, \beta'_{T,\delta/4}\,,
\end{align*}
where we used the facts that $\sqrt{20xy} = 2\sqrt{5xy} \leq x + 5y$ and $\sqrt{22x} = 2 \sqrt{22x/4} \leq 1 + 6x$.


All in all, we proved that on the event $\cEbeta \cap \cEBer$,
\begin{equation}
\label{eq:induction-res2}
\forall \, 0 \leq t \leq T, \qquad
\Arrowvert \blambda^\star_{\bB-b\bone} - \blambda_t \Arrowvert \leq
2 \Arrowvert \blambda^\star_{\bB-b\bone} \Arrowvert
+ 17 \gamma \sqrt{d} \, \beta'_{T,\delta/4}
+ 1\,.
\end{equation}

\paragraph{Step 5: Conclusion.}
We combine the bound~\eqref{eq:induction-res2} with the bound~\eqref{eq:bd-lambdaT-gamma}
of Step~1 and the bound~\eqref{eq:dev-c}: on the intersection of events $\cEbeta \cap \cEBer \cap \cEHaz$, which has a probability
at least $1-3\delta/4$, for all $1 \leq t \leq T$,
\begin{align*}
\left\Arrowvert \left( \sum_{\tau=1}^t \bc_\tau - t(\bB-b\bone) \right)_{\!\! +} \right\Arrowvert
& \leq \sqrt{d} \bigl( \alpha_{T,\delta/4} + 2 \beta_{T,\delta/4} \bigr) + \frac{\Arrowvert \blambda_t \Arrowvert}{\gamma} \\
& \leq \sqrt{d} \bigl( \alpha_{T,\delta/4} + 2 \beta_{T,\delta/4} \bigr)
+ \frac{\Arrowvert \blambda^\star_{\bB-b\bone} \Arrowvert}{\gamma} + \frac{\Arrowvert \blambda^\star_{\bB-b\bone} - \blambda_t \Arrowvert}{\gamma} \\
& \leq \frac{3\Arrowvert \blambda^\star_{\bB-b\bone} \Arrowvert+1}{\gamma}
+ \sqrt{d} \bigl( \alpha_{T,\delta/4} + 19 \beta'_{T,\delta/4} \bigr)
\,.
\end{align*}
This entails in particular the stated result, given the definition of $\dev_{T,\delta}$ as $\max\bigl\{\beta'_{T,\delta/4},\,\alpha_{T,\delta/4}\bigr\}$.

\subsection{Proof of Lemma~\ref{lm:R}}

The proof is similar to (but much simpler and shorter than) the one of Lemma~\ref{lm:C} and borrows some of its arguments.
We use throughout this section the notation introduced therein; we also define a new event
$\cEBerr$ of probability at least $1-\delta/4$.

We start from~\eqref{eq:dev-r} and introduce
the same $\Dc_t$ quantity as in~\eqref{eq:Dct}:
on the event $\cEHaz \cap \cEbeta$,
for all $1 \leq t \leq T$,
\begin{align*}
\sum_{\tau=1}^t r_\tau & \geq - \bigl( \alpha_{t,\delta/4} + 2 \beta_{t,\delta/4} \bigr) + \sum_{\tau=1}^t \chr_{\delta/4,\tau-1}(\bx_\tau,a_\tau) \\
& \geq - \bigl( \alpha_{t,\delta/4} + 2 \beta_{t,\delta/4} \bigr) + \sum_{\tau=1}^t \Bigl( \chr_{\delta/4,\tau-1}(\bx_\tau,a_\tau) - \bigl\langle
\Dc_\tau, \, \blambda_{\tau-1} \bigr\rangle \Bigr)
+ \sum_{\tau=1}^t \bigl\langle \Dc_\tau, \, \blambda_{\tau-1} \bigr\rangle\,.
\end{align*}
On the one hand, the result~\eqref{eq:pgd} with $\blambda = \bzero$ exactly states that
\[
\sum_{\tau=1}^t \bigl\langle \Dc_\tau, \, \blambda_{\tau-1} \bigr\rangle
\geq \frac{\Arrowvert \blambda_t \Arrowvert^2}{2\gamma}
- 2d \, \gamma t \geq - 2d \, \gamma t\,.
\]
On the other hand, the result~\eqref{eq:opt-est} states that
on $\cEbeta$, for all $1 \leq \tau \leq T$,
\begin{align*}
\chr_{\delta/4,\tau-1}(\bx_\tau,a_\tau) - \bigl\langle
\Dc_\tau, \, \blambda_{\tau-1} \bigr\rangle & =
\chr_{\delta/4,\tau-1}(\bx_\tau,a_\tau) -
\bigl\langle \chbc_{\delta/4,\tau-1}(\bx_\tau,a_\tau) - (\bB-b\bone), \, \blambda_{\tau-1} \bigr\rangle \\
& \geq g_\tau(\blambda_{\tau-1})\,.
\end{align*}
A similar application of Lemma~\ref{lm:Ber} as in the proof of Lemma~\ref{lm:C}
shows that on a new event $\cEBerr$ of probability at least $1-\delta/4$,
for all $1 \leq t \leq T$,
\[
\sum_{\tau=1}^t g_\tau(\blambda_{\tau-1})
\geq \sum_{\tau=1}^t G(\blambda_{\tau-1}) - \sqrt{2 \sum_{\tau=1}^t \bigl( 1 + 2 \Arrowvert \blambda_{\tau-1} \Arrowvert_1 \bigl)^2 \ln \frac{T^2}{\delta/4}}
- 8 (1 + 2 \gamma t) \ln \frac{T^2}{\delta/4}\,.
\]
We relate $\ell^1$--norms to Euclidean norms,
resort to a triangle inequality, and substitute~\eqref{eq:induction-res2} to get that on $\cEBer$,
for all $1 \leq t \leq T$,
\begin{align*}
- \sqrt{2 \sum_{\tau=1}^t \bigl( 1 + 2 \Arrowvert \blambda_{\tau-1} \Arrowvert_1 \bigl)^2 \ln \frac{T^2}{\delta/4}}
& \geq
- \biggl( 1 + 2 \sqrt{d} \max_{0 \leq \tau \leq t-1} \Arrowvert \blambda_{\tau-1} \Arrowvert \biggr) \sqrt{2 t \ln \frac{T^2}{\delta/4}} \\
& \hspace{-2.5cm} \geq
- \biggl( 1 + 2 \sqrt{d} \, \Arrowvert \blambda^\star_{\bB-b\bone} \Arrowvert
+ 2 \sqrt{d} \max_{0 \leq \tau \leq t-1} \Arrowvert \blambda^\star_{\bB-b\bone} - \blambda_{\tau-1} \Arrowvert \biggr) \sqrt{2 t \ln \frac{T^2}{\delta/4}} \\
& \hspace{-2.5cm} \geq
- \biggl( 8 \sqrt{d} \, \Arrowvert \blambda^\star_{\bB-b\bone} \Arrowvert
+ 34 \gamma d \, \beta'_{T,\delta/4}+ 2\sqrt{d} + 1 \biggr) \sqrt{2 t \ln \frac{T^2}{\delta/4}} \\
& \hspace{-2.5cm} \geq
- \biggl( 8 \sqrt{d} \, \Arrowvert \blambda^\star_{\bB-b\bone} \Arrowvert
+ 34 \gamma d \, \beta'_{T,\delta/4}+ 4\sqrt{d} \biggr) \sqrt{2 t \ln \frac{T^2}{\delta/4}} \\
& \hspace{-2.5cm} \geq - 6 \Arrowvert \blambda^\star_{\bB-b\bone} \Arrowvert \, \beta'_{T,\delta/4}
- 25 \gamma \sqrt{d}\, \bigl( \beta'_{T,\delta/4} \bigr)^2
- 2\sqrt{2}\,\beta'_{T,\delta/4} \\
& \hspace{-2.5cm} \geq - 6 \Arrowvert \blambda^\star_{\bB-b\bone} \Arrowvert \, \beta'_{T,\delta/4}
- 28 \gamma \sqrt{d}\, \bigl( \beta'_{T,\delta/4} \bigr)^2 \,,
\end{align*}

where we performed some crude boundings using the definition of $1 \leq \beta'_{t,\delta/4} \leq \beta'_{T,\delta/4}$.
We also note that
\[
8 (1 + 2 \gamma t) \ln \frac{T^2}{\delta/4} \leq 8 \ln \frac{T^2}{\delta/4} + 4 \gamma \bigl( \beta'_{T,\delta/4} \bigr)^2\,.
\]

By~\eqref{eq:lambdastar}, we have $\opt(r,\bc,\bB-b\bone) = G(\blambda^\star_{\bB-b\bone}) \leq G(\blambda)$,
for all $\blambda \geq \bzero$. Also,
\begin{align*}
\lefteqn{G(\blambda^\star_{\bB-b\bone}) + b \, \Arrowvert \blambda^\star_{\bB-b\bone} \Arrowvert_1} \\
& = \E_{\bX \sim \nu} \biggl[ \max_{a \in \cA} \Bigl\{ r(\bX,a) -
\bigl\langle \bc(\bX,a) - (\bB-b\bone), \, \blambda^\star_{\bB-b\bone} \bigr\rangle \Bigl\} \biggr] + b \, \Arrowvert \blambda^\star_{\bB-b\bone} \Arrowvert_1 \\
& = \E_{\bX \sim \nu} \biggl[ \max_{a \in \cA} \Bigl\{ r(\bX,a) -
\bigl\langle \bc(\bX,a) - \bB, \, \blambda^\star_{\bB-b\bone} \bigr\rangle \Bigl\} \biggr]
\geq \opt(r,\bc,\bB)\,,
\end{align*}
where we used again~\eqref{eq:lambdastar}. In particular,
\[
\sum_{\tau=1}^t G(\blambda_{\tau-1}) \geq t \, \opt(r,\bc,\bB) - t \, b \, \Arrowvert \blambda^\star_{\bB-b\bone} \Arrowvert_1
\geq t \, \opt(r,\bc,\bB) - t \, b \sqrt{d}\, \Arrowvert \blambda^\star_{\bB-b\bone} \Arrowvert_1
\,.
\]

Collecting all bounds above and
using the definition of $\dev_{T,\delta}$ as $\max\bigl\{\beta'_{T,\delta/4},\,\alpha_{T,\delta/4}\bigr\}$
and the fact that $\smash{dt \leq (\dev_{T,\delta})^2}$,
we proved the following. On $\cEHaz \cap \cEbeta \cap \cEBer \cap \cEBerr$,
which is indeed an event with probability at least $1-\delta$,
for all $1 \leq t \leq T$,
\begin{align*}
\lefteqn{\sum_{\tau=1}^t r_\tau} \\
& \geq - 3 \dev_{T,\delta}
- 2d \, \gamma t
+ \sum_{\tau=1}^t G(\blambda_{\tau-1})
- \sqrt{2 \sum_{t=1}^T \bigl( 1 + 2 \Arrowvert \blambda_{t-1} \Arrowvert_1 \bigl)^2 \ln \frac{T^2}{\delta/4}}
- 8 \ln \frac{T^2}{\delta/4} - 4 \gamma \bigl( \beta'_{T,\delta/4} \bigr)^2 \\
& \geq t \, \opt(r,\bc,\bB) - \Arrowvert \blambda^\star_{\bB-b\bone} \Arrowvert \Bigl( t\,b \sqrt{d} + 6 \dev_{T,\delta} \Bigr)
- 36 \gamma \sqrt{d}\, \bigl( \dev_{T,\delta} \bigr)^2 - 8 \ln \frac{T^2}{\delta/4}\,.
\end{align*}
This entails in particular the stated result.

\section{Proof of Theorem~\ref{th:DT}}
\label{app:DT}

\begin{algorithm}[t!]
\caption{Pseudo-code for the Box~C strategy}\label{algo:main}
\SetKwInput{Input}{Input}
   \SetKwInOut{Output}{Output}
   \SetKwInput{Initialization}{Initialization}
   \Input{number of rounds $T$; confidence level $1-\delta$; margin $b$ on the average constraints; estimation procedure and error functions $\epsilon_t$
of Assumption~\ref{ass:est}; optimistic estimates~\eqref{eq:defclip}}
   \Initialization{$T_0 = 1$; sequence $\gamma_k = 2^k/\sqrt{T}$ of step sizes; \\
   \phantom{\textbf{Initialization:}}sequence $M_{T,\delta,k} = 4 \sqrt{T} + 20 \sqrt{d} \, \dev_{T,\delta/(k+2)^2}$ of cost deviations}

\For(\tcp*[f]{{\small Box C part}}){$k \geq 0$}{

$\blambda_{T_k - 1} = \bzero$;

\For(\tcp*[f]{{\small Box B part}}){$t \geq T_k$}{

   \If{$t = T$}{ Terminate algorithm;}

   Observe the context $\bx_t$;

   Pick an action
\begin{align*}
a_t \in \argmax_{a \in \cA} \Bigl\{ \chr_{\delta,t-1}(\bx_t,a) -
\bigl\langle \chbc_{\delta,t-1}(\bx_t,a) - (\bB - b \bone), \, \blambda_{t-1} \bigr\rangle \Bigl\} \,;
\end{align*}

 Observe the payoff $r_t$ and the costs $\bc_t$;

 Compute $\blambda_t = \Bigl( \blambda_{t-1} +
\gamma_k \, \bigl( \chbc_{\delta,t-1}(\bx_t,a_t) - (\bB-b\bone) \bigr) \Bigr)_+\,$\tcp*[r]{{\small make PGD update}}

Compute the estimates $\chr_{\delta,t}$ and $\chbc_{\delta,t}$;

\If(\tcp*[f]{{\small Box C part}}){$\displaystyle{\left\Arrowvert \Bigg( \sum_{\tau = T_k}^t \bc_\tau - (t-T_k+1)\,(\bB-b\bone) \Bigg)_{\!\! +} \right\Arrowvert > M_{T,\delta,k}}$}{
   Break inner for loop\tcp*[r]{{\small finish phase $k$ and move to phase $k+1$}}

   $T_{k+1} = t+1$\tcp*[r]{{\small record beginning of phase $k+1$}}
}
}

    }

\end{algorithm}

The proof is divided into three steps:
on a favorable event $\cEmeta$ of probability at least $1-\delta$, \newline
(i) we bound by $\nlog T$ the index of the last regime achieved in Box~C; \newline
(ii) we bound by $(1+\nlog T) \sqrt{T}$ the excess cumulative costs with respect to
$T(\bB-b_T\bone)$ and deduce that the cumulative costs are smaller than $B$; \newline
(iii) we provide a regret bound, by summing the bounds guaranteed by Lemma~\ref{lm:C} over each regime.

The favorable event $\cEmeta$ is defined as follows.
By construction, and thanks to the assumption of feasibility for
$\bB-2 b_T\bone$, the results of Lemmas~\ref{lm:C} and~\ref{lm:R}
hold with probability at least $1 - \delta/(k+2)^2$
at each round of each regime~$k \geq 0$ that is actually achieved.
Given that
\[
\sum_{k \geq 0} \frac{1}{(k+2)^2} \leq \sum_{k \geq 0} \frac{1}{(k+1)(k+2)} = 1\,,
\]
we may define $\cEmeta$ as the event indicating that the
(uniform-in-time) bounds of Lemmas~\ref{lm:C} and~\ref{lm:R} are
satisfied within each of the regimes achieved.

\subsection{Step~1: Bounding the number of regimes achieved}

We show that, on $\cEmeta$, if regime $K = \nlog \Arrowvert \blambda^\star_{\bB-b_T\bone} \Arrowvert$
is achieved, then this regime does not stop. Hence, on $\cEmeta$,
at most $K+1 \leq 1 + \nlog \Arrowvert \blambda^\star_{\bB-b_T\bone} \Arrowvert$ regimes take place.

Indeed, in regime $K = \nlog \Arrowvert \blambda^\star_{\bB-b_T\bone} \Arrowvert$, if it is achieved, the meta-strategy resorts to the
Box~B strategy with
\[
\gamma_{K} = \frac{2^K}{\sqrt{T}} \geq \frac{\max\bigl\{\Arrowvert \blambda^\star_{\bB-b_T\bone} \Arrowvert, \, 1 \bigr\}}{\sqrt{T}}\,.
\]
Therefore, the bound of Lemma~\ref{lm:C} entails that on $\cEmeta$, for all $t \geq T_K$,
\begin{align*}
\left\Arrowvert \left( \sum_{\tau = T_K}^t \bc_\tau - (t-T_k+1)\,(\bB-b_T\bone) \right)_{\!\! +} \right\Arrowvert
& \leq \frac{1 + 3\Arrowvert \blambda^\star_{\bB-b_T\bone} \Arrowvert}{\gamma_K} + 20 \sqrt{d} \, \dev_{T,\delta/(K+2)^2} \\
& \leq 4 \sqrt{T} + 20 \sqrt{d} \, \dev_{T,\delta/(K+2)^2} = M_{T,\delta,K}\,.
\end{align*}
This is exactly the contrary of the stopping condition of regime~$K$: the latter thus cannot be broken.

In some bounds, we will further bound $\ilog \Arrowvert \blambda^\star_{\bB-b_T\bone} \Arrowvert$
by $\ilog T$: this holds because the assumption
of $(\bB-2 b_T\bone)$--feasibility entails, by Lemma~\ref{lm:l} and
as $\opt$ is always smaller than~$1$, the crude bound
\[
\Arrowvert \blambda^\star_{\bB-b_T\bone} \Arrowvert \leq
\frac{\opt(r,\bc,\bB-b_T\bone) - \opt\bigl(r,\bc,\bB-(3/2)b_T\bone\bigr)}{b_T/2}
\leq \frac{2}{b_T} \leq \frac{\sqrt{T}}{7} \leq T\,,
\]
where we used that $b_T \geq 14/\sqrt{T}$ given its definition. (Of course, sharper but more complex bounds could be obtained;
however, they would only improve logarithmic terms in the bound, which we do not try to optimize anyway.)

\subsection{Step~2: Bounding the cumulative costs}

We still denote by $K$ the index of the last regime
and recall that $K\leq  \nlog \Arrowvert \blambda^\star_{\bB-b_T\bone} \Arrowvert \leq \nlog(T)$, and that for $k \geq 0$, regime $k$ starts at $T_k$ and
stops at $T_{k+1} - 1$. By convention, $T_0 = 1$ and $T_{K+1} = T+1$.

By the very definition of the stopping condition of regime~$k \geq 0$,
\[
\left\Arrowvert \left( \sum_{t=T_k}^{T_{k+1} - 2}
\bc_t - (T_{k+1}-T_k-1) \,(\bB-b_T\bone) \right)_{\!\! +} \right\Arrowvert
\leq M_{T,\delta,k}\,.
\]
For rounds of the form $t=T_{k+1}-1$, we bound the Euclidean norm of
$\bc_t - (\bB-b_T\bone)$ by $2 \sqrt{d}$.
Therefore, by a triangle inequality, satisfied both by the non-negative part $(\,\cdot\,)_+$ and
the norm $\Arrowvert\,\cdot\,\Arrowvert$ functions, we have
\begin{align*}
\lefteqn{\left\Arrowvert \left( \sum_{t=1}^T \bc_t - T\,(\bB-b_T\bone) \right)_{\!\! +} \right\Arrowvert} \\
& \leq \sum_{k=0}^K \left\Arrowvert \left( \sum_{t=T_k}^{T_{k+1} -2}
\bc_t - (T_{k+1}-T_k-1) \,(\bB-b_T\bone) \right)_{\!\! +} \right\Arrowvert
+ \sum_{k=0}^K \bigl\Arrowvert \bc_{T_{k+1}-1} - (\bB-b_T\bone) \bigr\Arrowvert \\
& \leq (K+1) \bigl( M_{T,\delta,\nlog T} + 2 \sqrt{d} \bigr) \leq T\,b_T\,,
\end{align*}
where we used the fact that $M_{T,\delta,k}$ increases with $k$, the bound $K \leq \nlog T$
proved in Step~1 and holding on the event $\cEmeta$, as well as the definition of $b_T$.
Therefore, on $\cEmeta$, no component of
\[
\left( \sum_{t=1}^T \bc_t - T\,(\bB-b_T\bone) \right)_{\!\! +}
\]
can be larger than $T\,b_T$, which yields the desired control \ \ $\displaystyle{\sum_{t=1}^T \bc_t
\leq T \bB}$.

\subsection{Step~3: Computing the associated regret bound}

The total regret is the sum of the regrets suffered over each regime:
\[
R_T = \sum_{k=0}^K \left( (T_{k+1} - T_k) \opt(r,\bc,\bB) - \sum_{t = T_k}^{T_{k+1}-1} r_t \right).
\]
On the favorable event $\cEmeta$, the bound of Lemma~\ref{lm:R} holds in particular at the end of each regime;
i.e., given the parameters $\gamma_k = 2^k/\sqrt{T}$ and $\delta/\bigl(4(k+2)^2\bigr)$ to run the Box~B strategy
in regime $k \in \{0,\ldots,K\}$, it holds on $\cEmeta$ that
\begin{align*}
(T_{k+1} - T_k) \, \opt(r,\bc,\bB) - \sum_{t = T_k}^{T_{k+1}-1} r_t
& \leq \Arrowvert \blambda^\star_{\bB-b_T\bone} \Arrowvert \Bigl( (T_{k+1} - T_k) \,b_T \sqrt{d} + 6 \dev_{T,\delta/(k+2)^2} \Bigr) \\
& \qquad + 36 \sqrt{d}\, \bigl( \dev_{T,\delta/(k+2)^2} \bigr)^2 \, \frac{2^k}{\sqrt{T}} + 8 \ln \frac{T^2}{\delta/\bigl(4(k+2)^2\bigr)} \,.
\end{align*}
We now sum the above bounds and use the (in)equalities $\dev_{T,\delta/(k+2)^2} \leq \dev_{T,\delta/(K+2)^2}$,
\[
\sum_{k=0}^K (T_{k+1} - T_k) \,b_T = T\,b_T\,,
\qquad \mbox{and} \qquad
\sum_{k=0}^K 2^k \leq 2^{K+1} \leq 2^{1 + \ilog \Arrowvert \blambda^\star_{\bB-b_T\bone} \Arrowvert} \leq
4 \, \Arrowvert \blambda^\star_{\bB-b_T\bone} \Arrowvert
\]
to get
\begin{multline*}
R_T \leq \Arrowvert \blambda^\star_{\bB-b_T\bone} \Arrowvert \Bigl( T \,b_T \sqrt{d} + 6 K \dev_{T,\delta/(K+2)^2} \Bigr)
+ 144 \sqrt{d} \, \Arrowvert \blambda^\star_{\bB-b_T\bone} \Arrowvert \frac{\bigl( \dev_{T,\delta/(K+2)^2} \bigr)^2}{\sqrt{T}} \\
+ 8 K \ln \frac{T^2}{\delta/\bigl(4(K+2)^2\bigr)}\,.
\end{multline*}
The final regret bound is achieved by substituting the inequality $K \leq \ilog T$ proved in Step~1:
\begin{multline}
\label{eq:meta-regret-bound}
R_T \leq \Arrowvert \blambda^\star_{\bB-b_T\bone} \Arrowvert \left( 144\sqrt{d} \, \frac{\bigl( \dev_{T,\delta/(2+\ilog T)^2} \bigr)^2}{\sqrt{T}}
+ T \,b_T \sqrt{d} + 6 \dev_{T,\delta/(2+\ilog T)^2} \ilog T \right) \\
+ 8 \ln \frac{T^2}{\delta/\bigl(4(2+\ilog T)^2\bigr)} \ilog T \,.
\end{multline}
The order of magnitude is $\sqrt{T}$, up to poly-logarithmic terms, for the quantity $\dev_{T,\delta/(2+\ilog T)^2}$,
thus for $M_{T,\delta,\ilog T}$, thus for $T \, b_T$, therefore,
the order of magnitude in $\sqrt{T}$ of the above bound is, up to poly-logarithmic terms,
\[
\bigl( 1 + \Arrowvert \blambda^\star_{\bB-b_T\bone} \Arrowvert \bigr) \sqrt{T},
\]
as claimed.

\section{Proofs of Lemma~\ref{lm:l} and Corollary~\ref{cor:l}}
\label{app:l}

\begin{proof}[Proof of Lemma~\ref{lm:l}]
For $\blambda \geq \bzero$ and $\bC \in [0,1]^d$, we denote
\[
L(\blambda,\bC) =
\E_{\bX \sim \nu} \biggl[ \max_{a \in \cA} \Bigl\{ r(\bX,a) -
\bigl\langle \bc(\bX,a) - \bC, \, \blambda \bigr\rangle \Bigl\} \biggr]
\]
so that by~\eqref{eq:lambdastar} and the feasibility assumption,
we have, at least for $\bC = \tB$ and $\bC = \bB - b\bone$:
\[
\opt(r,\bc,\bC) = \min_{\blambda \geq \bzero} L(\blambda,\bC) = L(\blambda^\star_{\bC},\bC)\,.
\]
The function $L$ is linear in~$\bC$, so that
\begin{align*}
\opt\bigl(r,\bc,\tB\bigr) = L\bigl(\lambda^\star_{\tB},\tB \bigr)
\leq L\bigl(\lambda^\star_{\bB-b\bone},\tB\bigr)
&= L\bigl(\lambda^\star_{\bB-b\bone},\bB-b\bone\bigr) - \bigl\langle\lambda^\star_{\bB-b\bone}, \, \bB-b\bone-\tB \bigr\rangle\\
&= \opt\bigl(r,\bc,\bB-b\bone\bigr) - \bigl\langle\lambda^\star_{\bB-b\bone}, \, \bB-b\bone-\tB \bigr\rangle\,.
\end{align*}
The result follows from substituting
\[
\bigl\langle \lambda^\star_{\bB-b\bone}, \, \bB-b\bone-\tB \bigr\rangle \geq
\Arrowvert \blambda^\star_{\bB-b\bone} \Arrowvert_1 \, \min\bigl(\bB-b\bone-\tB\bigr)
\geq \Arrowvert \blambda^\star_{\bB-b\bone} \Arrowvert \, \min\bigl(\bB-b\bone-\tB\bigr)
\]
and from rearranging the inequality thus obtained.
\end{proof}

\begin{proof}[Proof of Corollary~\ref{cor:l}]
We apply Lemma~\ref{lm:l} with $\tB = \varepsilon \bone$
for some $\varepsilon > 0$ sufficiently small and obtain
\[
\Arrowvert \blambda^\star_{\bB-b\bone} \Arrowvert \leq
\frac{\opt(r,\bc,\bB-b\bone) - \opt\bigl(r,\bc,\epsilon\bone\bigr)}{\min
\bigl( \bB-(b+\epsilon)\bone \bigr)}\,.
\]
We conclude by substituting
$\opt(r,\bc,\bB-b\bone) \leq \opt(r,\bc,\bB)$
and
$\opt\bigl(r,\bc,\epsilon\bone\bigr) \geq
\opt\bigl(r,\bc,\bzero\bigr)$
as well as $\min(\bB - b \bone) \geq \min \bB/2$,
and by letting $\epsilon \to 0$.
\end{proof}

\section{Additional (sketch of) results concerning optimality}

We detail here
two series of claims made in Section~\ref{sec:opti-lim} .

\subsection{A proof scheme for problem-dependent lower bounds}
\label{app:LB}

We provide the proof scheme for proving the
problem-dependent lower bound of order
$\bigl( 1 + \Arrowvert \lambda^\star_{\bB} \Arrowvert \bigr) \, \sqrt{T}$
announced in Section~\ref{sec:opti-lim}.

\paragraph{Step~0: Considering strict cost constraints.}
Our aim, as described in Box~A, is to make sure that with high probability
the cumulative costs are smaller than $T\,\bB$. If we considered
softer constraints, of the form $T\,\bB + \tO \bigl( \sqrt{T} \bigr)$,
then $\tO \bigl( \sqrt{T} \bigr)$ regret bounds would be possible
(see Appendix~\ref{app:primal});
i.e., the factor $\Arrowvert \blambda^\star_{\bB-b_T\bone} \Arrowvert$
of Theorem~\ref{th:DT} could be replaced by a constant.
Thus, lower bounds are only interesting in the case of hard constraints
stated in Box~A.

\paragraph{Step~1: Necessity of a margin $b_T$ of order $1/\sqrt{T}$.}
First, a classical lemma in CBwK (see, e.g., \citealp[Lemma~1]{Agrawal2016LinearCB})
indicates that a sequence of adaptive cannot perform better than an optimal
static policy. Denote by $\pi^\star_{\bB'}$ a (quasi-)optimal policy for the
average cost constraints $\bB'$. Provided that costs are truly random
(i.e., do not stem from Dirac distributions, which in particular,
does not cover the cases where there is a null-cost action, see
Limitation~2 in Section~\ref{sec:opti-lim}), then the law of iterated logarithm
shows that when playing $\pi^\star_{\bB'}$ at each round, the cumulative costs
must (almost-surely, as $T \to +\infty$) be larger than $T\,\bB'$ plus a positive
term of the order of $\sqrt{T \ln \ln T}$. Therefore, to meet the hard constraints,
one should pick $\bB'$ of the form $\bB - b_T\bone$, where $b_T$ is of order
$1/\sqrt{T}$ up to logarithmic terms.

\paragraph{Step~2: Consequences in terms of regret.}
Therefore, the largest average reward a strategy may target is $\opt(r,\bc,\bB-b_T\bone)$.
Deviations of the order $\sqrt{T}$ are also bound to happen.
Therefore, up to logarithmic terms,
the regret lower bound is approximatively larger than something of the order
\[
T \bigl( \opt(r,\bc,\bB) - \opt(r,\bc,\bB-b_T\bone) \bigr) + \sqrt{T}\,.
\]
Now, an argument similar to the one used in the proof of Lemma~\ref{lm:l} shows
that
\[
\opt(r,\bc,\bB) - \opt(r,\bc,\bB-b_T\bone)
\geq L(\lambda^\star_{\bB},\bB) - L(\lambda^\star_{\bB},\bB-b_T\bone)
= b_T \, \Arrowvert \lambda^\star_{\bB} \Arrowvert\,.
\]
All in all, the regret lower bound is thus approximatively larger than something of
the order of
\[
\bigl( 1 + \Arrowvert \lambda^\star_{\bB} \Arrowvert \bigr) \, \sqrt{T}\,.
\]
This matches the form of the bound of Theorem~\ref{th:DT}, but the dual vector $\lambda^\star_{\bB-b_T\bone}$ present in the upper bound is replaced by
$\lambda^\star_{\bB}$ in our lower bound.

\subsection{Faster rates may be achievable in some specific cases}

We explain here why, in some specific cases, faster rates would be achievable---with $\sqrt{T}$ in the regret
bound of Theorem~\ref{th:DT} replaced by
$\sqrt{T B}$ for a problem with scalar costs and total-cost constraints $B$.

Indeed, consider a problem similar to Example~\ref{ex:alpha}: with scalar costs, total-cost constraint $B > 0$,
featuring a baseline action $\anull$ with null cost but also null reward,
and additional actions with larger rewards and expected costs $c(\bx,a) \geq \alpha > 0$.
Let $N_{T}$ denotes the number of times non-null cost actions are played within the first $T$ rounds.
Deviation inequalities have it that
\[
\sum_{t=1}^T c_{t} \geq \alpha N_{T} - \tilde{O}\!\pa{\sqrt{N_{T}}},
\]
where we recall that $\tilde{O}$ is up to poly-logarithmic factors; as a consequence, the total-cost constraint enforces that
\[
N_{T} \leq {TB\over \alpha} + \tilde{O} \!\pa{\sqrt{TB\over \alpha}}.
\]
In particular, the margin $b_{T}$ only needs to be  of order $\sqrt{N_{T}}/T$, i.e., $\sqrt{B/(\alpha T)}$ ,
instead of $1/\sqrt{T}$.

Similarly, since at most $N_{T}$ non-null actions are played,
the regret should be lower bounded by something of the order of
\[
T \bigl( \opt(r,c,B) - \opt(r,c,B-b_T) \bigr) + \sqrt{N_{T}}\,,
\]
where, as in Step~2 above,
\[
\bigl( \opt(r,c,B) - \opt(r,c,B-b_T) \bigr)\geq b_{T} |\lambda^\star_{B} | =  \tilde{O} \!\pa{\sqrt{B\over \alpha T}}
\, |\lambda^\star_{B}|.
\]
This suggests a lower bound on the regret of the order (up to poly-logarithmic factors) of
\[
\bigl( |\lambda^\star_{B}|+1 \bigr) \, \sqrt{T B/ \alpha }
\qquad \mbox{instead of} \qquad \bigl( 1 + |\lambda^\star_{B}| \bigr) \, \sqrt{T}\,.
\]

The difference between the two bounds is significant when $B$ is small, i.e., $B \ll 1$.
While proving a matching upper bound with the Box~B and Box~C strategies looks a bit tricky, we feel that it must be possible to do so
with the primal approach of Appendix~\ref{app:primal}, at least when the context space $\cX$ is finite.
If our intuition holds, then
it would be possible to get an upper bound
\begin{multline*}
R_{T} \leq \tilde{O} \Bigl( T b_{T} \bigl( 1+|\lambda^*_{B-b_{T}}| \bigr) \Bigr) \\
= \tilde{O}\!\pa{ \sqrt{\frac{TB}{\alpha}}\pa{1+ \frac{\opt(r,c,B) - \opt(r,c,0)}{B}}} = \tilde{O}\!\pa{\sqrt{TB/\alpha^3}},
\end{multline*}
where the last bound follows from
$\opt(r,c,B) - \opt(r,c,0) \leq B/\alpha$, as explained in Example~\ref{ex:alpha}.

\section{Primal strategy}
\label{app:primal}

This section studies the primal strategy stated in Box~D, which, at every round, solves
an approximation of the primal optimization problem~\eqref{eq:opt-primal}.
The key issue in running such a primal approach is estimating $\nu$,
see comments after Notation~\ref{not:est-nu}; this primal approach is essentially
worth for the case of finite context sets $\cX$.
The aim of this section is threefold:
\begin{enumerate}
\item In Appendix~\ref{sec:G1}, we provide a theory of ``soft'' constraints, when
total-cost deviations from $T \bB$ of order $\sqrt{T}$ up to logarithmic terms are allowed;
at least when $\cX$ is a finite set, the regret bound then becomes proportional to
$\sqrt{T}$ up to logarithmic terms.
\item In Appendix~\ref{sec:G2}, we revisit and extend the results by \citet{CBwK-LP-2022}.
The extension consists of dealing with possibly signed constraints,
and the revisited analysis (of the same strategy as in \citealp{CBwK-LP-2022}) consists
in not directly dealing with KKT constraints (which, in addition, imposed
the finiteness of $\cX$) but in only relating
optimization problems---defined with the true $r$ and $\bc$ or
estimates thereof. We also offer a modular approach
and separate the error terms coming from
estimating $\nu$ and from estimating $r$ and~$\bc$.
\item In Appendix~\ref{sec:prim-hard-gal}, we generalize
the results of Appendix~\ref{sec:G2}, which rely on the existence of
a null-cost action, and get guarantees that correspond quite
exactly to the combination of Theorem~\ref{th:DT} and the interpretation thereof offered by Lemma~\ref{lm:l},
at least in the case of a finite $\cX$.
Actually, in our research path, we had first obtained these primal results, before
trying to obtain them in a more general case of a continuous $\cX$, by resorting to
a dual strategy. The proof technique in Appendix~\ref{sec:prim-hard-gal}
also inspired our approach to proving problem-dependent lower bounds presented in
Appendix~\ref{app:LB}.
\end{enumerate}

Throughout this appendix, we will assume that the context distribution $\nu$
can be estimated in some way. We provide examples and pointers below.

\begin{notation}
\label{not:est-nu}
Fix $\delta \in (0,1)$.
We denote by $\hnu_{\delta,t}$ a sequence of estimators of $\nu$, each constructed on
the contexts $\bx_1,\ldots,\bx_t$, and by $\xi_{t,\delta}$ a sequence
of estimation errors such that,
with probability at least $1-\delta$, for all bounded functions $f : \cX \to [-1,1]$,
\[
\Bigl| \E_{\bX \sim \nu} \bigl[ f(\bX) \bigr] - \E_{\bX \sim \hnu_{\delta,t}} \bigl[ f(\bX) \bigr] \Bigr| \leq
\xi_{t,\delta}\,.
\]
We also denote \quad $\Xi_{T,\delta} = \displaystyle{\sum_{t=1}^T \xi_{t-1,\delta}}\,.$
\end{notation}

We consider the strategy described in Box~D, whether $\cX$ is finite (as in \citealp{CBwK-LP-2022},
where it was first considered) or not.
Our simplified analyses do not rely on explicit KKT inequalities and therefore do not require
anymore that $\cX$ is finite.
In Box~D, by ``when the empirical cost constraints are feasible'', we mean that there exists some policy $\bpi$ such that
\[
\E_{\bX \sim \hnu_{\delta,t-1}} \Biggl[ \sum_{a \in \cA} \chbc_{\delta,t-1}(\bX,a) \, \pi_a(\bX) \Biggr] \leq \bB + b_t \bone\,.
\]
We need to guarantee the existence of a policy achieving the constrained maximum of Step~2 in Box~D.
This is immediate when $\cX$ is finite, as the problem then corresponds to a finite-dimensional linear program.
In general, we may note that when~\eqref{eq:lambdastar} holds,
we read therein the optimal policy, as a pointwise maximum involving the optimal dual variables.
The proofs reveal that up to slightly augmenting the $\xi_{t-1,\delta}$ of Notation~\ref{not:est-nu},
we may assume that the strict feasibility sufficient for~\eqref{eq:lambdastar}
indeed holds.

We also note that even if the argument of the maximum $\bpi_t$ is guaranteed to exist,
it may be difficult to compute:
the linear program of Box~D cannot be solved exactly if
the $\hnu_{\delta,t}$ are not finitely supported (which happens in general when
$\cX$ is not finite). We do not see this as a severe issues as the Box~D strategy is an ideal strategy anyway,
that we study for the sake of shedding lights on our results for the dual approach in the
main body of the article.

A final remark on the Box~D strategy is that
the margins on the average cost constraints $\bB$ can now be signed,
which is why they will be referred to as signed slacks.

\begin{figure}[h]
\center \myscale{
\begin{nbox}[title={Box D: Primal CBwK strategy, generalized from \citet{CBwK-LP-2022}}]
\textbf{Inputs:} confidence level $1-\delta$;
estimation procedure and error functions $\epsilon_t$
of Assumption~\ref{ass:est}; optimistic estimates~\eqref{eq:defclip};
estimation procedure for $\nu$ as in Notation~\ref{not:est-nu} \medskip

\textbf{Hyperparameter:} signed slacks $b_1, b_2, \ldots, b_T \in \R$ \medskip

\textbf{Initialization:} initial estimates $\chr_{\delta,0}(\bx,a)$ and $\chbc_{\delta,0}(\bx,a)$, as well as $\hnu_{\delta,0}$ \medskip

\textbf{For rounds} $t = 1, 2, 3, \ldots, T$:
\begin{enumerate}
\item Compute a policy $\bpi_t$ achieving
\[
\begin{split}
& \max_{\bpi : \cX \to \cP(\cA)} \ \ \E_{\bX \sim \hnu_{\delta,t}} \Biggl[ \sum_{a \in \cA} \chr_{\delta,t-1}(\bX,a) \, \pi_a(\bX) \Biggr] \\
& \text{under} \qquad \E_{\bX \sim \hnu_{\delta,t-1}} \Biggl[ \sum_{a \in \cA} \chbc_{\delta,t-1}(\bX,a) \, \pi_a(\bX) \Biggr] \leq \bB + b_t \bone
\end{split}
\]
when the empirical cost constraints are feasible, and pick an arbitrary policy $\bpi_t$ otherwise;
\item Observe $\bx_t$ and draw an action $a_t \sim \bpi_t(\bx_t)$;
\item Compute the estimate $\chr_{\delta,t}(\bx,a)$ and $\chbc_{\delta,t}(\bx,a)$,
as well as $\hnu_{\delta,t}$.
\end{enumerate}
\end{nbox}
}
\end{figure}

\paragraph{Discussion of the estimation of $\nu$.}
The simplest case is when $\cX$ is a finite set; in this case,
we may take
\[
\xi_{t,\delta} \qquad \mbox{of order} \qquad \sqrt{\frac{|\cX| \ln (1/\delta)}{t}}\,,
\]
where $\cX$ denotes the cardinality of $\cX$; see \cite[Section~4.1]{Ai22},
see also a less sharp bound based on Hoeffding's inequality in \cite[Section~5]{CBwK-LP-2022}.
This leads to a total error term $\Xi_{T,\delta}$ of order $\sqrt{T}$ up to poly-logarithmic term.

When $\cX$ is a continuous subset of $\R^n$, some regularity
conditions are put on $\nu$, which is typically assumed to have some
smooth density with respect to the Lebesgue measure $\Leb$. Estimates
$\hnu_{\delta,t}$ are obtained by estimating the density $\d\nu/\d\Leb$:
the criterion in Notation~\ref{not:est-nu}, which is proportional
to the total-variation distance between $\nu$ and $\hnu_{\delta,t}$,
is then given by the $\mathbb{L}^1(\Leb)$ distance between the two densities.
To control the latter with uniform convergence rates, so as to obtain
deviation terms $\xi_{t,\delta}$ only depending on $t$ and $\delta$,
heavy assumptions on the model to which $\nu$ belongs are in order,
e.g., some Hölderian regularity, and uniform estimation rates obtained degrade
with the ambient dimension $n$; they are generally much slower than $1/\sqrt{t}$.
The total error terms $\Xi_{T,\delta}$ then prevent the bounds stated below
in Proposition~\ref{prop:soft-known-nu} and Theorems~\ref{th:hard-known-nu} and~\ref{th:hard-gal}
from sharing the same orders of magnitude than the bounds proved
with our dual approach in Theorem~\ref{th:DT} and interpreted in Section~\ref{sec:disc-bounds}.
On this topic, see also \cite[Section~4.1]{Ai22} for the estimation rates
as well as a similar description in \citet[end of Section~1]{HZWXZ22}
of the limitation of the primal approach in CBwK due to
the estimation of densities. General references on density estimations are
the monographs by \citet{DG85} in $\mathbb{L}^1$ and
\citet{Tsy08} in $\mathbb{L}^2$.

Note that in the dual approach, the knowledge of (the possibly complex) $\nu$ is replaced by
the knowledge of the (finite-dimensional) optimal dual variables $\blambda^\star_{\bB} \in \R^d$,
which is easier to learn. This explains the fundamental efficiency of the dual approach
compared to the primal approach.

\subsection{Analysis with ``soft'' constraints}
\label{sec:G1}

We first provide an analysis
for a version of the Box~D strategy that may possibly breach the total-cost constraints $T \bB$.
More precisely, we allow
deviations to $T \bB$ of the order of $\sqrt{T}$ times poly-log factors:
this is what we refer to as ``soft'' constraints.
Our result is that the regret may then be bounded by a quantity of order
$\sqrt{T}$ times poly-log factors, at least when $\cX$ is finite.

We do so for two reasons: first, because we do not think that this is a well-known result, and
second, for pedagogic reasons, as the proof for ``hard'' constraints follows from adapting the
proof scheme for soft constraints (see Appendix~\ref{sec:G2}).

\begin{proposition}[soft constraints]
\label{prop:soft-known-nu}
Fix $\delta \in (0,1)$.
Under Assumption~\ref{ass:est} and with Notation~\eqref{not:est-nu},
the strategy of Box~D, run with $\delta/4$ and positive slacks $b_t = \xi_{t-1,\delta/4}$,
ensures that with probability at least $1-\delta$,
\[
\sum_{t=1}^T \bc_t \leq T \bB + \bigl( 2 \alpha_{T,\delta/4} + \beta_{T,\delta/4} + 2 \Xi_{T,\delta/4} \bigr) \bone
\qquad \mbox{and} \qquad
R_T \leq 2 \alpha_{T,\delta/4} + \beta_{T,\delta/4} + 2 \Xi_{T,\delta/4}\,,
\]
where $\alpha_{T,\delta/4} = \sqrt{2 T \ln \bigl((d+1)/(\delta/4)\bigr)}$.
\end{proposition}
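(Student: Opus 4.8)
The plan is to run the whole argument on a single favorable event of probability at least $1-\delta$, obtained as the intersection of four events each holding with probability at least $1-\delta/4$ (this is why the strategy is fed $\delta/4$): the event of Assumption~\ref{ass:est}, giving the two-sided confidence bounds~\eqref{eq:csqclip} and the control on $\beta_{T,\delta/4}$; the event of Notation~\ref{not:est-nu}, giving $|\E_{\bX\sim\nu}[f]-\E_{\bX\sim\hnu_{\delta,t-1}}[f]|\leq\xi_{t-1,\delta/4}$ for every bounded $f$; and two Hoeffding--Azuma events, each obtained by a one-sided union bound over the $d$ cost coordinates and the reward (which produces the factor $d+1$ inside the logarithm of $\alpha_{T,\delta/4}=\sqrt{2T\ln((d+1)/(\delta/4))}$). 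Throughout I treat the policy $\bpi_t$ as measurable with respect to the past, the optimization in Step~1 of Box~D being driven by the past-measurable quantities $\chr_{\delta,t-1}$, $\chbc_{\delta,t-1}$ and $\hnu_{\delta,t-1}$; then, conditionally on the past, $\bx_t\sim\nu$ and $a_t\sim\bpi_t(\bx_t)$ yield a clean martingale structure, the conditional mean of any $h(\bx_t,a_t)$ being $\E_{\bX\sim\nu}[\sum_a h(\bX,a)\pi_{t,a}(\bX)]$.

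For the cost bound I would chain four passages. First, Hoeffding--Azuma passes from $\sum_t\bc_t$ to $\sum_t\bc(\bx_t,a_t)$, at the price of $\alpha_{T,\delta/4}\bone$. Second, the bound $\bc\leq\chbc_{\delta,t-1}+2\epsilon_{t-1}$ from~\eqref{eq:csqclip} replaces the true costs by their lower estimates \emph{at the realised pairs}, contributing $\beta_{T,\delta/4}$-type terms. Third, a second Hoeffding--Azuma passes from $\sum_t\chbc_{\delta,t-1}(\bx_t,a_t)$ to $\sum_t\E_{\bX\sim\nu}[\sum_a\chbc_{\delta,t-1}(\bX,a)\pi_{t,a}(\bX)]$ (again $\alpha_{T,\delta/4}\bone$). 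Fourth, Notation~\ref{not:est-nu} replaces $\E_{\bX\sim\nu}$ by $\E_{\bX\sim\hnu_{\delta,t-1}}$ (cost $\Xi_{T,\delta/4}\bone$), after which the feasibility constraint enforced in Step~1 of Box~D gives $\E_{\bX\sim\hnu_{\delta,t-1}}[\sum_a\chbc_{\delta,t-1}\pi_{t,a}]\leq\bB+b_t\bone$. Summing the slacks, $\sum_t b_t=\sum_t\xi_{t-1,\delta/4}=\Xi_{T,\delta/4}$, which is the second appearance of $\Xi$. Collecting the error terms yields $\sum_t\bc_t\leq T\bB+(2\alpha_{T,\delta/4}+\beta_{T,\delta/4}+2\Xi_{T,\delta/4})\bone$.

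For the regret I would run the symmetric chain, but its linchpin is that the \emph{true} optimal policy $\bpi^\star$ for $(r,\bc,\bB)$ is feasible for the empirical constrained problem of Step~1 on the good event; this is exactly what dictates $b_t=\xi_{t-1,\delta/4}$. Indeed, since $\chbc_{\delta,t-1}\leq\bc$ and by Notation~\ref{not:est-nu}, $\E_{\bX\sim\hnu_{\delta,t-1}}[\sum_a\chbc_{\delta,t-1}\pi^\star_a]\leq\E_{\bX\sim\nu}[\sum_a\bc\,\pi^\star_a]+\xi_{t-1,\delta/4}\bone\leq\bB+b_t\bone$, so $\bpi^\star$ lies in the empirical feasible set. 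Hence the empirical optimum attained by $\bpi_t$ dominates the empirical objective at $\bpi^\star$, which by $\chr_{\delta,t-1}\geq r$ and one further use of Notation~\ref{not:est-nu} is at least $\opt(r,\bc,\bB)-\xi_{t-1,\delta/4}$. Reading backwards via $r\geq\chr_{\delta,t-1}-2\epsilon_{t-1}$ and two Hoeffding--Azuma passages relating $\sum_t r_t$ to $\sum_t\E_{\bX\sim\nu}[\sum_a\chr_{\delta,t-1}\pi_{t,a}]$, I obtain $\sum_t r_t\geq T\opt(r,\bc,\bB)-(2\alpha_{T,\delta/4}+\beta_{T,\delta/4}+2\Xi_{T,\delta/4})$, i.e. the announced regret bound.

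I expect the main obstacle to be twofold and to lie on the reward side. The conceptual crux is the feasibility-of-$\bpi^\star$ step above: one must verify that the \emph{lower} confidence costs $\chbc$ together with the \emph{positive} slack $b_t=\xi_{t-1,\delta/4}$ keep the true optimal policy inside the estimated feasible region, since this is what licenses $\opt(r,\bc,\bB)$ — rather than a degraded value at a shrunk constraint — as the benchmark; the slack thus plays a double role, absorbing the density-estimation error in the constraint while only costing an extra $\Xi_{T,\delta/4}$ in the cost bound. The technical crux is the bookkeeping of the two nested sources of randomness (context and action) together with the two estimation errors, which forces the conversions $\chbc\leftrightarrow\bc$ and $\chr\leftrightarrow r$ to be performed at the level of the realised pairs $(\bx_t,a_t)$ — so that the estimation error collapses to the manageable $\beta_{T,\delta/4}$ of Assumption~\ref{ass:est} — rather than under the $\hnu$-expectation, where it would not reduce to $\beta_{T,\delta/4}$.
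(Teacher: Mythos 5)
Your proposal is correct and follows essentially the same route as the paper's proof: the same four favorable events (Assumption~\ref{ass:est}, Notation~\ref{not:est-nu}, and two Hoeffding--Azuma applications accounting for the $d+1$ factor), the same four-step chain for the costs ending with the Box~D constraint and $\sum_t b_t = \Xi_{T,\delta/4}$, and the same linchpin for the regret, namely that $\bpi^\star_{\bB}$ is feasible for the empirical program thanks to $\chbc_{\delta/4,t-1} \leq \bc$ and the slack $b_t = \xi_{t-1,\delta/4}$. The only (immaterial) omission is the paper's remark that an exactly optimal $\bpi^\star_{\bB}$ may not exist, which it handles by taking a quasi-optimal policy up to a vanishing $e>0$.
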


In particular, when $\cX$ is finite, the deviation terms
$2 \alpha_{T,\delta/4} + \beta_{T,\delta/4} + 2 \Xi_{T,\delta/4}$ are of order $\sqrt{T}$
up to poly-logarithmic terms, so that the bound of Proposition~\ref{prop:soft-known-nu}
reads: with high-probability,
\[
\sum_{t=1}^T \bc_t \leq T \bB + \tO \bigl( \sqrt{T} \bigr)
\qquad \mbox{and} \qquad
R_T \leq \tO \bigl( \sqrt{T} \bigr) \,.
\]
Put differently, soft-constraint satisfaction allows for $\tO \bigl( \sqrt{T} \bigr)$ regret bounds when $\cX$ is finite.

\begin{proof}
As in the proofs of Lemmas~\ref{lm:C} and~\ref{lm:R} in Appendix~\ref{app:lem-CR},
we consider four events, each of probability at least $1-\delta/4$:
two events $\cEHazf$ and $\cEHazs$, defined below, following from applications
of the Hoeffding-Azuma inequality, the favorable event
$\cEtvd$ of Notation~\ref{not:est-nu} with $\delta/4$,
and the favorable event $\cEbeta$ of Assumption~\ref{ass:est} with $\delta/4$.
Namely, given the value for
$\alpha_{T,\delta/4}$
proposed in the statement (note this value is slightly different from the one considered in Appendix~\ref{app:lem-CR}),
we have, on the one hand, on $\cEHazf$,
\begin{equation}
\label{eq:HAz1}
\sum_{t=1}^T \bc_t \leq \alpha_{T,\delta/4} \bone + \sum_{t=1}^T \bc(\bx_t,a_t)
\qquad \mbox{and} \qquad
\sum_{t=1}^T r_t \geq - \alpha_{T,\delta/4} + \sum_{t=1}^T r(\bx_t,a_t)\,,
\end{equation}
and on the other hand, on $\cEHazs$,
\begin{align}
\nonumber
& \sum_{t=1}^T \chbc_{\delta/4,t-1}(\bx_t,a_t) \leq \alpha_{T,\delta/4} \bone +
\sum_{t=1}^T \E_{\bX \sim \nu} \!\left[ \sum_{a \in \cA} \chbc_{\delta/4,t-1}(\bX,a)\,\pi_{t,a}(\bX) \right] \\
\label{eq:HAz2}
\mbox{and} \qquad
& \sum_{t=1}^T \chr_{\delta/4,t-1}(\bx_t,a_t) \geq - \alpha_{T,\delta/4} +
\sum_{t=1}^T \E_{\bX \sim \nu} \!\left[ \sum_{a \in \cA} \chr_{\delta/4,t-1}(\bX,a)\,\pi_{t,a}(\bX) \right].
\end{align}
The first two inequalities are obtained by considering conditional expectations with respect to
the past, $\bx_t$ and $a_t$, while the second two inequalities follow from taking
conditional expectations with respect to the past and $\bx_t$; we crucially use
that, by definition, $\bx_t \sim \nu$ is independent from $\bpi_t$ and the estimates $\chr_{\delta/4,t-1}$
and $\chbc_{\delta/4,t-1}$.

We first note that by $\bB$--feasibility of the problem, by~\eqref{eq:csqclip}, and by Notation~\eqref{not:est-nu},
a policy $\bpi_t$ satisfying the constraints stated in Box~D exists at each round $t \geq 1$ on the event $\cEbeta \cap \cEtvd$.
Indeed, denoting by $\bpif$ such a $\bB$--feasible policy,
this existence follows from the inequalities
\begin{align}
\nonumber
\E_{\bX \sim \hnu_{\delta/4,t-1}} \!\left[ \sum_{a \in \cA} \chbc_{\delta/4,t-1}(\bX,a)\,\pif_{a}(\bX) \right]
& \leq \E_{\bX \sim \hnu_{\delta/4,t-1}} \!\left[ \sum_{a \in \cA} \bc(\bX,a)\,\pif_{a}(\bX) \right] \\
\label{eq:pol-feas}
& \leq \xi_{t-1,\delta/4} + \underbrace{\E_{\bX \sim \nu} \!\left[ \sum_{a \in \cA} \bc(\bX,a)\,\pif_{a}(\bX) \right]}_{\leq \bB}
\end{align}
and from the choice $b_t = \xi_{t-1,\delta/4}$.

Cost-wise, we then successively have, by~\eqref{eq:HAz1}, then~\eqref{eq:csqclip}
and~Assumption~\ref{ass:est}, and finally~\eqref{eq:HAz2} and Notation~\eqref{not:est-nu},
on the event $\cEHazf \cap \cEbeta \cap \cEHazs \cap \cEtvd$ of probability at least $1-\delta$,
\begin{align*}
\sum_{t=1}^T \bc_t
& \leq \alpha_{T,\delta/4} \bone + \sum_{t=1}^T \bc(\bx_t,a_t) \\
& \leq \bigl( \alpha_{T,\delta/4} + \beta_{T,\delta/4} \bigr) \bone + \sum_{t=1}^T \chbc_{\delta/4,t-1}(\bx_t,a_t) \\
& \leq \bigl( 2 \alpha_{T,\delta/4} + \beta_{T,\delta/4} \bigr) \bone + \sum_{t=1}^T
\E_{\bX \sim \nu} \!\left[ \sum_{a \in \cA} \chbc_{\delta/4,t-1}(\bX,a)\,\pi_{t,a}(\bX) \right] \\
& \leq \bigl( 2 \alpha_{T,\delta/4} + \beta_{T,\delta/4} + \Xi_{T,\delta/4} \bigr) \bone + \sum_{t=1}^T
\underbrace{\E_{\bX \sim \hnu_{\delta/4,t-1}} \!\left[ \sum_{a \in \cA} \chbc_{\delta/4,t-1}(\bX,a)\,\pi_{t,a}(\bX)
\right]}_{\leq \bB + b_t \bone},
\end{align*}
where the inequalities $\leq \bB + b_t \bone$ follow from the definition of $\bpi_t$ in Box~D.
Substituting the choice $b_t = \xi_{t-1,\delta/4}$, we thus proved that on
$\cEHazf \cap \cEbeta \cap \cEHazs \cap \cEtvd$,
\[
\sum_{t=1}^T \bc_t
\leq T \bB + \bigl( 2 \alpha_{T,\delta/4} + \beta_{T,\delta/4} + 2 \Xi_{T,\delta/4} \bigr) \bone \,,
\]
as claimed.

The control for rewards mimics the steps above for costs (and then resorts to an additional argument).
We obtain first that on $\cEHazf \cap \cEbeta \cap \cEHazs \cap \cEtvd$,
\[
\sum_{t=1}^T r_t \geq
- \bigl( 2 \alpha_{T,\delta/4} + \beta_{T,\delta/4} + \Xi_{T,\delta/4} \bigr) + \sum_{t=1}^T
\E_{\bX \sim \hnu_{\delta/4,t-1}} \!\left[ \sum_{a \in \cA} \chr_{\delta/4,t-1}(\bX,a)\,\pi_{t,a}(\bX) \right].
\]
We denote by $\pi^\star_{\bB}$ an optimal static policy for the
average cost constraints $\bB$---when it exists,
e.g., by~\eqref{eq:lambdastar}, as soon as the problem is feasible for some $\bB' < \bB$;
otherwise, we take a static policy achieving $\opt(r,\bc,\bB)$ up to some small $e > 0$,
which we let vanish in a final step of the proof.
As in~\eqref{eq:pol-feas}, we have, on $\cEbeta \cap \cEtvd$,
\begin{align*}
\E_{\bX \sim \hnu_{\delta/4,t-1}} \!\left[ \sum_{a \in \cA} \chbc_{\delta/4,t-1}(\bX,a)\,\pi^\star_{\bB,a}(\bX) \right]
& \leq \E_{\bX \sim \hnu_{\delta/4,t-1}} \!\left[ \sum_{a \in \cA} \bc(\bX,a)\,\pi^\star_{\bB,a}(\bX) \right] \\
& \leq \xi_{t-1,\delta/4} \bone + \underbrace{\E_{\bX \sim \nu} \!\left[ \sum_{a \in \cA} \bc(\bX,a)\,\pi^\star_{\bB,a}(\bX) \right]}_{\leq \bB},
\end{align*}
where the $\leq \bB$ inequality follows from the definition of $\bpi^\star_{\bB}$.
Thanks to the choice $b_t = \xi_{t-1,\delta/4}$, we have, by definition of $\bpi_t$ as some optimal policy in Box~D
and on $\cEbeta \cap \cEtvd$,
\[
\E_{\bX \sim \hnu_{\delta/4,t-1}} \!\left[ \sum_{a \in \cA} \chr_{\delta/4,t-1}(\bX,a)\,\pi_{t,a}(\bX) \right]
\geq \E_{\bX \sim \hnu_{\delta/4,t-1}} \!\left[ \sum_{a \in \cA} \chr_{\delta/4,t-1}(\bX,a)\,\pi^\star_{\bB,a}(\bX) \right].
\]
Again by~\eqref{eq:csqclip} and Notation~\eqref{not:est-nu}, we have, on $\cEbeta \cap \cEtvd$,
\begin{align*}
\E_{\bX \sim \hnu_{\delta/4,t-1}} \!\left[ \sum_{a \in \cA} \chr_{\delta/4,t-1}(\bX,a)\,\pi^\star_{\bB,a}(\bX) \right]
& \geq \E_{\bX \sim \hnu_{\delta/4,t-1}} \!\left[ \sum_{a \in \cA} r(\bX,a)\,\pi^\star_{\bB,a}(\bX) \right] \\
& \geq - \xi_{t-1,\delta/4} + \underbrace{\E_{\bX \sim \nu} \!\left[ \sum_{a \in \cA} r(\bX,a)\,\pi^\star_{\bB,a}(\bX) \right]}_{= \opt(r,\bc,\bB)}.
\end{align*}
Collecting all bounds, we proved that on $\cEHazf \cap \cEbeta \cap \cEHazs \cap \cEtvd$,
\[
\sum_{t=1}^T r_t \geq
T \opt(r,\bc,\bB)
- \bigl( 2 \alpha_{T,\delta/4} + \beta_{T,\delta/4} + 2 \Xi_{T,\delta/4} \bigr)\,,
\]
which corresponds to the claimed regret bound.
\end{proof}

\subsection{Analysis with ``hard'' constraints and a null-cost action}
\label{sec:G2}

We now turn our attention to the main kind of result that we want to achieve: when constraints
must be strictly satisfied---which we refer to as ``hard'' constraints.
For the sake of simplicity, we do so for now in the presence of a null-cost action;
Appendix~\ref{sec:prim-hard-gal} explains how the analysis may be generalized
to cases without such null-cost actions.

The following result corresponds to the combination of Theorem~\ref{th:DT} with
Corollary~\ref{cor:l}, and also, to \citet[main result: Theorem~2]{CBwK-LP-2022}.

\begin{theorem}[hard constraints]
\label{th:hard-known-nu}
Fix $\delta \in (0,1)$.
We consider the strategy of Box~D, run with $\delta/4$ and negative slacks all equal to
\[
b_t \equiv - \oD_{T,\delta/4}\,, \qquad \mbox{where} \qquad \oD_{T,\delta/4} \eqdef \frac{2 \alpha_{T,\delta/4} + \beta_{T,\delta/4} + \Xi_{T,\delta/4}}{T}
\]
and $\smash{\alpha_{T,\delta/4} = \sqrt{2 T \ln \bigl((d+1)/(\delta/4)\bigr)}}$.
Assume that a null-cost action exists, that $\oD_{T,\delta/4} < \min \bB$,
that Assumption~\ref{ass:est} holds, and use Notation~\eqref{not:est-nu}.
Then, with probability at least $1-\delta$,
\[
\sum_{t=1}^T \bc_t \leq T \bB
\ \ \quad \mbox{and} \quad \ \
R_T \leq \bigl( 2 \alpha_{T,\delta/4} + \beta_{T,\delta/4} + 2 \Xi_{T,\delta/4} \bigr)
\left( 1 + \frac{\opt(r,\bc,\bB) - \opt(r,\bc,\bzero)}{\min \bB} \right).
\]
\end{theorem}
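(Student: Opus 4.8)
The plan is to mirror the soft-constraint analysis of Proposition~\ref{prop:soft-known-nu}, exploiting the negative slack $b_t \equiv -\oD_{T,\delta/4}$ to convert the $O(\sqrt{T})$ over-spending there into exact satisfaction of the hard constraints, and to replace the comparator $\pi^\star_\bB$ (which is no longer empirically feasible once we tighten the constraints) by a mixture with a zero-cost policy. I would work on the same intersection of four events $\cEHazf \cap \cEHazs \cap \cEtvd \cap \cEbeta$, each of probability at least $1-\delta/4$, so that~\eqref{eq:HAz1}, \eqref{eq:HAz2}, \eqref{eq:csqclip}, and Notation~\ref{not:est-nu} all hold simultaneously, giving a favorable event of probability at least $1-\delta$.

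For the cost bound I would reuse verbatim the chain of inequalities of the soft-constraint proof, which bounds $\sum_t \bc_t$ by the per-round empirical costs $\E_{\hnu_{\delta/4,t-1}}[\chbc_{\delta/4,t-1}(\pi_t)]$ plus the deviation term $2\alpha_{T,\delta/4}+\beta_{T,\delta/4}+\Xi_{T,\delta/4}$. Each empirical cost is now bounded by $\bB + b_t\bone = \bB - \oD_{T,\delta/4}\bone$ thanks to the constraint imposed in Box~D, so summation over the $T$ rounds contributes $T(\bB-\oD_{T,\delta/4}\bone)$. Since $T\,\oD_{T,\delta/4}$ equals exactly the deviation term by definition, the two cancel and we obtain $\sum_t \bc_t \leq T\bB$. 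Empirical feasibility at every round (needed for $\pi_t$ to be a well-defined maximizer) follows from the null-cost action, whose optimistic cost is $\leq \bzero \leq \bB - \oD_{T,\delta/4}\bone$ under the standing assumption $\oD_{T,\delta/4} < \min\bB$.

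For the regret I would again start from the soft-constraint reward chain, which yields $\sum_t r_t \geq -(2\alpha_{T,\delta/4}+\beta_{T,\delta/4}+\Xi_{T,\delta/4}) + \sum_t \E_{\hnu_{\delta/4,t-1}}[\chr_{\delta/4,t-1}(\pi_t)]$, and then lower-bound the empirical reward of $\pi_t$ by that of a feasible comparator. The new ingredient is the comparator: at round $t$ I would take the mixture $\tilde\pi_t = (1-\rho_t)\pi^\star_\bB + \rho_t\,\pi^0$, where $\pi^\star_\bB$ is optimal for $(r,\bc,\bB)$, $\pi^0$ is a zero-cost policy achieving $\opt(r,\bc,\bzero)$, and $\rho_t = (\oD_{T,\delta/4}+\xi_{t-1,\delta/4})/\min\bB$. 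Linearity of expected costs gives true cost $\leq (1-\rho_t)\bB$, and transferring from $\nu$ to $\hnu_{\delta/4,t-1}$ via Notation~\ref{not:est-nu} shows that $\tilde\pi_t$ meets the tightened empirical constraint (this is precisely why $\rho_t$ must carry the extra $\xi_{t-1,\delta/4}$). Optimality of $\pi_t$, then $\chr \geq r$ from~\eqref{eq:csqclip}, then one more $\nu$-to-$\hnu$ transfer, show the empirical reward of $\pi_t$ is at least $\opt(r,\bc,\bB) - \rho_t\bigl(\opt(r,\bc,\bB)-\opt(r,\bc,\bzero)\bigr) - \xi_{t-1,\delta/4}$. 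Summing and using $\sum_t \rho_t = (T\,\oD_{T,\delta/4}+\Xi_{T,\delta/4})/\min\bB = (2\alpha_{T,\delta/4}+\beta_{T,\delta/4}+2\Xi_{T,\delta/4})/\min\bB$ collapses everything into the claimed bound.

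The main obstacle is this comparator construction: making the mixture simultaneously empirically feasible and reward-efficient while matching the constant exactly. The per-round estimation error $\xi_{t-1,\delta/4}$ must be absorbed into the mixing weight, which forces $\rho_t$ to be time-varying rather than a single $\rho$; one must also check $\rho_t \leq 1$, which holds once $\xi_{t-1,\delta/4} \leq \min\bB - \oD_{T,\delta/4}$, the few early rounds with coarser $\nu$-estimation contributing only lower-order terms while the assumption $\oD_{T,\delta/4} < \min\bB$ keeps the zero-cost policy feasible throughout. Everything else is careful bookkeeping of the same deviation terms already controlled in the soft-constraint proof, and the resulting factor $1 + \bigl(\opt(r,\bc,\bB)-\opt(r,\bc,\bzero)\bigr)/\min\bB$ is the primal counterpart of the dual quantity bounded in Corollary~\ref{cor:l}.
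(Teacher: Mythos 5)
Your proposal matches the paper's proof essentially step for step: the same four events $\cEHazf \cap \cEbeta \cap \cEHazs \cap \cEtvd$, the same cost chain in which the deviation term $2\alpha_{T,\delta/4}+\beta_{T,\delta/4}+\Xi_{T,\delta/4}$ is exactly cancelled by $T\,\oD_{T,\delta/4}$, the same null-cost-action feasibility argument, and the same time-varying mixture comparator $(1-w_t)\,\bpi^\star_{\bB} + w_t\,\bpinull$ with weight proportional to $\oD_{T,\delta/4}+\xi_{t-1,\delta/4}$, summed to yield the stated constant. The only cosmetic difference is that the paper caps the weight as $w_t = \min\bigl\{(\oD_{T,\delta/4}+\xi_{t-1,\delta/4})/\min\bB,\,1\bigr\}$, so rounds where your uncapped $\rho_t$ would exceed $1$ are handled with no loss at all (the cap still satisfies $w_t \leq \rho_t$ in the final summation), rather than through your slightly loose ``lower-order terms'' remark.
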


\begin{proof}
We explain how the proof of Proposition~\ref{prop:soft-known-nu} may be adapted.
We first justify the existence at each round $t \geq 1$ of a policy $\bpi_t$ satisfying the cost constraints stated in Box~D:
for the null-cost action $\anull$, we may impose that, for all $\bx \in \cX$, all $t \geq 0$, and all $\delta \in (0,1)$,
\[
\hbc_t(\bx,\anull) = \bzero \quad \mbox{and} \quad \epsilon_{t}(\bx,\anull,\delta) = 0\,,
\qquad \mbox{so that} \quad \chbc_{\delta,t}(\bx,\anull) = \bzero \ \ \mbox{a.s.};
\]
this shows that at least the static policy $\bpinull$ always playing $\anull$ satisfying the defining cost-constraints
for $\bpi_t$. (Alternatively, we may note that the policy $\ol{\bpi}_t$ defined below also satisfies the cost constraints stated in Box~D,
on the high-probability event considered.)

We then handle total-cost constraints similarly as in the proof of Proposition~\ref{prop:soft-known-nu}
and obtain that on the event
$\cEHazf \cap \cEbeta \cap \cEHazs \cap \cEtvd$ of probability at least $1-\delta$,
\[
\sum_{t=1}^T \bc_t
\leq \bigl( 2 \alpha_{T,\delta/4} + \beta_{T,\delta/4} + \Xi_{T,\delta/4} \bigr) \bone + \sum_{t=1}^T
\underbrace{\E_{\bX \sim \hnu_{\delta/4,t-1}} \!\left[ \sum_{a \in \cA} \chbc_{\delta/4,t-1}(\bX,a)\,\pi_{t,a}(\bX)
\right]}_{\leq \bB - \oD_{T,\delta/4} \bone} \leq T \bB\,,
\]
where this time, the slacks $b_t = - \oD_{T,\delta/4}$ are negative and were set to cancel out the positive deviation terms.
Similarly, on $\cEHazf \cap \cEbeta \cap \cEHazs \cap \cEtvd$,
\begin{equation}
\label{eq:sumrt}
\sum_{t=1}^T r_t \geq - T \oD_{T,\delta/4} + \sum_{t=1}^T
\E_{\bX \sim \hnu_{\delta/4,t-1}} \!\left[ \sum_{a \in \cA} \chr_{\delta/4,t-1}(\bX,a)\,\pi_{t,a}(\bX) \right].
\end{equation}

Now, the main modification to the proof of Proposition~\ref{prop:soft-known-nu}
is that (with its notation) we rather consider the policy
\[
\ol{\bpi}_t = (1-w_t) \bpi^\star_{\bB} + w_t \bpinull\,, \qquad \mbox{where} \quad
w_t = \min \!\left\{ \frac{\oD_{T,\delta/4} + \xi_{t-1,\delta/4}}{\min \bB}, \,\, 1 \right\}.
\]
As in~\eqref{eq:pol-feas}, we see that this policy satisfies, on $\cEbeta \cap \cEtvd$:
\begin{align*}
\E_{\bX \sim \hnu_{\delta/4,t-1}} \!\left[ \sum_{a \in \cA} \chbc_{\delta/4,t-1}(\bX,a)\,\ol{\pi}_{t,a}(\bX) \right]
& \leq \E_{\bX \sim \hnu_{\delta/4,t-1}} \!\left[ \sum_{a \in \cA} \bc(\bX,a)\,\ol{\pi}_{t,a}(\bX) \right] \\
& \leq \xi_{t-1,\delta/4} \bone + \underbrace{\E_{\bX \sim \nu} \!\left[ \sum_{a \in \cA} \bc(\bX,a)\,\ol{\pi}_{t,a}(\bX)
\right]}_{\leq (1-w_t)\bB}.
\end{align*}
By using $\bB \geq (\min\bB) \bone$ and
since we assumed that $\oD_{T,\delta/4} < \min \bB$, we may continue the series of inequalities as follows:
  \begin{align*}
    \xi_{t-1,\delta/4} \bone + (1-w_t) \bB
    & \leq \bB + \xi_{t-1,\delta/4} \bone - w_t (\min\bB) \bone\\
    &= \bB - \min \bigl\{ \oD_{T,\delta/4}, \,\, \min \bB - \xi_{t-1,\delta/4} \bigr\}\bone\\
    &\leq
    \bB - \min \bigl\{ \oD_{T,\delta/4}, \,\, \oD_{T,\delta/4} - \xi_{t-1,\delta/4} \bigr\}\bone
    =
    \bB + b_t\bone\,,
  \end{align*}
  meaning that $\ol{\bpi}_t$ is a policy satisfying the constraints stated in Step~2 of Box~D on round $t \geq 1$.
The consequence is that,
first by definition of~$\bpi_t$ as a maximizer and then by the optimistic estimates~\eqref{eq:csqclip} and Notation~\eqref{not:est-nu},
on $\cEbeta \cap \cEtvd$,
\begin{align*}
& \E_{\bX \sim \hnu_{\delta/4,t-1}} \Biggl[ \sum_{a \in \cA} \chr_{\delta/4,t-1}(\bX,a)\,\pi_{t,a}(\bX) \Biggr] \\
& \qquad \geq \E_{\bX \sim \hnu_{\delta/4,t-1}} \!\left[ \sum_{a \in \cA} \chr_{\delta/4,t-1}(\bX,a)\,\ol{\pi}_{t,a}(\bX) \right] \\
& \qquad \geq - \xi_{t-1,\delta/4} + \E_{\bX \sim \nu} \!\left[ \sum_{a \in \cA} r(\bX,a)\,\ol{\pi}_{t,a}(\bX) \right] \\
& \qquad = - \xi_{t-1,\delta/4} + (1-w_t) \opt(r,\bc,\bB) + w_t \opt(r,\bc,\bzero) \\
& \qquad = \opt(r,\bc,\bB) - \xi_{t-1,\delta/4} - \min \!\left\{ \frac{\oD_{T,\delta/4} + \xi_{t-1,\delta/4}}{\min \bB}, \,\, 1 \right\}
\bigl( \opt(r,\bc,\bB) - \opt(r,\bc,\bzero) \bigr) \,,
\end{align*}
where $\opt(r,\bc,\bzero)$ refers to the expect reward achieved by the null-cost action $\anull$.
After combination with~\eqref{eq:sumrt} and summation, we get that on $\cEHazf \cap \cEbeta \cap \cEHazs \cap \cEtvd$:
\begin{align*}
\lefteqn{\sum_{t=1}^T r_t - T \, \opt(r,\bc,\bB)} \\
& \geq - T \oD_{T,\delta/4} - \Xi_{T,\delta/4} -
\sum_{t=1}^T \min \!\left\{ \frac{\oD_{T,\delta/4} + \xi_{t-1,\delta/4}}{\min \bB}, \,\, 1 \right\} \bigl( \opt(r,\bc,\bB) - \opt(r,\bc,\bzero) \bigr) \\
& \geq - \bigl( T \oD_{T,\delta/4} + \Xi_{T,\delta/4} \bigr) \left( 1 + \frac{\opt(r,\bc,\bB) - \opt(r,\bc,\bzero)}{\min \bB} \right),
\end{align*}
which corresponds to the stated regret bound.
\end{proof}

\subsection{General analysis with ``hard'' constraints}
\label{sec:prim-hard-gal}

We finally generalize Theorem~\ref{th:hard-known-nu} and
get a result corresponding to Theorem~\ref{th:DT} combined
with Lemma~\ref{lm:l}.

Here, we ``mix'' the slacks $+ \xi_{t-1,\delta/4}$ and $- \oD_{T,\delta/4}$ of
Appendices~\ref{sec:G1} and~\ref{sec:G2}, with a slight modification of
$\smash{\oD_{T,\delta/4}}$ to compensate for the $\xi_{t-1,\delta/4}$: we rather have
a $2\Xi_{T,\delta/4}$ term in the numerator of $\smash{\oD'_{T,\delta/4}}$, compared to the
$\Xi_{T,\delta/4}$ term in the numerator of $\smash{\oD_{T,\delta/4}}$.

\begin{theorem}[hard constraints]
\label{th:hard-gal}
Fix $\delta \in (0,1)$.
We consider the strategy of Box~D, run with $\delta/4$ and signed slacks
(depending on $t \geq 1$)
\[
b_t = - \oD'_{T,\delta/4} + \xi_{t-1,\delta/4} \qquad \mbox{where} \qquad
\oD'_{T,\delta/4} = \frac{2 \alpha_{T,\delta/4} + \beta_{T,\delta/4} + 2\Xi_{T,\delta/4}}{T}
\]
and $\smash{\alpha_{T,\delta/4} = \sqrt{2 T \ln \bigl((d+1)/(\delta/4)\bigr)}}$.
Assume that the problem is $(\bB - \bsf)$--feasible for some $\bm$
that does not need to be known by the strategy, with $\bB - \bsf \leq \bB - \oD'_{T,\delta/4} \bone$.
Then, under Assumption~\ref{ass:est} and with Notation~\eqref{not:est-nu},
with probability at least $1-\delta$,
\begin{align*}
\sum_{t=1}^T \bc_t & \leq T \bB \\
\mbox{and} \qquad
R_T & \leq \bigl( 2 \alpha_{T,\delta/4} + \beta_{T,\delta/4} + 2 \Xi_{T,\delta/4} \bigr)
\left( 1 + \frac{\opt(r,\bc,\bB) - \opt(r,\bc,\bB-\bm)}{\min \bm} \right)\,.
\end{align*}
\end{theorem}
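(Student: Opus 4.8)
The plan is to adapt, almost line for line, the proofs of Proposition~\ref{prop:soft-known-nu} and Theorem~\ref{th:hard-known-nu}, working on the same intersection of four events $\cEHazf \cap \cEHazs \cap \cEtvd \cap \cEbeta$, each of probability at least $1-\delta/4$, so that the chained inequalities~\eqref{eq:HAz1}, \eqref{eq:HAz2}, the optimism~\eqref{eq:csqclip}, and the transfer bound of Notation~\ref{not:est-nu} all hold simultaneously with probability at least $1-\delta$. For the cost side I would reproduce the exact chain of Proposition~\ref{prop:soft-known-nu}, ending at
\[
\sum_{t=1}^T \bc_t \leq \bigl( 2\alpha_{T,\delta/4} + \beta_{T,\delta/4} + \Xi_{T,\delta/4} \bigr)\bone + \sum_{t=1}^T \E_{\bX \sim \hnu_{\delta/4,t-1}}\Biggl[ \sum_{a\in\cA}\chbc_{\delta/4,t-1}(\bX,a)\,\pi_{t,a}(\bX)\Biggr],
\]
and then bound each empirical expectation by $\bB + b_t\bone$ via the definition of $\bpi_t$. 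Summing the signed slacks $b_t = -\oD'_{T,\delta/4} + \xi_{t-1,\delta/4}$ gives $\sum_t b_t = -T\oD'_{T,\delta/4} + \Xi_{T,\delta/4}$; since by definition $T\oD'_{T,\delta/4} = 2\alpha_{T,\delta/4} + \beta_{T,\delta/4} + 2\Xi_{T,\delta/4}$, all positive deviation terms cancel exactly and I obtain $\sum_{t\leq T}\bc_t \leq T\bB$.

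The genuinely new ingredient, replacing the null-cost mixing policy of Theorem~\ref{th:hard-known-nu}, is a mixing policy built from the guaranteed $(\bB-\bsf)$-feasible policy. I would take $\bpif$ to be a $(\bB-\bsf)$-feasible policy earning reward within $e>0$ of $\opt(r,\bc,\bB-\bsf)$, set
\[
\ol{\bpi}_t = (1-w)\,\bpi^\star_{\bB} + w\,\bpif, \qquad w = \frac{\oD'_{T,\delta/4}}{\min\bsf}\,,
\]
and note that $w\leq 1$ precisely because the hypothesis $\bB-\bsf \leq \bB-\oD'_{T,\delta/4}\bone$ reads $\min\bsf \geq \oD'_{T,\delta/4}$. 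Its true expected cost is at most $(1-w)\bB + w(\bB-\bsf) = \bB - w\bsf \leq \bB - \oD'_{T,\delta/4}\bone$, so by~\eqref{eq:csqclip} and Notation~\ref{not:est-nu} its empirical cost is at most $\bB + b_t\bone$; hence $\ol{\bpi}_t$ witnesses feasibility of the Box~D program at each round. Since $\bpi_t$ maximizes the empirical reward among feasible policies, I lower bound the empirical reward of $\bpi_t$ by that of $\ol{\bpi}_t$, pass to the true reward, and use that $\ol{\bpi}_t$ earns $\opt(r,\bc,\bB) - w\bigl(\opt(r,\bc,\bB)-\opt(r,\bc,\bB-\bsf)\bigr) - we$ in expectation. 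Summing the reward analogue of the cost chain and using $Tw = \bigl(2\alpha_{T,\delta/4}+\beta_{T,\delta/4}+2\Xi_{T,\delta/4}\bigr)/\min\bsf$, then letting $e\to 0$, yields exactly the claimed regret bound (existence of $\bpi^\star_\bB$ itself is unproblematic, since $\bB-\bsf<\bB$ provides the strict feasibility needed for~\eqref{eq:lambdastar}).

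The main obstacle is the construction and correct tuning of the mixing weight $w$: it must be large enough that the extra cost slack $w\bsf$ dominates the negative margin $\oD'_{T,\delta/4}$ required for hard-constraint satisfaction—this is what places $\min\bsf$ in the denominator of the regret bound—yet the reward sacrificed, $Tw\bigl(\opt(r,\bc,\bB)-\opt(r,\bc,\bB-\bsf)\bigr)$, must stay of the target order. The exact cancellation in the cost bound and the clean factor $1 + \bigl(\opt(r,\bc,\bB)-\opt(r,\bc,\bB-\bsf)\bigr)/\min\bsf$ both hinge on matching the $2\Xi_{T,\delta/4}$ in $\oD'_{T,\delta/4}$ to the two separate $\Xi_{T,\delta/4}$ contributions—one from the Hoeffding/transfer chains, one from the $+\xi_{t-1,\delta/4}$ part of the slacks. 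Verifying this bookkeeping, together with the vanishing-$e$ argument for $\bpif$, is the only delicate point; everything else is a routine re-run of the two preceding proofs.
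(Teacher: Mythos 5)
Your proposal is correct and follows essentially the same route as the paper's own proof: the same four-event intersection and cost chain with the exact cancellation coming from the $2\Xi_{T,\delta/4}$ term in $\oD'_{T,\delta/4}$, and the same mixing policy $\ol{\bpi}_t = (1-w)\bpi^\star_{\bB} + w\bpif$ with $w = \oD'_{T,\delta/4}/\min\bsf$ (the paper writes $w_t = \min\bigl\{\oD'_{T,\delta/4}/\min\bsf,\,1\bigr\}$, but the cap at $1$ is non-binding under the hypothesis $\min\bsf \geq \oD'_{T,\delta/4}$, exactly as you observe), serving both as feasibility witness for the Box~D program and as the comparator in the regret bookkeeping, with the same quasi-optimal-policy and $e \to 0$ device.
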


\begin{proof}
We rather sketch the differences to the proofs of Theorem~\ref{th:hard-known-nu} and Proposition~\ref{prop:soft-known-nu}.
We denote by $\cEall$ the event of probability at least $1-\delta$ obtained as the intersection of four convenient
events of probability each at least $1-\delta/4$. All inequalities below hold on $\cEall$ but for the sake of brevity, we will
not highlight this fact each time.

We denote by $\bpif$ a (quasi-)optimal static policy among the ones achieving expected costs smaller than $\bB - \bsf$;
it therefore ensures that its expected costs are smaller than $\bB - \bsf$ and achieves an expected reward of $\opt(r,\bc,\bB-\bm) - e$,
possibly up to a small factor $e \geq 0$ which we let vanish.
We note that for each round $t \geq 1$,
\begin{align*}
\E_{\bX \sim \hnu_{\delta,t-1}} \Biggl[ \sum_{a \in \cA} \chbc_{\delta,t-1}(\bX,a) \, \pif_a(\bX) \Biggr]
& \leq \xi_{t-1,\delta/T} \bone + \E_{\bX \sim \nu} \Biggl[ \sum_{a \in \cA} \bc(\bX,a) \, \pif_a(\bX) \Biggr] \\
& \leq \xi_{t-1,\delta/T} \bone + \bB - \bm \bone \leq \bB + b_t \bone\,,
\end{align*}
given the definition $b_t = - \oD'_{T,\delta/4} + \xi_{t-1,\delta/4}$ and the assumption
$\bB - \bsf \leq \bB - \oD'_{T,\delta/4} \bone$.
Thus, on $\cEall$, the strategy $\bpi_t$ is indeed defined at each round $t \geq 1$ by the optimization
problem stated in Step~2 of Box~D (and not in an arbitrary manner).

Cost-wise, we thus have \\[-.75cm]
\begin{align*}
\sum_{t=1}^T \bc_t
& \leq \bigl( 2 \alpha_{T,\delta/4} + \beta_{T,\delta/4} + \Xi_{T,\delta/4} \bigr) \bone + \sum_{t=1}^T
\overbrace{\E_{\bX \sim \hnu_{\delta/4,t-1}} \!\left[ \sum_{a \in \cA} \chbc_{\delta/4,t-1}(\bX,a)\,\pi_{t,a}(\bX)
\right]}^{\leq \bB + b_t \bone} \\
& \leq T \bB + \bigl( 2 \alpha_{T,\delta/4} + \beta_{T,\delta/4} + \Xi_{T,\delta/4} \bigr) \bone
- T \oD'_{T,\delta/4} + \Xi_{T,\delta/4} = T \bB\,,
\end{align*}
where the final equality follows from the definition of $\oD'_{T,\delta/4}$, which involves a
$2\Xi_{T,\delta/4}$ term in its numerator.

Reward-wise, we introduce for each $t \geq 1$,
\[
\ol{\bpi}_t = (1-w_t) \bpi^\star_{\bB} + w_t \bpif\,, \qquad \mbox{where} \quad
w_t = \min \!\left\{ \frac{\oD'_{T,\delta/4}}{\min \bm}, \,\, 1 \right\},
\]
whose empirical expected cost at round $t \geq 1$ is smaller than
\begin{align*}
\E_{\bX \sim \hnu_{\delta/4,t-1}} \!\left[ \sum_{a \in \cA} \chbc_{\delta/4,t-1}(\bX,a)\,\ol{\pi}_{t,a}(\bX) \right]
& \leq \xi_{t-1,\delta/4} \bone + \E_{\bX \sim \nu} \!\left[ \sum_{a \in \cA} \bc(\bX,a)\,\ol{\pi}_{t,a}(\bX) \right] \\
& \leq \xi_{t-1,\delta/4} \bone + (1-w_t) \bB + w_t (\bB - \bm) \\
& = \bB + \xi_{t-1,\delta/4} \bone - w_t \bm\,.
\end{align*}
By using $\bm \geq (\min \bm) \bone$ and
thanks to the assumption $\bB - \bsf \leq \bB - \oD'_{T,\delta/4} \bone$,
which can be equivalently formulated as $\min\bm \geq \oD'_{T,\delta/4}$, we may continue this
series of inequalities by
  \begin{align*}
    \bB + \xi_{t-1,\delta/4} \bone - w_t \bm
    &\leq
    \bB - \min \!\left\{ {\oD'_{T,\delta/4} - \xi_{t-1,\delta/4}}, \,\, {\min \bm - \xi_{t-1,\delta/4}} \right\} \bone\\
    &\leq
    \bB - \min \!\left\{ {\oD'_{T,\delta/4} - \xi_{t-1,\delta/4}}, \,\, {\oD'_{T,\delta/4} - \xi_{t-1,\delta/4}} \right\} \bone
    =
    \bB + b_t\bone\,,
  \end{align*}
meaning that $\ol{\bpi}_t$ is a policy satisfying the constraints stated in Step~2 of Box~D on round $t \geq 1$.
In particular, by definition of $\bpi_t$ as a maximizer,
\begin{align}
\nonumber
& \E_{\bX \sim \hnu_{\delta/4,t-1}} \Biggl[ \sum_{a \in \cA} \chr_{\delta/4,t-1}(\bX,a) \,\pi_{t,a}(\bX) \Biggr] \\
\nonumber
& \qquad \geq \E_{\bX \sim \hnu_{\delta/4,t-1}} \!\left[ \sum_{a \in \cA} \chr_{\delta/4,t-1}(\bX,a)\,\ol{\pi}_{t,a}(\bX) \right] \\
\nonumber
& \qquad \geq - \xi_{t-1,\delta/4} + \E_{\bX \sim \nu} \!\left[ \sum_{a \in \cA} r(\bX,a)\,\ol{\pi}_{t,a}(\bX) \right] \\
\nonumber
& \qquad \geq - \xi_{t-1,\delta/4} + \bigl( (1-w_t) \opt(r,\bc\,\bB) + w_t \opt(r,\bc\,\bB-\bm) \bigr) \\
\label{eq:odprime}
& \qquad = \opt(r,\bc\,\bB) - \min \!\left\{ \frac{\oD'_{T,\delta/4}}{\min \bm}, \,\, 1 \right\} \bigl( \opt(r,\bc\,\bB) - \opt(r,\bc\,\bB-\bm) \bigr)
- \xi_{t-1,\delta/4}\,.
\end{align}
Finally, by combining~\eqref{eq:sumrt} and~\eqref{eq:odprime},
\begin{align*}
\sum_{t=1}^T r_t & \geq - \bigl( \overbrace{2 \alpha_{T,\delta/4} + \beta_{T,\delta/4} + \Xi_{T,\delta/4}}^{= T \oD'_{T,\delta/4} - \Xi_{T,\delta/4}}
\bigr) \bone + \sum_{t=1}^T
\E_{\bX \sim \hnu_{\delta/4,t-1}} \!\left[ \sum_{a \in \cA} \chr_{\delta/4,t-1}(\bX,a)\,\pi_{t,a}(\bX) \right] \\
& \geq T \, \opt(r,\bc\,\bB) + \Xi_{T,\delta/4} - T \oD'_{T,\delta/4} \left( 1 + \frac{\opt(r,\bc\,\bB) - \opt(r,\bc\,\bB-\bm)}{\min \bm} \right)
- \Xi_{T,\delta/4},
\end{align*}
as claimed.
\end{proof}

\section{Numerical experiments: full description}
\label{sec:num}

\newcommand{\age}{\mbox{\tiny age}}
\newcommand{\pov}{\mbox{\tiny pov}}
\newcommand{\prox}{\mbox{\tiny prox}}
\newcommand{\group}{\mbox{\tiny group}}
\newcommand{\aride}{a_{\mbox{\tiny ride}}}
\newcommand{\ride}{{\mbox{\tiny ride}}}
\newcommand{\avoucher}{a_{\mbox{\tiny voucher}}}
\newcommand{\voucher}{{\mbox{\tiny voucher}}}
\newcommand{\acontrol}{a_{\mbox{\tiny control}}}
\newcommand{\argmin}{\mathop{\mathrm{arg\,min}}}
\newcommand{\id}{\mathrm{I}}
\graphicspath{ {./pics/} }

This appendix reports numerical simulations performed on simulated data
with the motivating example described in~\citet{Stanford}
and alluded at in Section~\ref{sec:motiv-ex}. These simulations are for the sake of illustration only.

A brief summary of the applicative background in AI for justice is the following.
The learner wants to maximize the total number of appearances to court for people of concern. To achieve this goal,
the learner is able to provide, or not, some transportation assistance: rideshare assistance (the highest level of help),
or a transit voucher (a more modest level of help). There are of course budget limits on these assistance means,
and the learner also wants to control how (un)fair the assistance policy is, in terms of subgroups of the population,
while maximizing the total number of appearances.
Some subgroups take a better advantage of assistance to appear in court, thus, without the fairness control,
all assistance would go to these groups. The fairness costs described in Section~\ref{sec:motiv-ex} force
the learner to perform some tradeoff between spending all assistance on most reactive subgroups and
spending it equally among subgroups.

\paragraph{Outline of this appendix.}
We first recall the experimental setting of~\citet{Stanford}---in particular,
how contexts, rewards, and costs are generated in their simulations (we cannot replicate
their study on real data, which is not accessible). We then specify the strategies we implemented and how we tuned
them---this includes describing the estimation procedure discussed in Section~\ref{sec:param-model}.
We finally report the performance observed, in terms of (average) rewards and costs.

\subsection{The experimental setting of~\citet{Stanford}}
\label{sec:data}

We follow strictly the experimental setting of~\citet{Stanford}
as provided in the public repository
\url{https://github.com/stanford-policylab/learning-to-be-fair}.
This experimental setting, as reverse-engineered from the code, deals with purely simulated data
and seems actually inconsistent with the description made in
\citet[Section~5.4 and Appendix~E]{Stanford}, which would anyway rely on proprietary
data that could not be made public.

\paragraph{Context generation.}
Each individual is described by four variables $\bx$: age, proximity, poverty, and group,
which are to be read in $\bx$ in its components, referred to as
$x_{\age}, \, x_{\prox}, \, x_{\pov}$ and $\gr(\bx) = x_{\group}$.
The first three variables are assumed to be normalized
and are simulated independently, according to uniform distributions on $[0,1]$.
Groups are also simulated independently of these three variables:
two groups are assumed, with respective probabilities of $1/2$.
This defines the context distribution~$\nu$.

As we describe now, assistance has stronger impact on group $0$ than in group $1$.

\paragraph{Cost generation.}
We recall that three actions are available:
offering some rideshare assistance
(action $\aride$), providing a transportation voucher (action $\avoucher$),
or providing no help (which is a control situation: action $\acontrol$).
The associated spending costs are deterministic and do not depend on $\bx$, and there
are two separate budgets for rideshares and vouchers. More precisely, at round $t$, upon taking action $a_t$,
the following deterministic spending costs are suffered:
\[
c_{\ride}(\bx_t,a_t) = \ind{a_t = \aride}
\qquad \mbox{and} \qquad
c_{\voucher}(\bx_t,a_t) = \ind{a_t = \avoucher}\,,
\]
where the corresponding average budgets are $B_{\ride} = 0.05$ and $B_{\voucher} = 0.20$.

We measure the extent of unfair allocation of spendings among groups with the following
eight fairness costs: a first series of four fairness costs is given by
\begin{align*}
2 \, \ind{a_t = \aride} \ind{\gr(\bx_t) = 0} - \ind{a_t = \aride}\,, \qquad
& 2 \, \ind{a_t = \aride} \ind{\gr(\bx_t) = 1} - \ind{a_t = \aride}\,, \\
2 \, \ind{a_t = \avoucher} \ind{\gr(\bx_t) = 0} - \ind{a_t = \avoucher}\,, \qquad
& 2 \, \ind{a_t = \avoucher} \ind{\gr(\bx_t) = 1} - \ind{a_t = \avoucher}\,,
\end{align*}
and the second series is given by the opposite values of these costs.
We denote by $\tau$ the corresponding average cost constraints $\tau$, and set $\tau = 10^{-7}$
or $\tau = 0.025$ in our experiments.

All in all, the global (spending and fairness) vector costs $\bc_t$ takes deterministic values in $\R^{10}$.
The vector cost function $\bc$ is fully known to the learner, and no estimation is needed.

\paragraph{Reward generation.}
The rewards are binary: $r_t = 1$ if the $t$--individual appeared in court, and $r_t = 0$
otherwise. That is, the expected reward equals the probability of appearance.
A logistic regression model is assumed: denoting by $\Phi(u) = 1/(1+\e^u)$, we assume
\begin{align*}
r(\bx,\acontrol) & = \Phi( - x_{\age} ), \\
r(\bx,\avoucher) & = \Phi( - x_{\age} + 2 x_{\prox} ) \ind{\gr(\bx) = 0} + \Phi( - x_{\age} + x_{\prox} ) \ind{\gr(\bx) = 1}\,, \\
r(\bx,\aride) & = \Phi( - x_{\age} + 4 x_{\pov} ) \ind{\gr(\bx) = 0} + \Phi( - x_{\age} + 2 x_{\pov} ) \ind{\gr(\bx) = 1}\,.
\end{align*}
We may write these six equalities in a compact format as:
\begin{align*}
r(\bx,a) = \ & \Phi\bigl( \varphi(\bx,a)^{\transp} \mu_\star \bigr) \vspace{-1cm} \\
& \mbox{where} \qquad
\bphi(\bx,a) = \left[
\begin{array}{l}
x_{\age} \\
x_{\prox} \, \ind{a = \avoucher}  \\
x_{\prox} \, \ind{a = \avoucher} \, \ind{\gr(\bx) = 0} \\
x_{\pov} \, \ind{a = \aride}  \\
x_{\pov} \, \ind{a = \aride} \, \ind{\gr(\bx) = 0}
\end{array}
\right]
\quad \mbox{and} \quad
\mu_\star =
\left[
\begin{array}{c}
-1 \\
1 \\
1 \\
2 \\
2
\end{array}
\right].
\end{align*}
The learner ignores the coefficients $\mu_\star$ of this structure and only knows that expected rewards
are of the form $\Phi\bigl( \varphi(\bx,a)^{\transp} \btheta \bigr)$
for some parameter $\btheta \in \R^5$.
The learner will estimate the reward function $r$ by estimating $\mu_\star$.

\paragraph{Reward estimation.}
We deal with a logistic model and
follow the methodology described in~\citet{CBwK-LP-2022}; see
Modeling~2 in Section~\ref{sec:param-model}.
In particular, the parameters $\mu_\star$ are estimated, after each round $t \geq 1$,
thanks to the maximum likelihood estimator
\[
\hat{\bmu}_{t} \in \argmax_{\bmu \in \R^5}
\sum_{s=1}^{t}  r_{s} \Phi \bigl( \bphi(a_s, \bx_s)^{\transp} \bmu \bigr)
+ (1- r_{s}) \ln \Bigl( 1 - \Phi \bigl( \bphi(a_s, \bx_s)^{\transp} \bmu \bigr) \Bigr)
- \frac{\lambda^{\text{logistic}}}{2} \Arrowvert \bmu \Arrowvert^2\,,
\]
where $\lambda^{\text{logistic}} \geq 0$ is a regularization factor.
We define as $\hr_t(\bx,a) = \Phi \bigl( \bphi(\bx,a)^{\transp} \hat{\bmu}_{t} \bigr)$ the estimated expected rewards,
and the associated uniform estimation errors (see Assumption~\ref{ass:est}) are of the form
\begin{align*}
& \epsilon_t(\bx,a,\delta) = C_\delta \bigl( 1 + \ln t \bigr) \sqrt{ \bphi(a, \bx)^{\transp} V_{t}^{-1} \bphi(a, \bx)} \\
\mbox{where} \qquad &
V_{t} = \sum_{s=1}^{t} \bphi(a_s, \bx_s) \, \bphi(a_s, \bx_s)^{\transp} + \lambda^{\text{logistic}} \id_5\,,
\end{align*}
where $\id_5$ is the $5 \times 5$ identity matrix.

In our simulations, we tested a range of values and picked (in hindsight) the well-performing values
$C_\delta = 0.025$ and $\lambda^{\text{logistic}} = 0$.

\subsection{Strategies implemented in this numerical study}
\label{sec:strat-impl}

We run all these strategies not with the total-cost constraints
\[
\bB = (0.05, \, 0.2, \, \tau, \, \tau, \, \tau, \, \tau)\,, \qquad \mbox{where} \qquad \tau \in \{ 10^{-7}, \, 0.025 \}\,,
\]
but take a margin $b = 0.005$ and use $\bB' = \bB - (b,b,0,0,0,0)$ instead. This is a slightly different way of taking some
margin on the average cost constraint: we do so because we are not aiming for a strict respect of the fairness constraints but rather want to report the level of violation on it. Again, we tried a range of values for $b$ (between $0.001$ to $0.01$) and this value of $0.005$ led to a
good balance between (lack of) total-cost constraint violations and rewards.

\paragraph{Performance of optimal static policies.}
We use $\opt(r,\bc,\bB)$ as the benchmark in the definition of regret; our methodology also reveals that
$\opt(r,\bc,\bB')$ is another benchmark, see the discussion in Section~\ref{sec:opti-lim}. We report both values
on our graphs. To compute them, we proceed as follows, e.g., for $\bB$.
As computing directly the minimum stated in~\eqref{eq:lambdastar} is difficult, even when
fully knowing the distribution~$\nu$, we compute $100$ estimates
\[
{\opt}^{(j)}(r,\bc,\bB)\,, \qquad \mbox{which we average out into} \qquad
\hat{\opt}(r,\bc,\bB) = \frac{1}{100} \sum_{j=1}^{100} {\opt}^{(j)}(r,\bc,\bB)\,.
\]
For each $j$, we sample $S = 10,\!000$ contexts from the distribution $\nu$,
and denote by $\hat{\nu}^{(j)}_S$ the associated empirical distribution; we then solve numerically
the problem~\eqref{eq:lambdastar} with $\nu$ replaced by this empirical distribution:
\[
{\opt}^{(j)}(r,\bc,\bB)  =  \min_{\blambda \geq 0} \,\,
\E_{\bX \sim \hat{\nu}^{(j)}_S} \biggl[ \max_{a \in \cA} \Bigl\{ r(\bX,a)  -
\bigl\langle \bc(\bX,a) - \bB, \,\blambda \bigr\rangle \Bigr\} \biggr]\,.
\]

\paragraph{Mixed policy knowing $\blambda^\star_{\bB'}$ but estimating $r$.}
For the sake of illustration, we report the performance of a policy
that would have oracle knowledge of $\blambda^\star_{\bB'}$, which is a finite-dimensional parameter that
summarizes $\nu$, but would ignore $r$, i.e., the underlying logistic model.
That is, this mixed policy would pick, at each round,
\[
\max_{a \in \cA} \Bigl\{ \chr_{\delta,t-1}(\bx_t,a) -
\bigl\langle \chbc_{\delta,t-1}(\bx_t,a) - \bB', \, \blambda^\star_{\bB'} \bigr\rangle \Bigl\} \,,
\]
(and would omit the $\blambda$ update in Box~B).

To compute (an approximation of) $\blambda^\star_{\bB'}$, we proceed $100$ times as described below to compute
estimates
\[
\blambda^{\star,(j)}_{\bB'}\,, \qquad \mbox{which we average out into} \qquad
\hat{\blambda}^\star_{\bB'} = \frac{1}{100} \sum_{j=1}^{100}  \blambda^{\star,(j)}_{\bB'}\,,
\]
where we noted that the numerical values obtained for the $\blambda^{\star,(j)}_{\bB'}$ are rather similar.
With the notation above, for each $j$, we sample $S = 10,\!000$ contexts from the distribution $\nu$,
and solve
\[
\blambda^{\star,(j)}_{\bB'} \in \argmin_{\blambda \geq \bzero}
\E_{\bX \sim \hat{\nu}^{(j)}_S} \biggl[ \max_{a \in \cA} \Bigl\{ r(\bX,a)  -
\bigl\langle \bc(\bX,a) - \bB, \,\blambda \bigr\rangle \Bigr\} \biggr]\,.
\]
(These estimations are independent from the estimations used to compute the $\opt$ values,
i.e., we use different seeds.)

\paragraph{PGD $\gamma$.}
We refer to the Box~B strategy as PGD $\gamma$, for which we report the performance
for values $\gamma \in \{0.01, \, 0.02, \, 0.04, \, 0.05, \, 0.1 \}$.

We also implemented the Box~C strategy, with an alternative value of
$M_{T,\delta,k}$, given that the one exhibited based on the theoretical analysis
was too conservative, so that no regime break occurred. We resort to a value
of the same order of magnitude:
\[
M'_{T,\delta,k} = c \, d \sqrt{T \ln \bigl( T(k+2) \bigr)}\,,
\]
for a numerical constant $c$ that we set to 0.01 in our simulations.
We call this strategy PGD Adaptive in Figure~\ref{fig:num-res-no-optimal-lmd}
and Table~\ref{tab:perf-T}.

\subsection{Outcomes of the simulations}
\label{sec:outcomes}

We take $T=10,\!000$ individuals (instead of $T=1,\!000$ as in the code by \citealp{Stanford})
and set initial $50$ rounds as a warm start for strategies (mostly because of the
logistic estimation).
We were limited by the computational power (see Appendix~\ref{sec:comput-time})
and could only perform $N = 100$ simulations for each (instance of each) strategy.
We report averages (strong lines in the graphs) as well as
$\pm 2$ times standard errors (shaded areas in the graphs or values in parentheses in the table).

\paragraph{Graphs.}
In the first line of graphs in Figure~\ref{fig:num-res-no-optimal-lmd}, we report
the average rewards (over the $N$ runs) of the strategy under scrutiny as a function of the sample size.
More precisely, with obvious notation, we plot
\[
t \mapsto \frac{1}{N} \sum_{n=1}^N \left( \frac{1}{t} \sum_{s=1}^t r_s^{(n)} \right)\,.
\]
We report the values of $\opt(r,\bc,\bB)$ and $\opt(r,\bc,\bB')$ as dashed horizontal lines.

In the second and third lines of graphs in Figure~\ref{fig:num-res-no-optimal-lmd},
we report the average costs suffered; again, with obvious notation, we plot
\[
t \mapsto \frac{1}{N} \sum_{n=1}^N \left( \frac{1}{t} \sum_{s=1}^t \ind{a^{(n)}_s = \aride} \right)
\qquad \mbox{and} \qquad
t \mapsto \frac{1}{N} \sum_{n=1}^N \left(\frac{1}{t} \sum_{s=1}^t \ind{a^{(n)}_s = \avoucher} \right).
\]
We include the average budget constraints $B_{\ride} = 0.05$ and $B_{\voucher} = 0.20$ as dashed horizontal lines.

Finally, the fourth line of the graphs in Figure~\ref{fig:num-res-no-optimal-lmd}
reports the fairness costs; we average their absolute values and
draw, again with obvious notation,
\[
t \mapsto \frac{1}{N} \sum_{n=1}^N \left(
\frac{1}{4} \sum_{a \in \{\aride, \avoucher\}} \sum_{g \in \{0,1\}}
\left| \frac{1}{t} \sum_{s=1}^t \Bigr( 2 \ind{a_s^{(n)} = a} \ind{\gr(\bx^{(n)}_s) = g} - \ind{a^{(n)}_s = a} \Bigr) \right| \right).
\]
We include the fairness tolerance $\tau$ as a dashed horizontal line.

\paragraph{Table.}
We also report the performance of the strategies at the final round $T$ in following table, with $\pm 2$ times standard errors
in parentheses.
We note that the performance of the mixed policy is poor, in particular in terms of fairness costs, which is why we omitted it on the graphs, to keep them readable.

\begin{table}[t]
\caption{\label{tab:perf-T} Performance of the strategies as computed at the final round $T$.}
\begin{tabular}{@{}ccccc@{}}
\toprule
               & Average rewards & Rideshare costs & Voucher costs & Fairness costs \\ \midrule
               \multicolumn{5}{c}{Fairness tolerance $\tau = 10^{-7}$}       \\ \midrule
$\opt(r,\bc,\bB)$         & 0.4688 (0.0002)      &         &             &              \\
$\opt(r,\bc,\bB')$         & 0.4648 (0.0002)     &        &         &            \\
PGD  $\gamma=0.01$  & 0.4651 (0.0002)      & 0.0519 (<0.0001)        & 0.1984 (<0.0001)             & 0.0006 (<0.0001)              \\
PGD  $\gamma=0.02$  & 0.4613 (0.0002)      & 0.0492 (<0.0001)        & 0.1967 (0.0004)             & 0.0004 (<0.0001)              \\
PGD  $\gamma=0.04$  & 0.4571 (0.0002)      & 0.0479 (<0.0001)        & 0.1962 (0.0002)             & 0.0004 (<0.0001)              \\
PGD $\gamma=0.05$ & 0.4554 (0.0002)     & 0.0476 (<0.0001)         & 0.1961 (0.0002)           & 0.0003 (<0.0001)              \\
PGD  $\gamma=0.1$  & 0.4502 (0.0002)      & 0.0471 (<0.0001)         & 0.196 (0.0002)             & 0.0003 (<0.0001)              \\
PGD Adaptive  & 0.4581 (0.0002)      & 0.0498 (0.0002)         & 0.1971 (0.0002)             & 0.0005 (<0.0001)              \\
Mixed Policy   & 0.4402 (0.0056)      & 0.0499 (0.0058)        & 0.1056 (0.017)            & 0.0411 (0.0052)             \\ \midrule
               \multicolumn{5}{c}{Fairness tolerance $\tau =0.025$}       \\ \midrule
$\opt(r,\bc,\bB)$         & 0.4731 (0.0002)       &         &              &              \\
$\opt(r,\bc,\bB')$       & 0.4691 (0.0002)     &         &              &              \\
PGD  $\gamma=0.01$  & 0.4698 (0.0002)      & 0.0518 (0.0002)        & 0.1983 (<0.0001)             & 0.0246 (0.0002)              \\
PGD  $\gamma=0.02$  & 0.4663 (0.0002)      & 0.0492 (<0.0001)        & 0.1966 (0.0006)             & 0.0242 (0.0002)              \\
PGD  $\gamma=0.04$   & 0.4621 (0.0004)     & 0.0478 (<0.0001)        & 0.1958 (0.001)             & 0.0223 (0.0004)              \\
PGD $\gamma=0.05$ & 0.4604 (0.0004)      & 0.0476 (<0.0001)         & 0.1955 (0.0014)            & 0.0208 (0.0004)             \\
PGD  $\gamma=0.1$  & 0.4538 (0.0002)     & 0.0471 (<0.0001)         & 0.1958 (0.0004)             & 0.0128 (0.0004)              \\
PGD Adaptive  & 0.4634 (0.0002)      & 0.0499 (0.0002)         & 0.1972 (0.0002)             & 0.0228 (0.0002)              \\
Mixed Policy   & 0.4466 (0.0054)      & 0.0566 (0.0052)       & 0.1053 (0.0164)             & 0.0473 (0.0054)             \\ \bottomrule
\end{tabular}
\end{table}

\begin{figure}[p]
\vspace{-1cm}

\hspace{-2.5cm} \includegraphics[width=1.3 \textwidth]{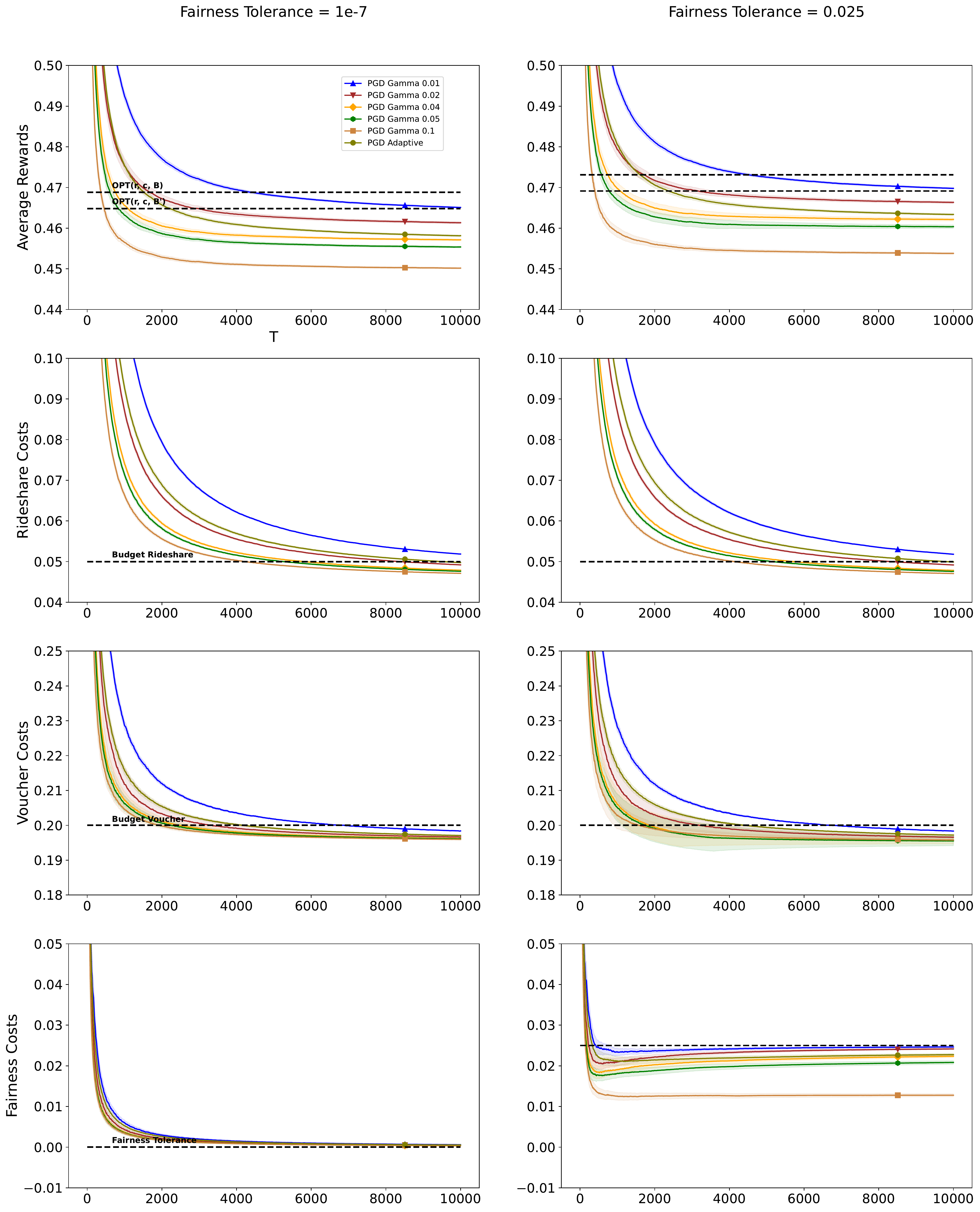}
\caption{\label{fig:num-res-no-optimal-lmd} Performance of the PGD strategies. \ \\ \ \\}
\end{figure}

\paragraph{Comments.}
When $\gamma$ is well set, i.e., large enough (see the bound of Lemma~\ref{lm:C}),
the PGD strategies control spending costs thanks to targeting $\bB'$ instead of $\bB$.
We observe that for $\gamma = 0.01$, the PGD strategy does not control the rideshare costs, but
for all larger values of $\gamma$, the associated strategies control the three costs considered.
The average rewards achieved are coherent with the average spendings: the smaller the average spendings, the smaller the average rewards.
There is some lag: the strategies tuned with $\gamma$ parameters in $\{0.04, \, 0.05, \, 0.1 \}$ could use costly actions more.
We also observe that fairness costs remain under the target limits.

The mixed policy does not success in controlling the costs, in particular, the fairness costs. While the optimal dual variables $\blambda^\star_{\bB'}$
summarize the distribution~$\nu$, it seems that $\blambda^\star_{\bB'}$ has to be used with care: only with the exact
values $r$ and $\bc$, and not with estimates. On the contrary, the PGD strategies of Box~B are more stable, as
the dual variables are learned based also on the estimated reward and cost functions.

Finally, we note that in our experiments, the regimes in PGD Adaptive strategy typically covered range from $k=0$ (corresponding to $\gamma_0 = 1/\sqrt{T} = 0.01$) to $k=2$ (corresponding to $\gamma_2 = 2^2/\sqrt{T} = 0.04$). The PGD Adaptive strategy performs well and is (only) outperformed by the PDG strategy with a fixed $\gamma = 0.02$ (of course difficult to pick in advance). In particular, the PGD Adaptive strategy controls costs, and does so by switching to larger
step sizes when needed.

\subsection{Computation time and environment}
\label{sec:comput-time}

As requested by the NeurIPS checklist, we provide details on the computation time and environment.
Our experiments were ran on the following hardware environment: no GPU was required, CPU is $3.3$ GHz 8 Cores with total of 16 threads,
and RAM is $32$ GB $4800$ MHz DDR5. We ran $100$ simulations with $10$ different seeds on parallel each time.
In the setting and for the data described above, the average time spend on each algorithm for a single run was
inferior to $10$ minutes. 

\end{document}